\documentclass[11pt,letter]{article}
\usepackage[utf8]{inputenc}

\usepackage{amssymb}
\usepackage{amsmath}
\usepackage{amsthm}
\usepackage{amsfonts}
\usepackage{mathrsfs}
\usepackage[langfont=typewriter,funcfont=slant]{complexity}
\usepackage[left=1in,bottom=1in,top=1in,right=1in]{geometry}
\usepackage[colorlinks=true]{hyperref}
\hypersetup{
     colorlinks = true,
     linkcolor = teal,
     anchorcolor = teal,
     citecolor = teal,
     filecolor = teal,
     urlcolor = teal
     }

\usepackage{framed}
\usepackage{soul}
\usepackage{thmtools,thm-restate}
\usepackage{csquotes}
\usepackage[nameinlink, noabbrev, capitalize]{cleveref}
\usepackage{booktabs}       %
\usepackage{nicefrac}       %
\usepackage{microtype} 

\usepackage{caption}
\usepackage{framed}

\usepackage[disable]{todonotes} %
\presetkeys{todonotes}{inline}{}

\makeatletter
\if@todonotes@disabled

\else

\fi
\makeatother

\usepackage{commath} %

\usepackage{framed}
\usepackage[ruled,noline]{algorithm2e}

\theoremstyle{definition}
\newtheorem{definition}{Definition}[section]

\theoremstyle{remark}

\theoremstyle{plain}
\newtheorem{theorem}{Theorem}[section]

\theoremstyle{plain}

\theoremstyle{plain}

\theoremstyle{plain}
\newtheorem{lemma}[theorem]{Lemma}
\newtheorem{claim}[theorem]{Claim}

\DeclareMathAlphabet\mathbfcal{OMS}{cmsy}{b}{n}
\usepackage{amsbsy}
\DeclareMathAlphabet{\mathbfscr}{OMS}{mdugm}{b}{n}

\newcommand{\br}{\mathbb{R}}

\newcommand{\ip}[2]{\left\langle #1, #2 \right\rangle}

\newcommand{\F}{\mathcal{F}}
\newcommand{\X}{\mathcal{X}}

 \renewcommand{\A}{\mathcal{A}}
\renewcommand{\D}{\mathcal{D}}

\renewcommand{\H}{\mathcal{H}}
\newcommand{\U}{\mathcal{U}}
\renewcommand{\G}{\mathcal{G}}

\newcommand{\Secref}[1]{\hyperref[#1]{Section \ref*{#1}}}
\newcommand{\Appref}[1]{\hyperref[#1]{Appendix \ref*{#1}}}

\newcommand{\vc}{\mathrm{VCDim}}
\newcommand{\ldim}{\mathrm{LDim}}

\usepackage{multirow}
\usepackage{array}

\crefname{equation}{}{}
\crefrangeformat{equation}{(#3#1#4) --~(#5#2#6)}
\crefmultiformat{equation}{(#2#1#3)}{, (#2#1#3)}{, (#2#1#3)}{, (#2#1#3)}
\crefname{lemma}{Lemma}{Lemmas}
\crefname{section}{Section}{Sections}
\crefname{subsubsubsection}{Section}{Sections}
\crefname{remark}{Remark}{Remarks}
\crefname{figure}{Figure}{Figures}
\crefname{table}{Table}{Tables}
\Crefname{lemma}{Lemma}{Lemmas}
\crefname{theorem}{Theorem}{Theorems}
\Crefname{theorem}{Theorem}{Theorems}
\crefname{proposition}{Proposition}{Propositions}
\Crefname{proposition}{Proposition}{Propositions}

\DeclareMathOperator*{\Ex}{\mathbb{E}}

\newcommand{\nadist}{\mathbfcal{D}}
\newcommand{\adist}{{\pmb{\mathscr{D}}}}

\newcommand{\vcd}{\mathrm{VCDim}\xspace}
\newcommand{\lsd}{\mathrm{LDim}\xspace}
\newcommand{\regret}{\textsc{Regret}}

\mathchardef\dash="2D

\usepackage[suppress]{color-edits}

\addauthor{nh}{magenta}
\addauthor{as}{cyan}
\addauthor{tr}{blue}

\title{\textbf{Smoothed Analysis with Adaptive Adversaries}}
\author{Nika Haghtalab \thanks{University of California, Berkeley; Email: nika@berkeley.edu} \and Tim Roughgarden \thanks{Columbia University; Email: tim.roughgarden@gmail.com} \and Abhishek Shetty \thanks{University of California, Berkeley; Email: shetty@berkeley.edu}  }
\begin{document}
    \maketitle
\allowdisplaybreaks

    \begin{abstract}
  We prove novel algorithmic guarantees for several online problems in
the smoothed analysis model.  In this model, at each time step an
adversary chooses an input distribution with density function bounded
above pointwise by~$\tfrac{1}{\sigma}$ times that of the uniform
distribution; nature then samples an input from this distribution.
{Here, $\sigma$ is a parameter that interpolates between the extremes of
worst-case and average case analysis.}
Crucially, our results hold for {\em adaptive} adversaries that can
base their choice of an input distribution on the decisions of the
algorithm and the realizations of the inputs in the previous time
steps.  An adaptive adversary can nontrivially correlate inputs at
different time steps with each other and with the algorithm's current
state; this appears to rule out the standard proof approaches in
smoothed analysis.

This paper presents a general technique for proving smoothed
algorithmic guarantees against adaptive adversaries, in
effect reducing the setting of an adaptive adversary to the much
simpler case of an oblivious adversary (i.e., an adversary that
commits in advance to the entire sequence of input distributions).
We apply this technique to prove strong smoothed guarantees for three
different problems:
\begin{itemize}

\item Online learning: We consider the online prediction problem, where instances are generated from an adaptive sequence of $\sigma$-smooth distributions and the hypothesis class has VC dimension $d$. We bound the regret by $\tilde{O}\big(\sqrt{T d\ln(1/\sigma)} + d\ln(T/\sigma) \big)$ and provide a near-matching lower bound.
Our result shows that under smoothed analysis, learnability against adaptive adversaries is characterized by the finiteness of the VC dimension. This is as opposed to the worst-case analysis, where online learnability is characterized by Littlestone dimension (which is infinite even in the extremely restricted case of one-dimensional threshold functions).
{This is the most well-studied setting to which we apply our techniques. Our results fully answer an open question of~\cite{NIPS2011_4262}.}

\item Online discrepancy minimization: We consider the setting of the online Koml\'os problem, where the input is generated from an adaptive sequence of $\sigma$-smooth and isotropic distributions on the $\ell_2$ unit ball. We bound the $\ell_\infty$ norm of the discrepancy vector by $\tilde{O}\big(\ln^2\big( \frac{nT}{\sigma}\big) \big)$.
{This is as opposed to the worst-case analysis, where the tight discrepancy bound is $\Theta(\sqrt{T/n})$.}
{We show such $\mathrm{polylog}(nT/\sigma)$ discrepancy guarantees are not achievable for non-isotropic $\sigma$-smooth distributions.}

\item Dispersion in online optimization: We consider online optimization with piecewise Lipschitz functions where functions with $\ell$ discontinuities are chosen by a smoothed adaptive adversary and show that the resulting sequence is $\big( {\sigma}/{\sqrt{T\ell}}, \tilde O\big(\sqrt{T\ell} \big)\big)$-dispersed. 
{That is, every ball of radius ${\sigma}/{\sqrt{T\ell}}$ is split by $\tilde O\big(\sqrt{T\ell} \big)$ of the partitions made by these functions.
This result matches the dispersion parameters of \cite{Dispersion} for oblivious smooth adversaries, up to logarithmic factors.
On the other hand, worst-case sequences are  trivially $(0, T)$-dispersed.}

\end{itemize}

    \end{abstract}

    \setcounter{page}{0}
    \thispagestyle{empty}
    \newpage

    \section{Introduction}

\paragraph{Smoothed analysis.}
Kryptonite for worst-case analysis comes in the form of algorithms for
which almost all inputs are ``easy'' and yet rare and pathological
inputs are ``hard.''  
Perhaps the most famous
example is the simplex method for linear programming, which
empirically always runs quickly %
but requires exponential time in the worst case (for all of the common
pivot rules)~\cite{KM72}.  Equally misleading is the worst-case exponential
running time of many popular local search algorithms, such as the
$k$-means clustering algorithm~\cite{arthur:vassilvitskii:2006} and the 2-OPT heuristic for
the traveling salesman problem (TSP)~\cite{SY91}; such behavior is literally
never observed for these algorithms in practice.\footnote{Note that in all
of these examples, the problem of constructing a hard instance is
challenging enough to justify its own research paper!}
Taken literally, worst-case analysis recommends against using the
simplex method to solve linear programs or local search as a heuristic
for the TSP, flatly contradicting decades of real-world experience.
Thus, for some important problems and algorithms, a more nuanced
analysis framework is called for.

But if not worst-case analysis, then what?  Outside of applications
with a stable and well-understood input distribution, average-case
analysis is a far too specific approach.  Spielman and Teng~\cite{ST04_2}
introduced {\em smoothed analysis}, a novel interpolation between
worst- and average-case analysis that is ideally suited for the
analysis of algorithms that almost always perform well.  In its
original formulation, an adversary chooses an arbitrary (worst-case)
input, which is then perturbed slightly by nature. Appealingly, the
framework makes no assumptions about the input other than a small
amount of uncertainty (e.g., due to measurement errors).  

In the more modern and general formulation of smoothed analysis, an
adversary directly chooses an input distribution from a family of
permissible distributions; nature then samples an input from the
adversary's distribution.
An algorithm is evaluated by its worst-case (over
the adversarially chosen input distribution) expected (over the
distribution) performance.
Performance guarantees in this model (e.g., on the expected running
time of an algorithm) are generally parameterized by the
``degree of anti-concentration'' enjoyed by the allowed input
distributions.  The holy grail in smoothed
analysis is to prove guarantees on algorithm performance that,
assuming only a low level of anti-concentration in the possible input
distributions, are far closer to average-case guarantees than
worst-case guarantees.

\paragraph{Online learning, discrepancy minimization, and
  optimization.}  Smoothed analysis makes sense for any numerical
measure of algorithm performance, but to date the vast majority of work 
on the topic
concerns the running time of algorithms for offline problems, as in
the famous examples above.
Our work here focuses
on {\em online} problems---online learning, online discrepancy
minimization, and online optimization---in which the input arrives
incrementally over time and an irrevocable decision must be made at
each time step.  Online algorithms for these problems are
traditionally assessed by their solution quality or regret (with
running time a secondary concern).
In the smoothed analysis version of these problems,
the adversary is forced to choose each piece of the input---a point
from a domain, a vector, or a function---from a distribution with
non-negligible anti-concentration.

The analysis of online algorithms traditionally distinguishes between
{\em oblivious} adversaries who choose the entire input sequence up
front (with knowledge only of the algorithm to be used) and {\em
  adaptive} adversaries that can condition each part of the input on the 
past.  In the worst-case model, this distinction is relevant only
for randomized algorithms, in which case adaptive adversaries choose
each part of the input as a function of the algorithm's previous
decisions.  When the adversary itself is forced to randomize, as in
the smoothed analysis model, the distinction between oblivious and
adaptive adversaries takes on new meaning: while an oblivious
adversary must choose a sequence of input distributions up front, an
adaptive adversary can base its current choice of an input distribution
on the decisions of the algorithm {\em and the realizations of the
  inputs} in previous time steps.
  
{Online learning, discrepancy minimization and optimization play integral roles in a wide range of fields and applications, such as algorithm design \cite{ALO2015,MWU}, game theory \cite{blum_mansour_2007,PLG}, differential privacy \cite{Equivalence,MWHR,MWDiff}, control theory \cite{RegretControl,Control2}, design of medical trials \cite{Covariates}, and robust statistics \cite{RobustNoReg}.
In these cases, adversary's adaptiveness both serves as a natural abstraction for correlations between past and presence 
 and is an essential piece of  the technical analyses (such as algorithmic reductions) that make these methods widely applicable.
}

\paragraph{The challenge of adaptive adversaries.}
A basic question is: For which online problems are adaptive
adversaries fundamentally more powerful than oblivious ones?  In the
smoothed analysis model, there is strong intuition about why a
guarantee against oblivious adversaries might not extend to, or at least
would be significantly harder to prove for, adaptive adversaries.  A
key to any smoothed analysis is, of course, to determine how to
leverage the assumed anti-concentration properties of the permissible
input distributions.  With an oblivious adversary, the input
distributions at each time step are independent of each other and of
the algorithm's current state, and the assumed anti-concentration
can typically be directly and separately exploited at each time
step.  An adaptive adversary, on the other hand, has the power to
correlate inputs at different time steps with each other and with the
algorithm's current state.  This dependence seems to rule out the
standard proof approaches in smoothed analysis.

\paragraph{Our approach: preserving anti-concentration through
a  coupling-based reduction.}
We introduce a general technique for reducing smoothed analysis with
adaptive adversaries to the much simpler setting of oblivious
adversaries.  
{We consider adaptive adversaries that 
at each time step choose an input distribution with density function bounded
above pointwise by~$\tfrac{1}{\sigma}$ times that of the uniform.}
The crux of our approach is a coupling argument, 
namely a joint distribution that connects $T$ random variables $(X_1,
\dots, X_T)$ generated by an adaptive smooth adversary with $kT$ random
variables $Z_i^{(t)}$ for $i\in [k]$ and $t\in [T]$ that are generated
i.i.d.~from the uniform distribution.  A key aspect of this coupling
is a monotonicity property,  that for $k = \tilde{\Theta}(1/\sigma)$, with 
high probability,
$ \left\{ X_1 , \dots , X_T\right\} \subseteq \big\{ Z_i^{\left( j
  \right)} \mid i \in \left[ k \right] , j\in \left[ T \right]
\big\}$.

The properties of this coupling 
allow us to translate typical algorithms and proofs from the
setting of oblivious adversaries to that of adaptive adversaries.
For example, consider an algorithm that fails only when 
$X_1,\ldots,X_T$ ``concentrate,'' 
roughly meaning that many of the $X_i$'s land in an a priori chosen set
of small measure (this is a recurring theme in the
smoothed analysis of algorithms).
After substituting in $\big\{ Z_i^{\left( j \right)} \mid i \in [k] , j\in
[T] \big\} \supseteq \{X_1 , \dots , X_T\}$, the likelihood of this
event only increases.  (See ~\cref{sec:recipe} for precise statements.)
On the other hand, i.i.d.~uniform random
variables (the $Z_i^{(j)}$'s) have ideal anti-concentration properties
for a smoothed analysis.

The power of our coupling technique is in its versatility. To demonstrate this, we apply our coupling approach to
applications to online learning, online discrepancy minimization, and
dispersion in online optimization. In each of these problems, we
show that existing analyses for oblivious adversaries fundamentally
boil down to a suitable anti-concentration result.
For online learning --- where our work resolves an open problem of \cite{NIPS2011_4262} --- what matters is the anti-concentration of the
input instances in the symmetric difference between a hypothesis 
and its nearest neighbor in a
finite cover of the hypothesis class. 
For online discrepancy minimization, what matters is the
anti-concentration of correlations between discrepancy vectors and
inputs. For dispersion, what matters is the anti-concentration of
function discontinuities in small intervals.  After isolating these
key steps, we prove that the coupling approach can be used to lift
them (and the algorithmic guarantees that they lead to) to the general
case of adaptive adversaries.

    \subsection{Overview of our Results}
    
    \begin{table}
    \centering
    \small{

\begin{tabular}{|c|c|c|c|}
    \hline 
    & Worst Case & Stochastic/ Oblivious & Adaptive Smoothed  \\ 
    \hline
    Online Learning & $\tilde{\Theta} \left( \sqrt{T \cdot \ldim } \right) $  & $\tilde{\Theta} \left( \sqrt{T \cdot d } \right) $  & $\tilde{\Theta} \left( \sqrt{T \cdot d \log\left( 1 / \sigma \right) } \right)  $ \\ 
     & \cite{ben2009agnostic}  & \cite{haghtalab2018foundation}  & \cref{thm:regret-main} \\
    \hline 
    Online Discrepancy & $\Omega\left( \sqrt{T/ n} \right) $ & $O\left( \log\left( n T \right) \right)$\cite{ALS_Disc}  & $\tilde{O} \left( \log^2 \left( n T / \sigma \right) \right) $   \\
     & \cite{ten_lectures}  &  $O\left( \log^4\left( n T \right) \right)$  \cite{Bansal_Discrepancy}  & \cref{thm:main_discrepancy} (also isotropic) \\
    \hline 
    Dispersion & $\left( w , T \ell \right)$ & $\left( \sigma (T \ell)^{\alpha -1},  O\left( (T \ell)^{\alpha} \right)\right)$ & $\left( \sigma (T \ell)^{\alpha -1},  \tilde{O}\left( (T \ell)^{\alpha} \right)\right)$ \\ 
     &  $\forall w;$ (trivial) & \cite{Dispersion}  & \cref{thm:ada_disp}  \\
    \hline
\end{tabular}
}
\caption{\small{This table compares and summarizes the results of this paper and those from previous works. In this table, $T$ is the time horizon, $\sigma$ is the smoothness parameter, $d$ is the VC dimension of the hypothesis class in  online learning, $n$ is the dimension of the space for online discrepancy, $\ell$ is the number of discontinuities of 
piecewise Lipschitz functions in online optimization, and  $\alpha\in \left[ 0.5, 1 \right] $ is arbitrary. }}
\end{table}

Throughout this paper we consider $\sigma$-smooth adaptive adversaries. A $\sigma$-smooth distribution $\D$ is a distribution whose densities are bounded by $1/\sigma$ times the density of the uniform distribution over a domain. Formally this definition is 
captured as follows.

    \begin{definition}[$\sigma$-smoothness]
        Let $\X$ be a domain that supports a  uniform distribution $\U$.\footnote{Such as $\X$ that is finite or has finite Lebesgue measure.}
        A measure $\mu$ on $ \X $ is said to be $\sigma$-smooth if for all measurable subsets $A \subset \X$, we have $\mu \left( A \right) \leq \frac{\U\left( A \right)}{\sigma} $.
    \end{definition}
{
This parameterized definition of ``sufficiently concentrated'' is the standard one that has been used in smoothed analysis over the past decade, for example in all analyses of the smoothed running time of local search heuristics \cite{manthey_2021}.  It prevents an adversary from concentrating most of its probability mass near a specific worst-case input (as is necessary for any interesting results) without resorting to any parametric assumptions.
}

We focus on smoothed analysis of adaptive adversaries that at time $t$ pick a $\sigma$-smooth distribution $\D_t$ after having observed earlier instances $x_1\sim \D_1, \dots, x_{t-1}\sim \D_{t-1}$ and algorithmic choices. We denote an adaptive sequence of $\sigma$ distributions by $\adist$.
We use $\adist$ to model smoothed analysis of online learning, online discrepancy, and online optimization with an adaptive adversary.

\paragraph{Online Learning.}
We work with the setting of smoothed
  \emph{online adversarial and (full-information) learning}. In this setting, a learner and an adversary play a repeated game over $T$ time steps.
 For a labeled pair $s = (x, y)$ and a hypothesis $h\in \H$,
$\mathbb{I} \left[ h(x) \neq y \right] $ indicates whether $h$ makes a
 mistake on $s$.
In every time step $t\in [T]$ the learner picks a hypothesis $h_t$ and adversary picks a distribution $\D_t$ whose marginal on $\X$ is $\sigma$-smooth and then draws  $s_t\sim \D_t$.
 The learner then incurs penalty of   $\mathbb{I} \left[ h(x_t) \neq y_t \right] $.
 We consider an \emph{adaptive} $\sigma$-smooth adversary and denote it by $\adist$,
 where $\D_t$ is selected by an adversary that knows the algorithm and has observed $s_1, \dots, s_{t-1}$ and $h_1, \dots, h_{t-1}$.
Our goal is to design an online algorithm $\A$ such that expected regret against an adaptive adversary,
\[
\Ex[\regret(\A, \adist)]{:=}\Ex_{\adist}  \left[ \sum_{t=1}^T \mathbb{I} \left[ h_t(x_t) \neq y_t \right] 
 - \min_{h\in \H}  \sum_{t=1}^T  \mathbb{I} \left[ h(x_t) \neq y_t \right] 
\right]
\]
is sublinear in $T$. {This is the most well-studied domain for the application of our techniques.}

    In the worst case (without smoothness), 
\cite{ben2009agnostic} showed that the optimal regret in online learning is  characterized by finiteness of    a combinatorial quantity known as the Littlestone dimension, more formally, it is    $ \regret = \tilde{\Theta} \left(   \sqrt{  \lsd \left( \F \right) T  } \right)$.     
    Unfortunately, the Littlestone dimension can be large even for classes where the VC dimension is small. 
\cite{NIPS2011_4262, haghtalab2018foundation, haghtalab2020smoothed} considered the smoothed analysis of online learning and asked whether regret bounds that are characterized by finiteness of $\vc(\H)$ are possible. For the oblivious smooth adversaries, \cite{haghtalab2018foundation} answered this in the positive. For adaptive smooth adversaries however, the best-known bounds are $\tilde{\Theta} \left( \sqrt{T \cdot \log \mathcal{N}_{\left[ \, \right]}  } \right) $ where $\mathcal{N}_{\left[ \, \right]}$ denotes the \emph{bracketing number} which can be infinite even when $\vc(\H)$ is constant. 

    In this paper, we bridge the gap between smoothed analysis of online learning with adaptive and non-adaptive adversaries, answer an open problem of \cite{NIPS2011_4262,haghtalab2018foundation}, and show that regret bounds against an adaptive smooth adversary are nearly the same as those in agnostic offline learning. 

\medskip
\noindent\textbf{\cref{thm:regret-main}}(Informal)\textbf{.}\emph{
        Let $\H$ be a hypothesis class of VC dimension $d$. There is an algorithm $\A$ such that for any adaptive sequence of $\sigma$-smooth distributions $\adist$ achieves a regret of 
        \[
 \Ex[\regret(\A, \adist )] \in \tilde{O} \left(\sqrt{Td \ln \left( \frac{T}{d \sigma } \right) } +  d  \ln\left( \frac{T }{d \sigma } \right)   \right).
        \]
}

\medskip

{We complement this by a nearly matching lower bound as follows.}
\medskip

\noindent\textbf{\cref{thm:regret_lowerbound}} (Informal)\textbf{.}\emph{
        For every $d$ and $ \sigma $ such that $d \sigma\leq 1  $, there exists a hypothesis class  $ \H $ with VC dimension $d$ such that for any algorithm $\A$ there is a sequence of $\sigma$-smooth distributions $\D$ where     
        \begin{equation*}
       \Ex[\regret(\A, \D )] \in  \Omega\left(    \sqrt{T d \log \left( \frac{1}{\sigma d} \right) }  \right).
        \end{equation*}
}
\medskip

\paragraph{Online Discrepancy.}
Our starting point is the Koml\'os problem.
In this online discrepancy problem, we are given an online sequence of vectors $v_1, \dots, v_{T}$ with $\norm{v_i}_2 \leq 1$. Upon seeing $v_i$ we need to immediately and irrevocably assign  sign $\epsilon_i \in \{-1, +1\}$ to $v_i$. Our goal is to keep the following discrepency vector small
    \begin{equation*}
       \max_{t\in [T]} ~~\norm{ \sum_{i=1}^t \epsilon_i v_i }_{\infty}.
    \end{equation*}
    This problem is interesting for various norms on the inputs and the discrepancy, here we  restrict ourselves to $\ell_2$ and $\ell_{\infty}$ norms, respectively.
    
    It is not hard to see that in the fully adaptive setting, the adversary can pick a vector orthogonal to the current discrepancy vector leading to the $\ell_{\infty}$ discrepancy  norm of growing as $ O \left( \sqrt{T/n} \right)$.
    To overcome this, stochastic versions of this problem have been considered where vectors $v_i$ are picked from a \emph{fixed  and known} distribution from set of vectors with $\norm{v_i} \leq 1$.
   \cite{Bansal_Discrepancy} uses a potential-based approach to obtain a bound of $ O\left( \log^4 \left( n T \right) \right) $ for the stochastic setting. 
    \cite{ALS_Disc} strengthens these results to hold for any sequence of inputs that is chosen by an oblivious (even deterministic) adversary and obtains $O \left( \log \left( n T \right) \right)$ on the discrepancy.
    
We consider adversaries that pick a $\sigma$-smooth distribution $\D_t$ at time $t$ after having observed the earlier instances $v_1, \dots, v_{t-1}$ and their assigned signs $\epsilon_1, \dots, \epsilon_{t-1}$ and then draw $v_t\sim \D_t$. 
We bound the discrepancy of this setting by $O\big( \log^{2} \left( n T \right) \big) $.

\medskip
\noindent\textbf{\cref{thm:main_discrepancy}} (Informal)\textbf{.}\emph{
     Let $v_1, \dots, v_T$ be chosen from an adaptive sequence of $\sigma$-smooth and isotropic distributions $\adist$.%
        Then, there is an online algorithm for deciding the sign $\epsilon_i$ of $v_i$, such that with high probability 
        \begin{equation*}
          \max_{t\leq T}~  \norm{ \sum_{i = 1}^t \epsilon_i v_i   }_{\infty} \leq O\left( \log^2 \left( \frac{ T n  }{ \sigma} \right) \right). 
        \end{equation*}
}

We note that our adaptive isotropic assumption is {mild}, as even for the case of stochastic uniform inputs (which are isotropic) the first $\mathrm{polylog}(nT)$ bound was introduced by \cite{bansal2020online} in STOC 2020. 
{Our next theorem further justifies the use of  isotropic distributions  by showing that smoothness alone is not enough to achieve a $\mathrm{polylog}(nT/\sigma)$ bound on discrepancy in presence of adaptive adversaries.}

\medskip
\noindent\textbf{\cref{thm:disc_lowerbound}} (Informal)\textbf{.}\emph{
    For any online algorithm, there is an adaptive sequence of $ \left(  \frac{1}{20 n^2T^2} \right) $-smooth distributions on the unit ball such that,  we have 
    \begin{equation*}
        \norm{ \sum_{i=1}^T  \epsilon_i v_i }_{\infty} \geq \Omega \left( \sqrt{\frac{T}{n} } \right)
    \end{equation*}
    with probability $1 - \exp\left( - \frac{T}{12} \right)$.
}    

\paragraph{Dispersion in Online Optimization.}
In the online optimization setting, an adversary chooses a sequence of loss functions $u_1, \dots, u_T$ and at each time step the learner picks an instance $x_t$ in order to minimize regret 
\[
 \sum_{t=1}^T u_t(x_t)   - \min_x \sum_{t=1}^T u_t(x).
\]
\cite{Dispersion} studied this problem for piecewise Lipschitz functions and showed that regret is characterized by a quantity called \emph{dispersion}.  At a high level, a sequence of functions is called \emph{dispersed} if no ball of small width intersects with discontinuities of many of these functions.

 \begin{definition}[Dispersion, \cite{Dispersion}]
   Let $u_1 , \dots , u_T : [0,1] \to \br$ be a collection of functions such that $u_i$ is piecewise Lipschitz over a partition $\mathcal{P}_i$ of $[0,1]$. 
   We say that a partition $\mathcal{P}_i$ splits a set $A$ if $A$ intersects with at least two sets in $\mathcal{P}_i$.  
   The collection of functions is called \emph{$\left( w , k \right)$-dispersed} if every interval of width $w$ is split by at most $k$ of the partitions $\mathcal{P}_1,\dots,\mathcal{P}_T $.     This definition naturally extends to loss functions over $\br^d$ as well.
    \end{definition}

Additionally, \cite{Dispersion} showed that when an oblivious $\sigma$-smooth adversary picks  the discontinuities of piecewise Lipschitz functions, the resulting sequence is with high probability $\left( \sigma (T \ell)^{\alpha -1},  O\left( (T \ell)^{\alpha} \right)\right)$-dispersed, where $\alpha$ can be any value in $[0.5,1]$ where $\ell$ is the number of discontinuities. We extend this result to the case of adaptive smooth adversaries and recover almost matching bounds on dispersion parameters.
Our work shows that adaptive smooth adversaries generate dispersed sequences in online optimization. This allows us to extend the power of algorithms designed for dispersed sequences, such as efficient online and private batch optimization~\cite{Dispersion}, to the larger setting of adaptive adversaries.

\medskip
\noindent\textbf{\cref{thm:ada_disp}} (Informal)\textbf{.}\emph{
    Let $u_1 \dots u_T$ be functions from $\left[ 0,1 \right] \to \br $ that are piecewise Lipschitz with $\ell$ discontinuities each picked by a $\sigma$-smooth adaptive adversary. 
        Then, for any $\alpha \geq 0.5 $, the sequence of functions $u_1 \dots u_T $ is $( \sigma(T \ell)^{\alpha -1}, \tilde{O}\left( \left( T \ell  \right)^{\alpha }  \right) )$-dispersed.  
}
\medskip

\section{Overview of the Techniques and Analysis}

We introduce a general technique for reducing smoothed analysis with
adaptive adversaries to the much simpler setting of oblivious
adversaries. 
Our main general technique is a \emph{coupling} argument between random variables that are generated by an adaptive smooth adversary and those that are generated i.i.d.~from a uniform distribution. 
This coupling, that is a joint distribution between two random processes,
demonstrates structural properties that are ideal for preserving and analyzing anti-concentration properties of smooth adversaries. 
This allows us to tap into existing techniques and algorithms that are designed for oblivious smooth adversaries and only rely on some anti-concentration properties of the input.

 We first give an overview of our coupling technique and its analysis in \cref{sec:coupling_overview} and then in \cref{sec:recipe} we give a general framework for applying coupling for smoothed analysis with adaptive adversaries. 

\subsection{Coupling Definition and Theorem statement} \label{sec:coupling_overview}
In this section, we will give an overview of the coupling between smooth adaptive adversaries and the uniform distribution. 
A \emph{coupling} is a joint distribution between two 
random variables, or random processes, such that the marginals of this coupling are distributed according to the specified random variables.   
A more formal definition of a coupling is as follows.

    \begin{definition}[Coupling]
        Let $\mu$ and $\nu$ be two probability measures on the probability space $ \left( \X , \mathscr{F} \right) $ respectively. 
        Then, a coupling between $\mu$ and $\nu$ is a measure $ \gamma  $  on $ \left( \X \times \X , \mathscr{F} \otimes \mathscr{F}  \right) $ such that for all $A \in \mathscr{F}  $, we have %
            $\gamma \left( A \times X \right) = \mu\left( A \right) \text{ and }  \gamma \left(  X \times A \right) = \nu\left( A \right).   $
        This definition can be generalized in a natural way to multiple measures. 
    \end{definition}

Our main coupling theorem states that given any adaptive sequence of $\sigma$-smooth distributions, $\adist$, there is a coupling between a random sequence $(X_1, \dots,X_T)\sim \adist$ and uniformly distributed random variables $Z_{i}^{\left( t \right)} $ such that 
(with high probability) the set of uniform random variables includes set of adaptively generated $\sigma$-smooth variables.

\begin{theorem} \label{thm:main_coupling_overview}
    Let $ \adist $ be an adaptive sequence of $\sigma$-smooth distribution on $\X$. 
    Then, for each $ k >0  $,  there is a coupling $\Pi$ such that $ \left( X_1 ,  Z_{1}^{ \left( 1 \right)} , \dots , Z_{k}^{\left( 1 \right)}   , \dots, X_t , Z_{1}^{ \left( t \right)} , \dots , Z_{k}^{\left( t \right)}  \right) \sim \Pi $ satisfy
    \begin{itemize}
        \item[a.] $X_1 , \dots , X_t$ is distributed according $\adist$. 
        \item[b.] $ Z_i^{\left( j \right)} $ are uniformly and independently distributed on $\X$. 
        \item[c.] $ \left\{ Z_i^{\left( j \right)} \mid j \geq t, i\in[k]\right\} $ are uniformly and independently distributed on $\X$, conditioned on $X_1 , \dots, X_{t-1}$.  
        \item[d.] With probability at least $1 - t \left( 1 - \sigma \right)^{k}  $, $ \left\{ X_1 , \dots , X_t\right\} \subseteq \left\{ Z_i^{\left( j \right)} \mid i \in \left[ k \right] , j\in \left[ t \right] \right\} $ . 
    \end{itemize}
\end{theorem}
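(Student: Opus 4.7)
The plan is to build the coupling inductively, one time step at a time, using a rejection-sampling scheme that reuses uniform samples whenever the smoothness of $\D_t$ permits. Since $\D_t$ is $\sigma$-smooth, its Radon--Nikodym derivative with respect to $\U$ satisfies $f_{\D_t}/f_\U \le 1/\sigma$, so the quantity $\alpha_t(z) := \sigma \cdot f_{\D_t}(z)/f_\U(z)$ is a valid acceptance probability in $[0,1]$. A single uniform draw $Z$ accepted with probability $\alpha_t(Z)$ is accepted with total probability exactly $\sigma$, and conditional on acceptance its law is exactly $\D_t$ --- this is the one-shot fact I will iterate.

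At stage $t$, given the partial history $(X_1,Z_1^{(1)},\dots,Z_k^{(t-1)})$, the adversary chooses $\D_t$ as a measurable function of this history. I would then (i) draw $Z_1^{(t)},\dots,Z_k^{(t)}$ i.i.d.\ from $\U$, independently of all earlier variables; (ii) attempt to accept each $Z_i^{(t)}$ with probability $\alpha_t(Z_i^{(t)})$ using fresh independent coin flips, and set $X_t$ equal to the first accepted $Z_i^{(t)}$; (iii) if no $Z_i^{(t)}$ is accepted, draw $X_t$ independently from $\D_t$ as a fallback. This fully specifies the joint law.

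Verifying the four properties is then routine. For (a), a short calculation splitting into the ``some accepted'' and ``none accepted'' cases shows $X_t \mid \mathrm{history} \sim \D_t$: the first branch contributes $(1-(1-\sigma)^k)\cdot\D_t(A)$ by summing a geometric series over the index of first acceptance, and the fallback contributes the remaining $(1-\sigma)^k\cdot\D_t(A)$. For (b), the $Z_i^{(j)}$ are drawn fresh from $\U$ by construction and the acceptance randomness is external to them, so their marginal joint law is i.i.d.\ uniform. For (c), applying the same observation to indices $j\ge t$, together with the fact that $\D_t$ depends only on the history, shows the future uniforms remain i.i.d.\ uniform conditional on $X_1,\dots,X_{t-1}$. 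For (d), containment fails only when at some step $s\le t$ all $k$ acceptance trials fail, an event of conditional probability $(1-\sigma)^k$ on any history, so a union bound over $s$ yields the stated $t(1-\sigma)^k$ bound.

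The main obstacle I anticipate is not any single verification but setting up the underlying probability space explicitly enough that the conditional-independence claims --- in particular that the acceptance coins and the fallback draw are independent of the adversary's history-measurable choice of $\D_t$ --- are unambiguous. Once the probability space is declared (history randomness, fresh uniforms, fresh acceptance coins, and fresh fallback samples all on a product space), the required (conditional) marginals reduce to one-line rejection-sampling calculations, and the whole statement follows by induction on $t$.
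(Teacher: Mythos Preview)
Your construction is correct and genuinely different from the paper's. The paper first reduces to the special case where each $\D_t$ is uniform on an adaptively chosen set $S_t$ of measure $\sigma$, by writing an arbitrary $\sigma$-smooth distribution as a mixture of such uniforms (a Choquet-type representation), and then builds a coupling for that case via a \emph{resampling} trick: it draws auxiliary uniforms $Y_i$, and whenever $Y_i\in S_t$ it replaces $Y_i$ by a fresh draw $\tilde W_i$ from $S_t$ to form $Z_i$, picking $X_t$ uniformly among the $\tilde W_i$'s. Your route bypasses both the mixture reduction and the resampling: you take the $Z_i^{(t)}$ to be the raw uniform draws themselves and move all the distributional adjustment into the external acceptance coins via $\alpha_t(z)=\sigma\,f_{\D_t}(z)/f_\U(z)$. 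This is cleaner and works directly for general $\sigma$-smooth $\D_t$ without invoking Choquet; the verification of (b) and (c) becomes immediate because the $Z_i^{(t)}$ are literally fresh uniforms by fiat. The paper's approach has the expository benefit of making the discrete case fully explicit and exposing the convex structure of the smooth family, but your argument is shorter and requires less machinery.
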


    The key aspect of this theorem is the monotonicity property 
     $ \left\{ X_1 , \dots , X_t\right\} \subseteq \big\{ Z_i^{\left( j \right)} \mid i \in \left[ k \right] , j\in \left[ t \right] \big\}$ that holds with high probability. 
     This monotonicity and the fact that $Z_i^{\left( t \right)}$ are uniform are the crucial properties that allow us to reduce algorithms design and analysis against online adaptive adversaries to those designed against oblivious stochastic adversaries. 
   We will give examples of how this coupling will be used in \cref{sec:recipe}. 

In the remainder of this section, we give an overview of the construction of this coupling and the proof sketch for \cref{thm:main_coupling_overview}. 
For ease of exposition, we mainly restrict ourselves to the finite universe $ \X =  \left[ n \right]$ and  work with smooth distributions that are uniform on an adaptively chosen subsets of size at least $\sigma n$ of the universe.  We prove the theorem in its full generality in \cref{sec:main_coupling}.

Let us first consider a single round of coupling between a random variable that is uniformly distributed over $S\subseteq [n]$ of size $\sigma n$, and the uniform random variables over $[n]$.
Draw $k$ samples $Y_1,\dots,Y_k $ from the uniform distribution on $\left[ n \right]$.
If $Y_i \notin S $, then let  $Z_i = Y_i$.
Otherwise (that is when $Y_i \in S$) draw a fresh $ \tilde{W}_i$ uniformly from $S$ and let  $Z_i = \tilde{W}_i$.
We next define $X_1$. If for all $i\in[k]$ we have $Z_i \notin S $, then let $X_1$ be a uniform pick from the set $S$, otherwise 
let $X_1$ be uniformly chosen from the set of all $\tilde{W}_i$s.

 It is clear that $X_1 $ is uniformly distributed on $S$ since it is either equal to a $\tilde{W}_i$, which is itself uniformly distributed over $S$, or is directly drawn uniformly from $S$.
    It is not hard to see that $Z_i$s are independent, because they are functions of $Y_i$s and $\tilde{W_i}  $ which are all mutually independent. 
    Furthermore, for any $\ell \notin S$, we have 
    $ \Pr\left[ Z_i = \ell  \right] = \Pr\left[ Y_i = \ell \right] = 1/n $. Similarly, for $ \ell \in S  $, 
    \[
     \Pr\left[ Z_i = \ell \right] = \Pr\left[ Y_i \in S \right] \Pr\left[ \tilde{W}_i = \ell \right] = \sigma \times \frac{1}{\sigma n} = \frac 1n.\]
    This shows that $Z_i$ are uniformly and independently distributed. 
    As for monotonicity, note that $X_1 \notin \left\{ Z_1 \dots Z_k \right\} $ only if no $Z_i$ was in $S$, which occurs only with probability $ (1- \sigma)^{k}$. 

    Next we create a coupling for adaptive $\sigma$-smooth distributions $\adist$. 
    Recall that in this setting an adaptive sequence corresponds to $X_{\tau}  $ being sampled uniformly from a set $S_\tau \left( X_1, \dots, X_{ \tau-1} \right)$, i.e., the set at time $\tau$ is adaptively chosen given the earlier realizations.
    We construct the coupling inductively using the same ideas discussed for the single round coupling, but at each step using $ S_{\tau} \left( X_1, \dots, X_{ \tau-1} \right)  $.  
    Formally, the coupling is as below:

    \begin{framed}
        \begin{itemize}
            \item For $j = 1 \dots t$, 
            \begin{itemize}
                \item Draw $ k = \alpha \sigma^{-1} $ samples $Y^{ \left( j \right) }_1 , \dots , Y^{(j)}_{k}$ from the uniform distribution. 
                \item If  $Y^{\left( j \right)}_i \notin S_j \left( X_1 , \dots , X_{j-1} \right) $, then $Z^{\left( j \right)}_i = Y^{\left( j \right)}_i$. 
                \item Else, for $i$ such that $Y^{\left( j \right)}_i \in S_j \left( X_1 , \dots , X_{j-1} \right) $, sample $\tilde{W}^{\left( j \right)}_i$ {uniformly and independently from} 
                $S_j \left(  X_1 , \dots , X_{j-1} \right)  $ and set $Z^{\left( j \right)}_i =\tilde{W}_i^{\left( j \right)}$. 
                \item 
If for all $i$, $Y^{\left( j \right)}_i \notin S_j \left( X_1 , \dots ,X_{j-1} \right) $, then sample $X_j$ uniformly from $S_j \left( X_1 , \dots ,X_{j-1} \right)   $.
                Otherwise, pick $X_j  $ uniformly from $ \left\{ \tilde{W}_i^{\left( j \right)} \mid i\in[k]   \right\} $.
            \end{itemize}
            \item Output $ \left( X_1 ,  Z_{1}^{ \left( 1 \right)} , \dots , Z_{k}^{\left( 1 \right)}   , \dots, X_t , Z_{1}^{ \left( t \right)} , \dots , Z_{k}^{\left( t \right)}  \right)  $. 
        \end{itemize}
    \end{framed}
    
    We prove that this coupling works inductively. 
    Fixing $X_1, \dots, X_{\tau-1} $, we get $S_{\tau}\left( X_1, \dots, X_{ \tau-1}\right)$. 
    Note that the coupling in stage $ \tau$ is similar to the single round coupling. 
    From a similar argument, we get that $X_{\tau}$ is distributed uniformly on $S_{\tau} \left(X_1 ,  \dots , X_{\tau-1} \right)$. 
    Similarly, one can argue that $Z_1^{\left( \tau \right)}, \dots, Z_{k}^{\left( \tau \right)} $ are independent and uniform. 
    The monotonicity property follows from the monotonicity in each stage and a union bound. 

    The only other main property that needs to be argued is that $Z_1^{\left( \tau \right)}, \dots, Z_{k}^{\left( \tau \right)}$ are independent of all the past random variables $X_1, \dots, X_{\tau-1} $ and $ \left\{ Z_{i}^{\left( j \right)} \mid i \in \left[ k \right], j \leq \tau-1 \right\}   $. 
    The key property needed here is that in the single-round coupling, the distribution of $Z_i$ is oblivious to the choice of the set $S$.
    We prove this formally in \cref{sec:main_coupling}.
     Informally, this can also be seen by noting that the one step coupling  above is equivalent to the coupling where $Z_j$ are all sampled independently and uniformly and $X_1$ is set to a random $Z_j$ that is in the set $S$, or when none of them are in this set, it is  sampled independently. %
    This in particular ensures that 
$\left\{ Z_i^{\left( j \right)} \mid j \geq t, i\in[k]\right\}$ are uniform and independent of the past.

    Note that the above reasoning works as long as the sets $S_j(X_1, \dots, X_{j-1})$ have at least $n\sigma$ elements.     
    In order to move from the special case of uniform distributions on $S_j(X_1, \dots, X_{j-1})$s, we note that smooth distributions are convex combinations of uniform distributions on subsets of size $\geq \sigma n $. 
    \begin{lemma}
        Let $\mathcal{P}$ be the set of $\sigma$-smooth distributions on $\left[ n \right]$ and let $\mathcal{P}_0$ be the set of distributions that are uniform on subsets of size at least $\sigma n $. 
        Then, 
$                \mathcal{P} = \mathrm{conv} \left( \mathcal{P}_0 \right).$ 
    \end{lemma}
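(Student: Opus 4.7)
The plan is to verify the two inclusions $\mathrm{conv}(\mathcal{P}_0) \subseteq \mathcal{P}$ and $\mathcal{P} \subseteq \mathrm{conv}(\mathcal{P}_0)$ separately. The first is immediate: any uniform distribution $U_S$ on a subset $S \subseteq [n]$ with $|S| \geq \sigma n$ assigns mass $1/|S| \leq 1/(\sigma n)$ to each element and is therefore $\sigma$-smooth; since $\mathcal{P}$ is the intersection of the probability simplex with the axis-aligned box $\{\mu \in \mathbb{R}^n : \mu_i \leq 1/(\sigma n) \text{ for all } i\}$, it is in particular convex, and the inclusion follows.

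For the substantive direction $\mathcal{P} \subseteq \mathrm{conv}(\mathcal{P}_0)$, I plan to identify the vertices of the polytope $\mathcal{P}$. Writing $k := \sigma n$ and assuming it is a positive integer, $\mathcal{P}$ is cut out of $\mathbb{R}^n$ by the single equality $\sum_i \mu_i = 1$ together with the $2n$ inequalities $0 \leq \mu_i \leq 1/k$. At a vertex, $n$ linearly independent constraints must be tight; the equality accounts for one, so $n - 1$ of the box constraints are tight, pinning all but one coordinate at $0$ or $1/k$. The remaining free coordinate is then determined by $\sum_i \mu_i = 1$, and forcing it into $[0, 1/k]$ shows it must itself take the value $0$ or $1/k$, so every vertex has exactly $k$ coordinates equal to $1/k$ and $n - k$ coordinates equal to $0$ --- that is, every vertex is the uniform distribution $U_S$ on some $k$-element subset $S \subseteq [n]$, and so lies in $\mathcal{P}_0$. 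The Minkowski--Carath\'eodory theorem then delivers $\mathcal{P} \subseteq \mathrm{conv}(\mathrm{vert}(\mathcal{P})) \subseteq \mathrm{conv}(\mathcal{P}_0)$.

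The principal obstacle is this clean vertex enumeration, which depends on the integrality of $\sigma n$. For general $\sigma$, I would argue constructively instead: since $k \mu_i \leq 1$ for every $i$, one can sample (via a standard dependent-rounding scheme) a random $k$-subset $S \subseteq [n]$ with marginals $\Pr[i \in S] = k \mu_i$, so that $\mathbb{E}_S[U_S] = \mu$ writes $\mu$ directly as a mixture of elements of $\mathcal{P}_0$; because $\mathcal{P}_0$ is a finite set in this discrete setting, such a mixture is automatically a finite convex combination.
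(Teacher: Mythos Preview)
The paper states this lemma without proof, both in the overview and in the appendix, so there is nothing to compare your attempt against directly. Your argument is correct under the natural reading that $k := \sigma n$ is a positive integer: the polytope $\mathcal{P} = \{\mu \in \Delta^{n-1} : \mu_i \le 1/k \text{ for all } i\}$ is bounded and nonempty, your vertex enumeration is sound (at any vertex at least $n-1$ coordinates are pinned to $\{0,1/k\}$, and the affine constraint then forces the remaining one to land at $0$ or $1/k$ as well), and Minkowski's theorem gives $\mathcal{P} = \mathrm{conv}(\mathrm{vert}(\mathcal{P})) \subseteq \mathrm{conv}(\mathcal{P}_0)$. The dependent-rounding alternative you sketch is also valid in the integral case and yields a more constructive decomposition.

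Your caution about non-integral $\sigma n$ is warranted, but be aware that the lemma as literally stated is actually \emph{false} in that regime: every member of $\mathcal{P}_0$ is then uniform on a set of size at least $\lceil \sigma n \rceil$ and so has all entries at most $1/\lceil \sigma n \rceil < 1/(\sigma n)$, whence $\mathrm{conv}(\mathcal{P}_0) \subsetneq \mathcal{P}$. Your dependent-rounding patch cannot rescue it either---with $k = \lceil \sigma n \rceil$ the targets $k\mu_i$ can exceed $1$, and with $k = \lfloor \sigma n \rfloor$ the sampled sets fall outside $\mathcal{P}_0$---but this is a defect of the statement rather than of your proof. The paper is tacitly assuming integrality throughout (the appendix version even drops the phrase ``at least'' and simply writes ``size $\sigma n$'').
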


In particular, this implies that for each $ \sigma  $-smooth distribution $\mathcal{D}$, there is a distribution $\mathcal{S}_{\mathcal{D}} $ on subsets of size at least $\sigma n$ such that sampling from $\mathcal{D} $ can be achieved by first sampling $S \sim \mathcal{S}_{\mathcal{D}} $ and then sampling uniformly from $S$. 
 
For infinite domains, similar argument can be made using the Choquet integral representation theorem which gives a way to represent smooth distributions as convex combinations of uniform distributions on sets of large measure. 
Putting this together leads to \cref{thm:main_coupling_overview}.

\subsection{The General Framework for applying the Coupling.}
\label{sec:recipe}

In most applications where smoothed analysis has led to significant improvements over the worst-case analysis,
these improvements hinge on the proof techniques and algorithmic 
approaches that leverage 
the anti-concentration properties of the smoothed input.
However, as the process of creating an input becomes more and more adaptive, that is, as the adversary correlates the distribution of the current input with the realizations of earlier inputs and decisions 
the randomness and anti-concentration properties of the input and the state of the algorithm may weaken.
Additionally, correlations between future and past instances present novel challenges to the methodology used against oblivious smooth adversaries, which often rely heavily on the independence of the input.
Our coupling approach overcomes these challenges in two ways. First, by coupling an adaptive smooth process with a non-adaptive uniform process, it implicitly shows that anti-concentration properties of the input and the algorithm do not weaken significantly in presence of adaptive adversaries. 
Second, it allow us to lift algorithmic ideas and proof techniques that  have been  designed for oblivious smooth or stochastic adversaries to design and analyze algorithms that have to interact with adaptive smooth adversaries.

An important property of our coupling is its monotonicity, i.e.,
with high probability, $\left\{ X_1 , \dots , X_t\right\} \subseteq \big\{ Z_i^{\left( j \right)} \mid i \in [k] , j\in [t] \big\}$.
This monotonicity property paired with the fact that $Z_i^{(t)}$ are i.i.d~uniform variables are 
especially useful for lifting algorithms and proof techniques from the oblivious world that rely on anti-concentration. 
That is, if an algorithm's failure mode is only triggered when $X_1 , \dots , X_t$ concentrate,
then replacing in $\big\{ Z_i^{\left( j \right)} \mid i \in [k] , j\in [t] \big\} \supseteq  \{X_1 , \dots , X_t\}$ can only increase the likelihood of hitting the failure mode.
On the other hand, i.i.d.~uniform random variables $Z_i^{(t)}$s demonstrate excellent  anti-concentration properties that are superior to most other offline stochastic or oblivious smooth distributions.
This shows that existing techniques and algorithms that work well in the stochastic or oblivious smooth settings will continue to work well for adaptive smooth adversaries.

As a general blueprint for using our coupling for smoothed analysis with adaptive adversaries, 
first consider how you would handle smooth oblivious or stochastic adversaries and identify 
steps that rely on an anti-concentration property. Sometimes, this is more easily done by identifying where existing approaches rely on the obliviousness and stochasticity of the  adversaries and then finding concentration properties, potential functions, or other monotone set functions that implicitly measure concentration of some measure.
Next, apply the coupling to replace $T$ adaptive smooth random variables with $Tk$ i.i.d~uniform random variables and show that the previous anti-concentration (or other monotone properties) are only moderately affected by the fact that we have a larger number of random variables.
Finally, use the original algorithm or technique for leveraging anti-concentration  and complete the proof.

In the remainder of this section, we show how the above blueprint can be applied to three important examples from online learning, discrepancy, and optimization.

\paragraph{Online Learning.}
One key property that enables learnability in the offline agnostic, offline PAC, and oblivious smooth online setting is that a hypothesis class $\H$ can be approximated via a finite cover $\H'$ and algorithms such as ERM and Hedge can be run on $\H'$ without incurring a large error~\cite{haghtalab2018foundation,haghtalab2020smoothed}. 
This is due to the fact that the performance of the best hypothesis in $\H$ is closely approximated by the performance of the best hypothesis in $\H'$ when instances are drawn from an offline stochastic or an oblivious sequence of smooth distributions.
At the heart of this property is an anti-concentration of measure in the class of symmetric differences between hypotheses $h\in \H$ and their proxies $h'\in \H'$. More formally, 
for a fixed distribution $\D$, such as the uniform distribution, consider $\H'\subseteq \H$ that is an $\epsilon$-cover of $\H$ with respect to $\D$ so that for every hypothesis $h\in \H$  there is a proxy $h'_h\in \H'$ with $\Pr_\D[h(x)\neq h'_h(x)]\leq \epsilon$. %
The set $\H'$ is a good approximation for $\H$ under distribution $\D$ if not too many instances fall in any symmetric difference, that is, if with high probability, 
\[ \forall h\in \H, \frac{1}{T} \sum_{t=1}^T \mathbb{I} \left[ h(x_t) \neq h'_{h}(x_t) \right] \lesssim \epsilon.
\]
In the offline or oblivious smooth online setting this is done by leveraging the independence between $x_t$s and using techniques from the VC theory to show that each function $h\Delta h_{h'}$ is close to its expectation.

We note that $\max_{h\in \H} \sum_{x\in S} \mathbb{I} \left[   h(x) \neq h'_{h}(x) \right]$, which measures concentration, is a monotone set function that only increases when 
replacing random variables $X_1, \dots, X_T$ with random variables $\{Z_i^{(t)} \mid i\in [k], t\in [T]\} \supseteq \{X_1, \dots, X_T\}$.
This shows that the concentration of measure over a $T$-step adaptive smooth sequence of distributions $\adist$ is bounded by the concentration of measure over a $kT$ draws from the  uniform distribution. We can now use the anti-concentration properties of i.i.d.~uniform 
random variables and techniques from the VC theory (which were used for the oblivious smooth and stochastic case) to show that each function $h\Delta h_{h'}$ is close to its expectation.

\paragraph{Online Discrepancy.}

Most existing approaches for designing low discrepancy algorithms, such as~\cite{Bansal_Discrepancy,bansal2020online} control and leverage anti-concentration properties of the discrepancy vector and its correlations. %
In particular, 
\cite{Bansal_Discrepancy} introduces a potential function $\Phi_t$ that, roughly speaking, is $\exp(\lambda d_t^\top W)$ where $W$ is a mixture of the future random variables and  test directions. 
They use the fact that $X_t$s are generated i.i.d~from a fixed and known distribution to bound the tail probabilities for $\exp(\lambda d_{t-1}^\top X_t) > \Phi_{t-1}$.

Note that the event $\exp(\lambda d_{t-1}^\top X_t) > \Phi_{t-1}$ is  monotone, i.e., 
\[\sum_{i \in[k]} \exp(\lambda d_{t-1}^\top Z_i^{(t)}) \geq \exp(\lambda d_{t-1}^\top X_t), \]
when $X_t \in \{Z_i^{(t)} \mid i\in [k]\}$. Therefore, the coupling argument allows us to bound the tail probability of crossing the threshold $k\Phi_{t-1}$.
In other words, we bound the tail probabilities of having large correlation with an adaptive $\sigma$-smooth variable $X_t$ in terms of the tail probability of having correlations with at least one of $k$ i.i.d.~uniform random variables.  

With these tail bounds in place, we  now have a high probability event that $\exp\big( \lambda d_{t-1}^\top X_t \big) \leq k\Phi_{t-1}$.
Then, as \cite{Bansal_Discrepancy} argues, when $\Phi_{t-1}$ is large and as result $\lambda d_{t-1}^\top X_t$ by comparison cannot be large, there will be only a small increase in the potential function. Since  $\Phi_t$s also measure correlations with the test vectors, an upper bound on $\Phi_t$s also bounds the discrepancy. 

It is important to note that discrepancy itself is not a monotone set function as additional vectors can significantly reduce the discrepancy and stop it from growing it large over time. However, anti-concentration techniques that are at the core of analyzing discrepancy are monotone and therefore can be easily used with our coupling.

\paragraph{Dispersion.}
At its core,  dispersion is an anti-concentration property for the number of function discontinuities that  fall in any sufficiently small interval.
 Existing results of \cite{Dispersion} leverages anti-concentration of oblivious smooth adversaries, who generate independently distributed discontinuities, and argues that the resulting sequence is dispersed with high probability. That is, when the $j$th discontinuity of the $t$th function, $d_{t,j}$, is drawn independently, with high probability for all intervals $J$ with small width,  $\sum_{t,j} \mathbb{I} \left[   d_{t,j}\in J \right]$ is small. 
\cite{Dispersion} proves this using the independence between $d_{t,j}$s and the fact that VC dimension of the class of intervals is a constant.

In an approach that mirrors our online learning analysis, we emphasize that 
\[\max_J \sum_{d_{t,j}\in S}^T \mathbb{I} \left[  d_{t,j}\in J   \right] \]
that measures concentration of function discontinuities is  a monotone set function over $S$ and only increases when replacing random variables $d_{i,t}$s with random variables $\{Z_i^{(t,j)} \mid i\in [k], t\in [T], j\in[\ell] \} \supseteq \{d_{t,j} \mid j\in [\ell], t\in[T] \}$.  
This shows that the concentration of discontinuities over a $T\ell$-step adaptive smooth sequence of distributions $\adist$ is bounded by the concentration of discontinuities from a $kT\ell$-step uniform distribution. We can now use the anti-concentration properties of uniform and independent random variables and the fact that the VC dimension of intervals is small to show that adaptive smooth adversaries also create dispersed sequences.

\section{Regret Bounds against Smooth Adaptive Adversary} \label{sec:RegretBounds}
In this section, we  obtain regret bounds against adaptive smooth adversaries that are solely defined in terms of VC dimension of the hypothesis class and the smoothness parameter.

Recall that an adaptive adversary at every time step  $t\in [T]$ chooses $\D_t$ based on the actions of the
  learner $h_1,\dots, h_{t-1}$ and the realizations of the previous
  instances $(x_1, y_1), \dots,  (x_{t-1}, y_{t-1})$ and then samples $(x_t, y_t) \sim \D_t$. 
  Our main result in this section is as follows.
  
\begin{theorem} [Regret upper bound]
\label{thm:regret-main}
        Let $\H$ be a hypothesis class of VC dimension $d$. There is an algorithm $\A$ such that for any adaptive sequence of $\sigma$-smooth distributions $\adist$ achieves a regret of 
        \[
 \Ex[\regret(\A, \adist )] \leq \tilde{O} \left(\sqrt{Td \ln \left( \frac{T}{d \sigma } \right) } +  d  \ln\left( \frac{T }{d \sigma } \right)   \right).
        \]
        In the above $ \tilde{O} $ hides factors that are  $\mathrm{loglog}\left( \nicefrac{T}{d \sigma} \right) $.
            \end{theorem}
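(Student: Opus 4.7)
The plan is to reduce the adaptive smoothed online learning problem to a finite experts problem on an $\epsilon$-cover of $\H$, and to use the coupling of \cref{thm:main_coupling_overview} to control the cost of this reduction. Fix a parameter $\epsilon$ to be tuned, and let $\H'\subseteq \H$ be an $\epsilon$-cover of $\H$ with respect to the uniform distribution $\U$ on $\X$, so that for each $h\in\H$ there is a proxy $h'_h\in\H'$ with $\Pr_{x\sim\U}[h(x)\neq h'_h(x)]\leq \epsilon$. Standard VC-theoretic covering bounds (Haussler/Dudley) give $|\H'|\leq (c/\epsilon)^{O(d)}$. The algorithm $\A$ is a Hedge / randomized experts algorithm run on $\H'$, whose expected regret against the best fixed expert in $\H'$ is $O\!\bigl(\sqrt{T\log|\H'|}\bigr) = O\!\bigl(\sqrt{Td\log(1/\epsilon)}\bigr)$ for any realized sequence.

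The main work is to bound the \emph{approximation error} of replacing the best $h\in\H$ by its proxy $h'_h\in\H'$, namely $\max_{h\in\H}\sum_{t=1}^T \mathbb{I}[h(X_t)\neq h'_h(X_t)]$. For each fixed pair $(h,h'_h)$ the map $S\mapsto \sum_{x\in S}\mathbb{I}[h(x)\neq h'_h(x)]$ is monotone in $S$, and taking the maximum over $h\in\H$ preserves monotonicity. This is exactly where the coupling is used: apply \cref{thm:main_coupling_overview} with $k = \Theta(\sigma^{-1}\log T)$, so that outside a failure event of probability $T(1-\sigma)^k \leq 1/T$ we have $\{X_1,\dots,X_T\}\subseteq \{Z_i^{(j)}:i\in[k],j\in[T]\}$. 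Monotonicity then yields
\[
\max_{h\in\H}\sum_{t=1}^T \mathbb{I}[h(X_t)\neq h'_h(X_t)] \ \leq\ \max_{h\in\H}\sum_{t=1}^T\sum_{i=1}^k \mathbb{I}\!\bigl[h(Z_i^{(t)})\neq h'_h(Z_i^{(t)})\bigr].
\]

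The right-hand side is a uniform-convergence quantity over $kT$ i.i.d.~uniform samples and the class of symmetric differences $\{h\Delta h'_h : h\in\H\}$, whose VC dimension is $O(d)$ by Sauer--Shelah. Each indicator has expectation at most $\epsilon$ under $\U$, so a standard Bernstein/VC uniform convergence inequality gives, with high probability and uniformly in $h$,
\[
\sum_{t,i}\mathbb{I}[h(Z_i^{(t)})\neq h'_h(Z_i^{(t)})] \ \leq\ \epsilon k T + O\!\Bigl(\sqrt{\epsilon kT\cdot d\log(kT)}\Bigr) + O\!\bigl(d\log(kT)\bigr).
\]
Combining the Hedge regret on $\H'$ with this approximation error and substituting $k=\Theta(\sigma^{-1}\log T)$, the expected regret of $\A$ is bounded by
\[
\tilde O\!\Bigl(\sqrt{Td\log(1/\epsilon)} + \epsilon T/\sigma + \sqrt{\epsilon T d/\sigma} + d\Bigr).
\]
Tuning $\epsilon = d\sigma/T$ equalizes the dominant terms and produces the claimed bound $\tilde O\!\bigl(\sqrt{Td\log(T/(d\sigma))} + d\log(T/(d\sigma))\bigr)$.

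The main obstacle, and precisely the point of the coupling, is the transition from the adaptively chosen and correlated sample $(X_1,\dots,X_T)$ to the i.i.d.~uniform sample $\{Z_i^{(t)}\}$: the naive per-step uniform convergence argument fails because the marginal of $X_t$ depends on the algorithm's state and on $X_1,\dots,X_{t-1}$, so the symmetric-difference indicators are neither independent nor identically distributed and standard chaining cannot be applied directly. Monotonicity of the set function $S\mapsto \max_{h\in\H}\sum_{x\in S}\mathbb{I}[h(x)\neq h'_h(x)]$ is what lets us pay only the modest price of inflating the sample by a $\tilde O(1/\sigma)$ factor and then invoke the clean i.i.d.~uniform VC bound; this inflation is precisely the source of the $\log(1/\sigma)$ factor in the final regret. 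A secondary but routine technical point is to ensure that the uniform convergence bound holds not just pointwise in $h$ but simultaneously over all $h\in\H$ via the symmetric-difference class, for which Sauer--Shelah on $\H\Delta\H'$ suffices.
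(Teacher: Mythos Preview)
Your proposal is correct and follows essentially the same route as the paper: the same regret decomposition into Hedge on an $\epsilon$-cover plus approximation error, the same monotonicity-plus-coupling step with $k=\Theta(\sigma^{-1}\log T)$ to replace the adaptive sample by $kT$ i.i.d.\ uniform points, and the same Bernstein-type VC uniform-convergence bound on the symmetric-difference class, followed by tuning $\epsilon\approx d\sigma/T$. The only nit is that your displayed ``combined'' bound drops the $d\log(kT)$ additive term you correctly recorded one line earlier (writing just $+\,d$), which is where the $d\ln(T/(d\sigma))$ in the final statement comes from; restoring it gives exactly the paper's bound.
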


        We complement this result by providing nearly matching lower bounds.
        We show that \cref{thm:regret-main} is tight up to a {multiplicative $ \mathrm{polylog}(T)$ and $\mathrm{polyloglog}(1/\sigma d)$ factors and an additive $  d  \log\left(   \nicefrac{T}{d \sigma }  \right)  $ term.}
        We provide a proof of \cref{thm:regret_lowerbound} in \cref{sec:regretlowerproof}. 

    \begin{theorem}[Regret lower bound] \label{thm:regret_lowerbound} 
        For every $d$ and $ \sigma $ such that $d \sigma\leq 1  $, there exists a hypothesis class  $ \H $ with VC dimension $d$ such that for any algorithm $\A$ there is a sequence of $\sigma$-smooth distributions $\D$ where     
               \begin{equation*}
              \Ex[\regret(\A, \D )] \in  \Omega\left(    \sqrt{dT \log \left( \frac{1}{\sigma d} \right) }  \right).
               \end{equation*}
    \end{theorem}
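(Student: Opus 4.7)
The plan is to reduce the $\sigma$-smooth lower bound to the classical worst-case online learning lower bound in terms of Littlestone dimension. The key observation is that when the domain is small enough (of total uniform measure at most $1/\sigma$), every point mass is already $\sigma$-smooth, so the smoothness constraint is vacuous and a mistake-tree adversary automatically qualifies as a $\sigma$-smooth adaptive adversary.

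Concretely, set $n = \lfloor 1/(\sigma d) \rfloor$ and take $\X = [d] \times \{0, 1, \dots, n\}$ equipped with the uniform measure $\U$. Consider the $d$-fold product of thresholds
\[
\H = \bigl\{ h_{\vec i}\; :\; \vec i = (i_1, \dots, i_d) \in \{0, 1, \dots, n\}^d \bigr\}, \qquad h_{\vec i}(j, x) = \mathbb{I}[x \leq i_j].
\]
Routine arguments give $\vc(\H) = d$ (shatter the $d$ points $\{(j, 1)\}_{j \in [d]}$ by choosing each $i_j$ to be $0$ or $\geq 1$ independently, and rule out $d+1$ shattered points by pigeonhole: two points in the same slice admit only three label patterns) and $\ldim(\H) = \Theta(d \log n) = \Theta(d \log(1/(\sigma d)))$ (by concatenating $d$ independent threshold mistake trees of depth $\lfloor \log_2(n+1) \rfloor$, with the matching upper bound coming from $\log_2 \card{\H}$).

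Next, I verify that smoothness is non-binding. The domain has $\card{\X} = d(n+1) \leq 1/\sigma$ by our choice of $n$, so $\U(\{x_0\}) = 1/\card{\X} \geq \sigma$ for every $x_0 \in \X$. Consequently, any point mass $\delta_{x_0}$ satisfies $\delta_{x_0}(A) \leq 1 \leq \U(A)/\sigma$ for all measurable $A$, so it is $\sigma$-smooth. By convexity, every distribution supported on $\X$ is $\sigma$-smooth, and in particular an adaptive adversary that plays arbitrary point masses --- chosen as a function of the learner's past predictions --- is a legitimate $\sigma$-smooth adaptive adversary.

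It then suffices to invoke the classical Littlestone lower bound (Ben-David, P\'al, Shalev-Shwartz): for any randomized online learner against a worst-case adversary on a class of Littlestone dimension $L$, the expected agnostic regret is $\Omega(\sqrt{LT})$ whenever $T \geq L$. Substituting $L = \Theta(d \log(1/(\sigma d)))$ completes the proof. The main obstacle is really just invoking the classical Littlestone lower bound correctly --- everything else is the elementary combinatorial construction and the smoothness check above. A minor subtlety worth attending to is the regime $T < d \log(1/(\sigma d))$, where the target bound is $O(T)$ and can be matched by a trivial argument using a small sub-tree of the mistake tree.
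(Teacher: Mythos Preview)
Your approach is essentially the same as the paper's: both construct the $d$-fold product of thresholds over a domain of size $\Theta(1/\sigma)$, verify that this class has VC dimension $d$ and Littlestone dimension $\Theta(d\log(1/(\sigma d)))$, arrange for the smoothness constraint to be vacuous, and then invoke the worst-case Littlestone lower bound of Ben-David--P\'al--Shalev-Shwartz. The only cosmetic difference is that the paper embeds the discrete class into $[0,1]$ and uses uniform distributions on length-$\sigma$ subintervals as its $\sigma$-smooth ``point masses,'' whereas you stay on a finite domain small enough that genuine point masses are $\sigma$-smooth; your size claim $d(n+1)\le 1/\sigma$ has a harmless off-by-one (e.g., take $\{1,\dots,n\}$ instead of $\{0,\dots,n\}$).
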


In order to prove \cref{thm:regret-main}, we follow the general approach for using our coupling theorem (\cref{thm:main_coupling_1}) as outlined in \cref{sec:recipe}. That is, in \cref{sec:regret-overview}, we first review the algorithmic result of \cite{haghtalab2018foundation} for obtaining regret bounds against \emph{non-adaptive} smooth adversaries and identify steps for which non-adaptivity is crucial for that approach. In \cref{sec:obli-adap}, we then alter those steps to work for adaptive smooth adversaries via the coupling argument. Lastly, in \cref{sec:regretmain}, we combine the steps to complete the proof of \cref{thm:regret-main}.

\subsection{Overview of Existing Approaches and their Need for Obliviousness}
\label{sec:regret-overview}
\cite{haghtalab2020smoothed,haghtalab2018foundation}
considered regret-minimization problem against non-adaptive smooth adversaries.
This approach considered an algorithm $\A$ that uses Hedge or any other standard no-regret algorithm on a finite set $\H'$. $\H'$ is chosen to be an $\epsilon$-cover of $\H$ with respect to the uniform distribution. 
It is not hard to see (e.g., \cite[Equation (1)]{haghtalab2020smoothed}) that regret of algorithm $\A$ decomposes to the regret of Hedge on the cover $\H'$ and the error caused by approximating $\H$ by its cover $\H'$ as follows.
\begin{equation}\label{eq:Regret_Decomp}
\Ex[\regret(\A, \adist )] \leq  O\left( \sqrt{T \ln(|\H'|)} \right)  + \mathbb{E}_{\adist}\left[\max_{h\in \H}\min_{h'\in \H'} \sum_{t=1}^T 1\left(h(x_t) \neq h'(x_t) \right)  \right]
\end{equation}
Given that any hypothesis class $\H$ has an $\epsilon$-cover of size $(41/\epsilon)^{\vcd(\H)}$ (see \cite{HAUSSLER1995217} or \cite[Lemma 13.6]{boucheron2013concentration}) the first term of \autoref{eq:Regret_Decomp} can be directly bounded by $O\left(\sqrt{T\ \vc(\H) \ln(1/\epsilon)} \right)$. To bound the second term of \autoref{eq:Regret_Decomp}, 
for any $h\in \H$ consider the $h' \in \H'$ that is the proxy for $h$, i.e., $g_{h, h'} = h\Delta h'$ is such that $\Ex_{x\sim U} [g_{h,h'}(x)]\leq \epsilon$, where $U$ is the uniform distribution over $\X$. Let $ \G = \{g_{h, h'}\mid \forall h\in \H \text{ and the corresponding proxy } h'\in \H' \}$. Note that, 
\begin{align}
\Ex_{\adist}\left[\sup_{h\in \H}\inf_{h'\in \H'} \sum_{t=1}^T 1\left(h(x_t) \neq h'(x_t) \right)  \right] \leq 
\Ex_{\adist}\left[ \sup_{g \in \G} \sum_{t = 1}^T g\left( x_t \right) \right].
\end{align}
Note that for any fixed $g_{h,h'}\in \G$ and even an adaptive sequence of $\sigma$-smooth distributions, 
$\Ex_{\adist}[\sum_{t=1}^T g_{h,h'}(x_t)] \leq \sigma^{-1} \Ex_{\U}[\sum_{t=1}^T g_{h,h'}(x_t)] \leq T\epsilon/\sigma$.

Up to this point, the above approach applies equally to adaptive and non-adaptive adversaries. 
It remains to establish that with small probability over all (infinitely many) functions in $\G$, the realized value of $g$ is close to its expected value. This is where existing approaches rely on obliviousness of the adversary. \emph{When the adversary is non-adaptive, instances $x_t\sim \D_t$ are independently (but not necessarily identically) distributed.} Existing approaches such as \cite{haghtalab2018foundation} leverage the independence between the instances to  use the double sampling and symmetrization tricks from VC theory and 
establish a uniform convergence property even when instances are not identically distributed. That is, when $\nadist$ is a \emph{non-adaptive} sequence of smooth distributions,
\begin{equation}
\Ex_{\nadist}\left[ \sup_{g \in \G} \sum_{t = 1}^T g\left( x_t \right) \right] \leq \frac{T\epsilon}{\sigma} + O\left( \sqrt{Td \ln  \left( \frac{T}{\sigma} \right)} \right)
\label{eq:nadist-bound}
\end{equation}
Using $\epsilon = \sigma T^{-1/2}$ in \autoref{eq:nadist-bound} and \autoref{eq:Regret_Decomp} gives an upper bound on the regret against an oblivious smooth adversary that only depends on VC dimension of $\H$ and the smoothness parameters.

\subsection{Reducing Adaptivity to Obliviousness via the Coupling} \label{sec:obli-adap}
We emphasize that \autoref{eq:nadist-bound} is the only step in existing approach that relies on the obliviousness of the adversary. In this section, we show how the coupling lemma can be used to obtain an upper bound analogous to the \autoref{eq:nadist-bound} for adaptive adversaries.  The main result of this section is as follows,

\begin{lemma} \label{lem:net_bound}
Let $\G$ be defined as described in \cref{sec:regret-overview}, $d = \vc(\H)$, and let $\adist$ be an adaptive sequence of $\sigma$-smooth distributions. We have
            \begin{equation*}
                \Ex_{\adist} \left[ \sup_{g \in \G  } \sum_{i = 1}^T g\left( x_i \right) \right] \leq O\left( \sqrt{ \frac{\epsilon}{\sigma} T \ln(T)\ d \ln\left( 1 / \epsilon \right) } + T \ln(T) \frac{\epsilon }{\sigma} \right) 
            \end{equation*}
            for any $\epsilon > \frac{\sigma d \log \left( 4 e^2 / \epsilon  \right)}{5 T \ln(T)}   $. 
\end{lemma}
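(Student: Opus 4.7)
The plan is to apply the coupling of \cref{thm:main_coupling_overview} to reduce the adaptive sum $\sup_{g\in\G}\sum_{j=1}^T g(x_j)$ to a sup over a much longer i.i.d.\ uniform sequence, and then invoke a classical VC-type uniform convergence bound on the latter. Concretely, I would instantiate the coupling with $k = \lceil c\,\sigma^{-1}\ln T\rceil$ for a suitable absolute constant $c$, so that the failure probability $T(1-\sigma)^k \leq T\exp(-\sigma k)$ is at most $1/T$. On the good event, every $X_j$ coincides with some $Z_{i_j}^{(j)}$, and since each $g\in\G = \{h\Delta h'_h : h\in\H\}$ is a $\{0,1\}$-valued indicator, non-negativity gives the pointwise domination
\[
\sum_{j=1}^T g(X_j) \;\leq\; \sum_{j=1}^T\sum_{i=1}^k g\!\left(Z_i^{(j)}\right)\qquad\text{for every } g\in\G,
\]
and hence the same inequality after taking $\sup_{g\in\G}$.

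Next I would analyze $W := \sup_{g\in\G}\sum_{j,i} g(Z_i^{(j)})$ where the $Z_i^{(j)}$ form $N := kT = \Theta(T\ln(T)/\sigma)$ i.i.d.\ uniform draws. Two facts drive the bound: (i) by the defining property of the cover $\H'$, every $g\in\G$ has $\Ex_\U[g]\leq\epsilon$; (ii) the family $\G$ of symmetric differences has VC dimension $O(d)$, so Haussler's packing lemma bounds the number of $\epsilon$-distinct functions in $\G$ by $(c/\epsilon)^{O(d)}$. Combining a multiplicative Chernoff/Bernstein tail for each representative with a union bound over these $(c/\epsilon)^{O(d)}$ representatives yields, for any $t>0$,
\[
\Pr\!\left[W \geq N\epsilon + t\right] \;\leq\; \Big(\tfrac{c}{\epsilon}\Big)^{O(d)} \exp\!\Big(-\tfrac{t^2}{2N\epsilon + 2t/3}\Big),
\]
so that with probability $1-\delta$ one has $W \leq N\epsilon + O\!\bigl(\sqrt{N\epsilon\cdot(d\ln(1/\epsilon)+\ln(1/\delta))} + d\ln(1/\epsilon)+\ln(1/\delta)\bigr)$.

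Finally I would pass from this high-probability bound to the expectation in the statement. Integrating the tail and handling the coupling-failure event by the trivial deterministic bound $\sup_g\sum_j g(X_j)\leq T$ (whose contribution is at most $T\cdot T(1-\sigma)^k\leq 1$), I obtain
\[
\Ex_{\adist}\!\left[\sup_{g\in\G}\sum_{j=1}^T g(x_j)\right] \;\leq\; N\epsilon + O\!\left(\sqrt{N\epsilon\cdot d\ln(1/\epsilon)}\right) + O\!\left(d\ln(1/\epsilon)\right) + 1.
\]
Substituting $N = \Theta(T\ln(T)/\sigma)$ gives $N\epsilon = \Theta(T\epsilon\ln(T)/\sigma)$ and the square-root term matches the claimed $\sqrt{(\epsilon/\sigma)\,T\ln(T)\cdot d\ln(1/\epsilon)}$. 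The hypothesis $\epsilon \geq \sigma d\ln(4e^2/\epsilon)/(5T\ln T)$ is precisely what forces $N\epsilon \gtrsim d\ln(1/\epsilon)$, so the additive $d\ln(1/\epsilon)$ correction is absorbed into the leading two terms.

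The main obstacle will be engineering the union bound so that the logarithmic factor under the square root is $\ln(1/\epsilon)$ rather than the naive Sauer--Shelah factor $\ln(N/d)=\ln(T\ln T/(\sigma d))$: this requires using Haussler's $\epsilon$-packing bound at the scale dictated by the variance proxy $N\epsilon$ instead of enumerating all dichotomies on the sample. A secondary, but routine, point will be verifying that $\vcd(\G) = O(d)$ and that the coupling preserves enough joint measurability to make the pointwise inequality above well-defined as a statement about suprema of indexed stochastic processes.
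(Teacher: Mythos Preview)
Your plan matches the paper's proof: instantiate the coupling with $k=\Theta(\sigma^{-1}\ln T)$, use nonnegativity of each $g\in\G$ to dominate $\sup_g\sum_j g(X_j)$ by $\sup_g\sum_{i,j} g(Z_i^{(j)})$ on the good event, absorb the bad event via the trivial bound $T$ times the coupling failure probability (the paper gets $T^2(1-\sigma)^{k}\in o(1)$ exactly as you do), and finish with a Bernstein-type uniform deviation bound for the $N=kT$ i.i.d.\ uniforms exploiting $\Ex_\U[g]\leq\epsilon$. That is precisely the two-claim decomposition the paper uses.

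The only real difference is how the uniform-convergence step is executed, and there your sketch has the gap you yourself flag. The paper does not argue this step directly; it invokes \cite[Theorem~13.7]{boucheron2013concentration}, which for any VC class of dimension $d$ with $\sup_g\Ex_\U[g]\leq\epsilon$ gives $\Ex\bigl[\sup_g\sum g(Z)\bigr]\leq O\bigl(\sqrt{N\epsilon\,d\ln(1/\epsilon)}\bigr)+N\epsilon$ under the side condition $\epsilon\gtrsim d\ln(1/\epsilon)/N$---this is exactly where the lemma's hypothesis comes from. Your proposed single-scale Haussler $L^1(\U)$-packing plus a union bound controls only the representatives; extending to all $g\in\G$ requires bounding $\sup_g\sum_i|g(Z_i)-g'(Z_i)|$, which is again a supremum over a class of VC dimension $O(d)$ with means $\leq\epsilon$, so the argument recurses at the same scale without gain. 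Closing this loop needs chaining (equivalently, the Dudley entropy integral with Haussler's covering bound plugged in), which is precisely what the cited theorem packages. So your ``main obstacle'' is real and is not resolved by choosing the packing scale at $\epsilon$; the cleanest fix is to invoke the same black box the paper uses rather than reproving it.
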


\begin{proof}[Proof of \cref{lem:net_bound}]
Here we bound the value of a  $T$-step adaptive process. To prove this lemma, we use the coupling described in \cref{sec:coupling_overview} to reduce the problem of bounding the value of a $T$-step adaptive process  by the value of the a $\tilde O(T/\sigma)$-step uniform process. We then  bound the value of the uniform process using the fact that  uniform process is an oblivious process.
\begin{claim} \label{claim:couple-regret} Let  $\alpha = 10 \ln(T)$ and $k = \alpha / \sigma$, and let $\U$ denote the uniform distribution over the domain. We have
\[ \Ex_{\adist} \left[ \sup_{g \in \G }\ \sum_{i = 1}^T g\left( x_i \right)  \right] \leq 
 T^2 \left( 1 - \sigma \right)^{\frac{\alpha}{\sigma}}  + \Ex_{\U}\, \left[ \sup_{g \in \G } \sum_{\substack{i \in[k]\\ j\in[T]}} g\left( Z^{\left( j \right)}_i \right)   \right]. 
 \]
\end{claim}
\begin{proof}[Proof of \cref{claim:couple-regret}]
        Consider the coupling $X_1 , \dots X_T , Z_1^{\left( 1 \right)} , \dots Z_k^{\left( T \right)} $ described in 
        \cref{sec:Coupling} for 
for $k = \alpha / \sigma$ and $\alpha = 10 \ln(T)$.
        We will denote this by $\Pi$.  
        First note that every $g\in \G$ is positive, since it is a  symmetric difference between two functions $h$ and $h'$. Therefore, for any two sets $A$ and $B$, such that $A \subseteq B$, we have
                \begin{equation*}
            \sup_{g \in \G  }  \sum_{x \in A } g(x) \leq \sup_{g \in \G  }  \sum_{x \in B } g(x)     
        \end{equation*}
        Let $\mathcal{E}$ denote the event $ \left\{ X_1 , \dots , X_T\right\} \nsubseteq \left\{ Z_i^{\left( j \right)} \mid { i \in \left[ k \right] , j\in \left[ T \right] }  \right\}$. 
        From \cref{thm:main_coupling}, we know that $\Pr\left[ \mathcal{E} \right] \leq T \left( 1 - \sigma \right)^{\frac{\alpha}{\sigma}}$. 
        Moreover, from \cref{thm:main_coupling} we have that $X_1 \dots X_T$ is distributed according to $\adist$ and $Z_i^{\left( j \right)}$ are i.i.d according to $\U$, thus
        \begin{equation}
            \Ex_{\adist} \left[ \sup_{g \in \G } \sum_{i = 1}^T g\left( x_i \right)  \right] = \Ex_{\Pi} \left[ \sup_{g \in \G } \sum_{i = 1}^T g\left( X_i \right)  \right]  \text{ and }
            \Ex_{\U}\left[ \sup_{g \in \G } \sum_{\substack{i \in[k]\\ j\in[T]}} g\left( Z^{\left( j \right)}_i \right)   \right] =             \Ex_{\Pi}\left[ \sup_{g \in \G } \sum_{\substack{i \in[k]\\ j\in[T]}} g\left( Z^{\left( j \right)}_i \right)   \right]
            \label{eq:coupling-pi-marginals}
        \end{equation}
        Next note that
        \begin{align*}
            \Ex_{ \Pi } \left[ \sup_{g \in \G } \sum_{i = 1}^T g\left( X_i \right)    \right] & = \Ex_{\Pi}\, \left[ \mathbb{I} \left( \mathcal{E} \right) \cdot \sup_{g \in \G } \sum_{i = 1}^T g\left( X_i \right)   \right] + \Ex_{\Pi}\, \left[ \mathbb{I} \left( \mathcal{\overline{E}} \right) \cdot \sup_{g \in \G } \sum_{i = 1}^T g\left( X_i \right)   \right] \\
                & \leq T^2 \left( 1 - \sigma \right)^{\frac{\alpha}{\sigma}}  +  \Ex_{\Pi}\, \left[ \mathbb{I} \left( \mathcal{\overline{E}} \right) \cdot \sup_{g \in \G } \sum_{i = 1}^T g\left( X_i \right)   \right] \\ 
                & \leq T^2 \left( 1 - \sigma \right)^{\frac{\alpha}{\sigma}}  + \Ex_{\Pi}\, \left[ \mathbb{I} \left( \mathcal{\overline{E}} \right) \cdot \sup_{g \in \G } \sum_{i,j} g\left( Z^{\left( j \right)}_i \right)   \right] \\ 
                & \leq T^2 \left( 1 - \sigma \right)^{\frac{\alpha}{\sigma}}  + \Ex_{\Pi}\, \left[ \sup_{g \in \G } \sum_{i,j} g\left( Z^{\left( j \right)}_i \right)   \right],
      \end{align*}
where the second transition uses the fact that $\Pr\left[ \mathcal{E} \right] \leq T \left( 1 - \sigma \right)^{\frac{\alpha}{\sigma}}$ and that $ \sup_{g \in \G } \sum_{i =1}^T g\left( X_i \right)  \leq T  $ given that $\forall g \in \G, g\left( x \right) \leq 1$. The third transition uses the fact that conditioned on $\mathcal{\overline{E}}$, $ \left\{ X_1 , \dots , X_T\right\} \subseteq \left\{ Z_i^{\left( j \right)} \mid { i \in \left[ k \right] , j\in \left[ T \right] }  \right\}$. Using \autoref{eq:coupling-pi-marginals} completes the proof of \cref{claim:couple-regret}.
 \end{proof}

\begin{claim} \label{claim:oblivious-bernstein-regret}
For  any $k$ and any $\epsilon > \frac{120d \log \left( 4 e^2 / \epsilon  \right)}{Tk }   $, we have
        \begin{equation*}
 \Ex_{\U}\, \left[ \sup_{g \in \G } ~ \sum_{\substack{i \in[k], j\in[T]}} g\left( Z^{\left( j \right)}_i \right)   \right]\leq 72 \sqrt{\epsilon\, T\, k\, d \log \left( 1 / \epsilon \right) } + T\, k\, \epsilon.
        \end{equation*}
\end{claim}
\begin{proof}[Proof sketch of \cref{claim:oblivious-bernstein-regret}]
The crux of this proof is that random variables $Z_{i}^{\left( j \right)}$ are drawn i.i.d. from the uniform distribution, therefore, standard VC theory arguments provide uniform convergence bounds for them.
We use Bernstein style uniform convergence bound and leverage the fact that for all $g\in \G$, $\Ex_{\U}[g(Z)] \leq \epsilon$ to get a variance that shrinks with $\epsilon$. 
In particular, the proof of this claim follows from \cite[Theorem 13.7]{boucheron2013concentration} and is included in \cref{app:claim:oblivious-bernstein-regret} for completeness.
\end{proof}
Combining \cref{claim:couple-regret} and \cref{claim:oblivious-bernstein-regret}, replacing in values of $\alpha = 10\ln(T)$, $k = \alpha/\sigma$, and $(1-\sigma)^{\alpha/\sigma} \leq \exp (-\alpha)$, we have that
\begin{align*}
\Ex_{\adist} \left[ \sup_{g \in \G } \sum_{i = 1}^T g\left( x_i \right)  \right] 
&\leq T^2 \exp(- \alpha) + O\left(    \sqrt{\frac{\epsilon}{\sigma} T \ln(T) d \log \left( 1 / \epsilon \right) } + T \ln(T) \frac{\epsilon}{\sigma}  \right) \\
& \asedit{\leq}  O\left(    \sqrt{\frac{\epsilon}{\sigma} T \ln(T) d \log \left( 1 / \epsilon \right) } + T \ln(T) \frac{\epsilon}{\sigma}  \right),
\end{align*}
where the last transition is due to $T^2 \exp(-10 \ln(T)) \in o(1)$. This completes the proof of \cref{lem:net_bound}.
    \end{proof}

\subsection{Proof of \cref{thm:regret-main}} \label{sec:regretmain}
The proof of \cref{thm:regret-main} follows the proof outline for oblivious smooth adversaries described with \cref{sec:regret-overview} with the exception of using \cref{lem:net_bound} that holds for adaptive smooth adversaries in place of \autoref{eq:nadist-bound} bound.

Let $d = \vc(\H)$. Using the regret decomposition \autoref{eq:Regret_Decomp}, an upper bound on the size of an $\epsilon$-cover such as $|\H|\leq (41/\epsilon)^d$ (see \cite{HAUSSLER1995217} or \cite[Lemma 13.6]{boucheron2013concentration}), and \cref{lem:net_bound}, we have
\begin{align*}
\Ex[\regret(\A, \adist )] 
&\leq  O\left( \sqrt{T d \ln\left(\frac 1\epsilon \right) } \right)  + \Ex_{\adist}\left[ \sup_{g \in \G} \sum_{t = 1}^T g\left( x_t \right) \right] \\
& \leq  O\left(   \sqrt{T d \ln\left(\frac 1\epsilon \right) } +   \sqrt{\frac{\epsilon}{\sigma} T \ln(T) d \log \left( 1 / \epsilon \right) } + T \ln(T) \frac{\epsilon}{\sigma}  \right),
\end{align*}

Recall that we needed $\epsilon > \frac{120d \sigma \log \left( 4 e^2 / \epsilon  \right)}{T \log T } $. This can be satisfied by setting $ \epsilon = O \left( \frac{d \sigma }{T \log T     } \log \left(\frac{T \log T     }{d \sigma } \right)  \right)  $ and we have that 
        \begin{equation*}
            \Ex[\regret(\A, \adist )] \leq \tilde{O} \left(\sqrt{Td \ln \left( \frac{T}{d \sigma } \right) } +  d   \ln\left( \frac{T }{d \sigma } \right)   \right)
        \end{equation*}
        as required.

\subsection{Proof of \cref{thm:regret_lowerbound}} \label{sec:regretlowerproof}

In this section, we provide a proof for the tightness of our regret bounds. 
In order to do this, we first formally define the notion of Littlestone dimension of a class.

\begin{definition}[Littlestone Dimension, \cite{ben2009agnostic}]
    Let $\X$ be an instance space and $\F$ be a hypothesis class on $\X$. 
    A mistake tree is a full binary decision tree whose internal nodes are labelled by elements of $\X$. 
    For every choice of labels $ \{ y_i \}_{i=1}^d $, 
    Every root to leaf path in the mistake tree corresponds to a sequence $ \{  \left( x_i , y_i  \right) \}_{i=1}^{d}$ by associating a label $y_i$ to a node depending on whether it is the left or right child of its parent.   
    A mistake tree of depth $d$ is said to be shattered by a class $\F$ if for any root to leaf path $ \{  \left( x_i , y_i  \right) \}_{i=1}^{d}$, there is a function $f \in \F $ such that $ f\left( x_i \right) = y_i  $ for all $i \leq d$. 
    The Littlestone dimension of the class $\F$ denoted by $\lsd \left( \F \right) $ is the largest depth of a mistake tree shattered by the class $\F$. 
\end{definition}

{As an example, the Littlestone dimension of the class of thresholds on $\{1, \dots, n\}$ is $\log_2(n)$.}
The following theorem shows that the Littlestone dimension captures the regret in the online learning game against a class. 
We will only need the lower bound but we will state the full theorem for completeness.   

\begin{theorem}[\cite{ben2009agnostic,RegretTight}]\label{thm:regretLD}
    Let $\X$ be an instance space and $\F$ be a hypothesis class on $\X$. 
    Then, there exists an online learning algorithm $\mathcal{A}$ such that 
    \begin{equation*}
        \regret \left( \mathcal{A} \right) \leq O\left(  \sqrt{   \lsd \left( \F \right) T  } \right).
    \end{equation*}
    Furthermore, for any algorithm $\mathcal{A}'$, we have that
    \begin{equation*}
        \regret \left( \mathcal{A} '  \right) \geq \Omega \left(   \sqrt{  \lsd \left( \F \right) T  } \right) . 
    \end{equation*} 
\end{theorem}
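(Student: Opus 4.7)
The plan is to establish both directions of \cref{thm:regretLD} with $d := \lsd(\F)$, relying on Littlestone's Standard Optimal Algorithm (SOA) for the upper bound and on the shattered mistake tree directly for the lower bound.

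\textbf{Upper bound.} First I would recall the realizable guarantee: at each round, SOA predicts the label $y \in \{0,1\}$ maximizing $\lsd(\F^{(t)}_y)$, where $\F^{(t)}_y$ is the subclass consistent with the past and assigning $y$ to the current instance. Since every mistake strictly shrinks the Littlestone dimension of the version space, SOA makes at most $d$ mistakes against any target in $\F$. To lift this to the agnostic regime, I would construct a family of SOA-based experts indexed by pairs $(I, \vec{y})$ with $I \subseteq [T]$, $|I| \leq d$, and $\vec{y} \in \{0,1\}^{|I|}$: the expert $(I,\vec{y})$ runs SOA on the version space, registering feedback only at the rounds in $I$ and feeding it the label $\vec{y}_t$ there. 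For any $h \in \F$, letting $I_h$ denote the set of rounds where SOA (fed with $h$'s labels) would err and $\vec{y}_h$ the vector of $h$-labels on those rounds, the expert $(I_h,\vec{y}_h)$ predicts $h(x_t)$ at every round. Running Hedge over these at most $\sum_{k\leq d}\binom{T}{k}2^k \leq (4eT/d)^d$ experts yields expected regret $O(\sqrt{Td\ln(T/d)})$; shaving the extraneous $\sqrt{\ln(T/d)}$ factor to match the stated $O(\sqrt{dT})$ requires the refined chaining-style analysis of~\cite{RegretTight}, which I would cite as a black box.

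\textbf{Lower bound.} Fix a mistake tree $\mathcal{T}$ of depth $d$ shattered by $\F$ and partition the $T$ rounds into $d$ blocks of length $n := T/d$ each. The adversary maintains a pointer $v$ starting at the root; in block $i$ it presents the instance labelling $v$ exactly $n$ times, each paired with an independent uniformly random label in $\{0,1\}$, and at the end of the block moves $v$ to the child corresponding to the majority label observed in that block. After $d$ blocks $v$ reaches a leaf, which by shattering is realised by some hypothesis $h^\star \in \F$. Since labels are revealed only after the learner commits, the expected cumulative loss of any algorithm is exactly $T/2$. On the other hand, the loss of $h^\star$ on block $i$ equals the number of \emph{minority} labels, whose expectation is $n/2 - \Theta(\sqrt{n})$ by the standard binomial deviation estimate. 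Summing, $\Ex[\mathrm{loss}(h^\star)] = T/2 - \Theta\bigl(d\sqrt{T/d}\bigr) = T/2 - \Theta(\sqrt{dT})$, so any algorithm has expected regret $\Omega(\sqrt{dT})$ against this randomised adversary; a Yao-style minimax argument then fixes a deterministic label sequence witnessing the same bound for any (possibly randomised) algorithm.

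\textbf{Where the difficulty lies.} The conceptual heart of the upper bound is the expert construction: the realisation that only $(eT/d)^{O(d)}$ experts are needed to shadow every $h \in \F$ along every sequence, which is what translates Littlestone's combinatorial mistake bound into an experts-style $\sqrt{T\log(\#\text{experts})}$ regret. Going from this to the tight $\sqrt{dT}$ (eliminating the $\sqrt{\log(T/d)}$) is the genuinely hard step and is where I would invoke~\cite{RegretTight} rather than re-derive. The lower bound is otherwise routine; the only non-automatic verification is that the per-block $\Theta(\sqrt{n})$ deviations accumulate across the $d$ blocks, which holds because the adversary always follows the majority, so each block contributes a strictly negative correction to $\mathrm{loss}(h^\star) - n/2$ rather than cancelling.
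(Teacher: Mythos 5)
This theorem is not proved in the paper at all: it is imported verbatim from \cite{ben2009agnostic,RegretTight} (and only its lower-bound half is actually used, in the proof of \cref{thm:regret_lowerbound}), so there is no in-paper argument to compare against. Your sketch is a faithful reconstruction of the standard arguments from those references: the upper bound via Hedge over SOA-based experts indexed by at most $\sum_{k\leq d}\binom{T}{k}2^{k}$ mistake patterns, and the lower bound via a shattered mistake tree of depth $d=\lsd(\F)$ with $d$ blocks of $T/d$ uniformly random labels, giving learner loss $T/2$ versus comparator loss $T/2-\Theta(\sqrt{dT})$. Both halves are essentially correct. Two small points: (i) in the expert construction, the set $I_h$ must be defined relative to the expert's own update dynamics (the rounds at which the expert's internal SOA, updated only at the previously selected rounds with $h$'s labels, disagrees with $h$); if you define $I_h$ from a reference run in which SOA is updated at \emph{every} round with $h$'s labels, the two version spaces need not coincide and the shadowing claim breaks --- the usual fix is either the adaptive definition above or the variant of \cite{ben2009agnostic} in which the expert feeds SOA its own prediction at every round, and either way each round in $I_h$ strictly decreases the Littlestone dimension of the expert's version space, so $|I_h|\leq d$; (ii) as you note, Hedge only yields $O\bigl(\sqrt{Td\ln(T/d)}\bigr)$, and removing the logarithmic factor to match the stated $O\bigl(\sqrt{\lsd(\F)\,T}\bigr)$ genuinely requires citing \cite{RegretTight} as a black box --- which is exactly what the paper itself does, and is harmless here since the paper's downstream use (\cref{thm:regret_lowerbound}) needs only the $\Omega\bigl(\sqrt{\lsd(\F)\,T}\bigr)$ lower bound that you do prove in full.
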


Using the above theorem, we lower bound the regret in the online learning against smoothed adversaries. 
We do this by reducing the smoothed case to the worst case for a related class and lower bound the worst case regret using the above theorem. 

\begin{proof}[Proof of \cref{thm:regret_lowerbound}]
    We will first construct a class on the domain $\left[ \nicefrac{1}{\sigma} \right] = \left\{ 1, \dots, \frac{1}{\sigma} \right\}$ with VC dimension $d$ and Littlestone dimension $\Theta\left( d \log \left( \nicefrac{1}{d \sigma} \right) \right) $. 
    For simplicity, assume $ \sigma^{-1}  $ and $ d $ to be powers of two. 
    Divide $ \left[ \nicefrac{1}{\sigma} \right] $ into $d $ subsets each of equal size, denoted by $A_i$. 
    On each of these subsets instantiate the class of thresholds, i.e., for each $\gamma \in A_i $, $ h_{\gamma} \left( x \right)  = \mathbb{I} \left[ x \geq \gamma \right] $ for $x \in A_i$ and $0 $ for $x \notin A_i$. 
    For a $d$-tuple of thresholds $ \left( h_{\gamma_1} \dots h_{\gamma_d} \right) $ with $\gamma_i \in A_i $, define the function 
    \begin{equation*}
        h_{ \gamma_1, \dots, \gamma_d  } \left( x \right) = \sum_{i=1}^d \mathbb{I} \left[ x \in A_i \right] h_{\gamma_i} \left( x \right). 
    \end{equation*} 
    This function can be seen as the union of the thresholds $h_{\gamma_i}$.
    Define $\mathcal{H}$ to be the class of all such functions. 
    Note that this class has VC dimension $d$. 
    The VC dimension is at most $d$ since if any more than $d$ points would mean at least one of the $A_i$ must have two points but this cannot be shattered by thresholds on $A_i$. 
    The VC dimension can be seen to be at least $d$ by taking one point in each of the $A_i$.  

{    We claim that this class has Littlestone dimension $\Theta\left( d \log \left( \nicefrac{1}{\sigma d} \right) \right) $. At a high level, the Littlestone dimension of the class of thresholds defined over $A_i$ is $\log_2(\nicefrac{1}{\sigma d})$. Moreover, our definition of a $d$-tuple threshold is a disjoint union of $d$ thresholds. This allows us to combine the mistake trees for $A_1, \dots, A_d$, by gluing a copy of the mistake tree for $A_{i+1}$ at each of the leaves of the mistake tree for $A_i$, recursively. This results in a mistake tree of depth $\Theta\left( d \log \left( \nicefrac{1}{\sigma d} \right) \right) $. For more detail, see \cref{lem:Littlestone}.
}

    Next consider the set $\left[ 0,1 \right]$ and divide it into contiguous subintervals of length $ \sigma $. 
    We define the projection function $ \Pi : \left[ 0,1 \right] \to \left[ \nicefrac{1}{\sigma} \right] $ by $ \Pi\left( x \right) = i  $ if $x$ is in the $i$th subinterval. 
    Define the class $ \G$ on $\left[ 0,1 \right] $ by composing $ \H $ with $ \Pi $, i.e., $ \G = \left\{  g : g = h \circ \Pi \right\} $. 
    Note that the uniform distribution on each subinterval is $ \sigma $-smooth. 
    Thus, in a smoothed online learning game with the class $\G $, an adversary who plays only uniform distributions on the subintervals defined above corresponds to an adversary in the worst-case online learning game on $ \left[ \nicefrac{1}{\sigma} \right] $ against class $\H$. 
    In particular, any algorithm for $\G$ against such an adversary can be converted to an algorithm for $\H$ with the same regret. 
    From \cref{thm:regretLD}, we have that the regret against $\H$ is lower bounded by 
    \begin{equation*}
        \sqrt{ T \ldim\left( \H \right) } = \sqrt{ dT \log\left( \nicefrac{1}{\sigma d} \right)  }
    \end{equation*}
    Thus, the regret in the smoothed online learning game for $\G$ is lower bounded by $ \sqrt{ dT \log\left( \nicefrac{1}{\sigma d} \right)  }$ as required. {We note that this reduction goes through even for non-adaptive smooth adversaries.}
\end{proof}

    \section{Discrepancy} \label{sec:Discrepancy}

In this section, we consider the online vector balancing problem with adaptive smooth adversaries 
and achieves bounds that are almost as small as the stochastic setting where instances are drawn from the uniform distributions. 

Recall that in the online vector balancing or discrepancy problem,
at every round $t$ the algorithm see a new vector $X_t$ with bounded norm and has to assign a sign $\epsilon_t \in \{-1,1\}$ to it.  The goal of the algorithm is to ensure that for all $t\leq  T$, 
\begin{equation*}
    \norm{ \sum_{i = 1}^t \epsilon_i X_i }_{\infty}
\end{equation*}
is small.
This problem is studied under different choice of norms, but we restrict our our discussion to the infinity norm. 
In the adaptive adversarial model, where the adversary's choice of 
 vector $X_t$ could depend on the past choices of the algorithm and the adversary, i.e., $\epsilon_1, \dots, \epsilon_{t-1}$ and $X_1, \dots, X_{t-1}$, no algorithm can obtain discrepancy bound of  $O\left( \sqrt{T} \right)$.
On the other hand, recent works of \cite{Bansal_Discrepancy} and \cite{ALS_Disc} have shown that 
$\mathrm{polylog}(nT)$ discrepancy bounds are achievable when $X_t$s are drawn from a fixed distribution or are fixed by an oblivious adversary in advance.

We consider the online discrepancy problem under against an adaptive $\sigma$-smooth adversary.
That is, the adversary chooses a $\sigma$-smooth distribution for $X_t$ after having observed $\epsilon_1, \dots, \epsilon_{t-1}$ and $X_1, \dots, X_{t-1}$. We also restrict our attention to the isotropic case where the covariance matrix $\Ex_{X_t} \left[ X_t X_t^{\top} \right] = cI$ for some $c$. 

In this section, we give discrepancy bounds that smoothly interpolate between the stochastic and adaptive cases.

\begin{theorem} \label{thm:main_discrepancy}
    Let $\adist$ be an adaptive sequence of $\sigma$-smooth distributions, such that the distribution of $X_i$, with $\norm{X_i} \leq 1 $, at time $i$ is decided after observing $X_1 , \dots , X_{i-1} , \epsilon_1 ,  \dots , \epsilon_{i-1} $. {Furthermore, let $\Ex_{X_t} \left[ X_t X_t^{\top} \right] = cI $ for some some $c\in[0,1/n]$.}
    Then, there is an online algorithm for deciding the sign $\epsilon_i$ of $X_i$ such that with probability $1 - T^{-4}$ for all $t\leq T$
    \begin{equation*}
        \norm{ \sum_{i = 1}^t \epsilon_i X_i   } \leq O\left( \log^2 \left( \frac{ T n  }{ \sigma} \right) \right). 
    \end{equation*}
\end{theorem}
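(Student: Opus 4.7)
The plan is to adapt the self-balancing walk analysis of \cite{Bansal_Discrepancy,bansal2020online}, which gives polylogarithmic discrepancy for i.i.d.~stochastic inputs uniform on the unit $\ell_2$ ball, to the adaptive smooth setting by means of the coupling of \cref{thm:main_coupling_overview}. Concretely, I run the following algorithm: maintain the potential
\[
\Phi_t \;=\; \sum_{j=1}^{n}\cosh\bigl(\lambda\,\langle d_t,e_j\rangle\bigr),\qquad d_t \;:=\; \sum_{i\le t}\epsilon_i X_i,
\]
with $\lambda=\Theta\bigl(1/\log(nT/\sigma)\bigr)$; upon receiving $X_t$, the sign $\epsilon_t\in\{\pm 1\}$ is chosen as a biased coin flip whose bias is exactly the one that, in \cite{bansal2020online}, makes $\Ex_{\epsilon_t}[\Phi_t-\Phi_{t-1}\mid X_t,d_{t-1}]$ small. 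A high-probability bound of the form $\Phi_T\le\mathrm{poly}(nT/\sigma)$ then yields $\|d_t\|_\infty\le O(\log^2(nT/\sigma))$ for all $t\le T$ via Markov, which is what \cref{thm:main_discrepancy} asks for.

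In the i.i.d.\ stochastic regime, the per-step inequality $\Ex[\Phi_t\mid d_{t-1}]\le(1+\eta_t)\Phi_{t-1}$ is established by expanding $\cosh$ around $\langle d_{t-1},e_j\rangle$: the first-order term (linear in $\epsilon_t$) is killed by the sign choice, and the remaining second-order ``variance'' term is controlled by two ingredients. The first is the isotropy bound $\Ex[\langle u,X_t\rangle^2]\le 1/n$ for every unit $u$, which holds against the adaptive adversary directly by the theorem's hypothesis on $\D_t$. The second is an exponential-moment bound of the form $\Ex\bigl[\exp(\lambda\langle d_{t-1},X_t\rangle)\bigr]\le 1+O(\lambda^2\|d_{t-1}\|^2/n)$, and this is exactly where existing analyses rely on the independence of $X_t$ from $d_{t-1}$.

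To restore this last ingredient against an adaptive adversary, I invoke \cref{thm:main_coupling_overview} with $k=\Theta(\log T/\sigma)$, coupling each $X_t$ with uniform samples $Z_1^{(t)},\dots,Z_k^{(t)}$ on the unit ball. On the monotonicity event $X_t\in\{Z_i^{(t)}\}$, the pointwise inequality
\[
\exp\bigl(\lambda\,\langle d_{t-1},X_t\rangle\bigr)\;\le\;\sum_{i=1}^{k}\exp\bigl(\lambda\,\langle d_{t-1},Z_i^{(t)}\rangle\bigr)
\]
replaces the adaptive exponential moment by a sum of $k$ moments of genuinely uniform vectors, each independent of $d_{t-1}$ by the conditional-uniformity clause of the coupling. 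Summing over $i$ inflates the variance term by at most the factor $k=\tilde\Theta(1/\sigma)$, which is precisely what converts the oblivious-stochastic $\log(nT)$ into the adaptive $\log^2(nT/\sigma)$. Iterating the one-step bound and taking a union bound over the $T$ coupling-failure events (total probability at most $T(1-\sigma)^{k}\le T^{-3}$) then yields the claimed high-probability control on $\Phi_T$.

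The main obstacle will be verifying that this $k$-fold inflation really only multiplies the variance term by $k$, rather than by some larger power, once the $Z_i^{(t)}$ are substituted for $X_t$: the cross-terms in the expanded variance must be controlled using the fact that the $Z_i^{(t)}$s are independent of $d_{t-1}$, and the sign $\epsilon_t$, which is a function of $X_t$, must be handled carefully in the coupled space, since the first-order cancellation that killed the linear term in the oblivious analysis cannot be redone straightforwardly with the $Z_i^{(t)}$s. A secondary subtlety is that getting the exact $\log^2(nT/\sigma)$ bound likely requires replacing the coordinate potential written above by the refined random-test-direction potential of \cite{bansal2020online}; the lift via the coupling is mechanical in either case, since only the exponential-moment step is affected by adaptivity.
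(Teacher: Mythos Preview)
Your proposal shares the paper's central idea---use the coupling to transfer the tail control of $d_{t-1}^\top X_t$ from the uniform case to the adaptive case---but it misidentifies both the algorithm and the role that tail control plays in the analysis, and as written the argument does not close.

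First, the paper does \emph{not} use the randomized self-balancing walk; it uses the deterministic greedy rule of \cite{Bansal_Discrepancy}: $\epsilon_t$ minimizes $\Phi_t-\Phi_{t-1}$, so the linear term in the Taylor expansion becomes $-|L(X_t)|$, not zero.  This sign is the engine of the whole proof.  The quadratic term $Q$ contributes $+c\lambda^2\,\Ex_{W\sim p}\bigl[|\sinh(\lambda d_{t-1}^\top W)|\bigr]$, and the analysis needs the \emph{negative} linear term to cancel it.  If you kill the linear term (as your biased-coin description suggests), you are left with $\Ex[\Phi_t\mid d_{t-1}]\le(1+c\lambda^2)\Phi_{t-1}$, and after $T$ steps $\Phi_T\approx\exp(c\lambda^2 T)$, which for $\lambda=\Theta(1/\log(nT/\sigma))$ is $\exp\bigl(\Theta(T/\log^2(nT/\sigma))\bigr)$---exponential in $T$, not $\mathrm{poly}(nT/\sigma)$.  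The coupling is therefore used not to ``control the variance term'' but to \emph{lower bound} $\Ex_{X_t}[|L(X_t)|]$: one writes $|L|\ge L\cdot f/\|f\|_\infty$ with $f(X_t)=d_{t-1}^\top X_t\cdot\mathbb{I}[|\lambda d_{t-1}^\top X_t|\le B]$, and the tail bound $\Pr[\lambda d_{t-1}^\top X_t>B]\le(1-\sigma)^k+4k\Phi_{t-1}e^{-B}$ (obtained exactly via your pointwise inequality under the coupling, plus Markov) is what lets you take $B=O(\log(k\Phi_{t-1}))$ and conclude $\Ex[|L|]\gtrsim \frac{c\lambda}{B}\Ex_W[|\sinh(\lambda d_{t-1}^\top W)|]$.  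Choosing $\lambda\approx 1/B$ then makes $-|L|+Q\le O(1)$.

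Second, the ``secondary subtlety'' you flag is in fact primary.  With the coordinate-only potential $\sum_j\cosh(\lambda\langle d_t,e_j\rangle)$, the step that converts $\sum_i\exp(\lambda d_{t-1}^\top Z_i^{(t)})$ into a multiple of $\Phi_{t-1}$ has no content: the $Z_i^{(t)}$ are uniform on the ball, while $\Phi_{t-1}$ only measures the basis directions.  The paper's mixture potential $\Phi_t=\Ex_{W\sim p}[\cosh(\lambda d_t^\top W)]$ with $p$ placing probability $1/2$ on the uniform distribution is precisely what makes the Markov step go through, since then $\Ex\bigl[\sum_i\cosh(\lambda d_{t-1}^\top Z_i^{(t)})\bigr]=k\,\Ex_{W\sim\mathcal{U}}[\cosh(\lambda d_{t-1}^\top W)]\le 2k\Phi_{t-1}$.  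So the refined potential is not an optional sharpening but the mechanism by which the coupling connects to the drift analysis at all.
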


{We complement this upper bound by showing that
we cannot get the logarithmic dependence on smoothness parameter $\sigma $, $n$ and $T$ simultaneously without further assumptions on the distribution such as isotropy. 
}

\begin{theorem}\label{thm:disc_lowerbound}
    For any online algorithm, there is an adaptive sequence of $ \left(  \nicefrac{1}{20 n^2T^2} \right) $-smooth distributions on the unit ball such that,  we have 
    \begin{equation*}
        \norm{ \sum_{i=1}^T  \epsilon_i v_i }_{\infty} \geq \Omega \left( \sqrt{\frac{T}{n} } \right)
    \end{equation*}
    with probability $1 - \exp\left( - \nicefrac{T}{12} \right)$. 
\end{theorem}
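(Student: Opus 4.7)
}
The plan is to have the adversary choose vectors that are nearly orthogonal to the current discrepancy vector $d_{t-1} := \sum_{i<t} \epsilon_i v_i$, so that the squared $\ell_2$ norm $\|d_t\|_2^2$ grows by roughly $\|v_t\|_2^2$ at each step regardless of the sign $\epsilon_t$ chosen by the algorithm. After $T$ steps this yields $\|d_T\|_2 = \Omega(\sqrt{T})$ and hence $\|d_T\|_\infty \geq \|d_T\|_2 / \sqrt{n} = \Omega(\sqrt{T/n})$. Concretely, at round $t$, let $u_{t-1} := d_{t-1}/\|d_{t-1}\|_2$ (any fixed unit vector if $d_{t-1}=0$), and let the adversary's distribution $\D_t$ be uniform on the slab-in-ball
\[
S_{t-1} := \bigSet{v \in \br^n : \|v\|_2 \leq 1,\ |\ip{v}{u_{t-1}}| \leq c},
\]
for a constant $c$ of order $1/(n^2 T^2)$. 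Since $d_{t-1}$ is determined by $(v_1,\epsilon_1,\dots,v_{t-1},\epsilon_{t-1})$, this choice is admissible for an adaptive adversary.

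First I would verify smoothness. Writing $v = s u_{t-1} + w$ with $w \perp u_{t-1}$ and integrating in these cylindrical coordinates gives $\mathrm{vol}(S_{t-1}) \geq 2c\, V_{n-1}(1-c^2)^{(n-1)/2}$, where $V_k$ is the volume of the unit ball in $\br^k$. Thus the ratio to $V_n$ is at least a constant multiple of $c \cdot V_{n-1}/V_n$. Since $V_{n-1}/V_n \geq 1/2$ for every $n \geq 1$ (a direct computation from $V_n = \pi^{n/2}/\Gamma(n/2+1)$), calibrating the hidden constant in $c$ makes $\mathrm{vol}(S_{t-1})/V_n \geq 1/(20n^2 T^2) = \sigma$, which is exactly the $\sigma$-smoothness condition for the uniform distribution on $S_{t-1}$.

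Next I would analyze the potential $\Phi_t := \|d_t\|_2^2$ via the identity $\Phi_t - \Phi_{t-1} = \|v_t\|_2^2 + 2\epsilon_t \ip{v_t}{d_{t-1}}$. Because $v_t \in S_{t-1}$, the cross term satisfies $|\ip{v_t}{d_{t-1}}| \leq c\|d_{t-1}\|_2 \leq cT$ using the trivial bound $\|d_{t-1}\|_2 \leq t-1 \leq T$, and the choice of $c$ makes $cT = O(1/(n^2 T))$ negligible. Meanwhile, a direct calculation on the slab-in-ball shows $\Ex_{v_t \sim \D_t}[\|v_t\|_2^2] \geq \frac{n-1}{n+1}(1 - O(c^2)) \geq 1/3$ for $n \geq 2$. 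Consequently $\Ex[\Phi_t - \Phi_{t-1} \mid \mathcal{F}_{t-1}] \geq 1/4$, independent of how the algorithm selects $\epsilon_t$ as a function of $v_t$ and the history.

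Finally, since the increments $\Phi_t - \Phi_{t-1}$ lie in a bounded interval of length $O(1)$, I would apply Azuma-Hoeffding to the martingale $\Phi_t - \Ex[\Phi_t \mid \mathcal{F}_{t-1}]$ to obtain $\Pr[\Phi_T < T/12] \leq \exp(-T/12)$ after tuning constants. On this event, $\|d_T\|_\infty \geq \|d_T\|_2 / \sqrt{n} \geq \sqrt{T/(12n)} = \Omega(\sqrt{T/n})$, as claimed. The main technical obstacle is choosing $c$ so that both the smoothness threshold $\mathrm{vol}(S_{t-1})/V_n \geq \sigma$ and the near-orthogonality bound $cT \ll 1$ hold simultaneously; once $c \asymp 1/(n^2 T^2)$ is identified, the remaining steps reduce to a standard volume estimate and a routine martingale concentration argument.
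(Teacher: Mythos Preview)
Your proposal is correct and follows essentially the same approach as the paper: the adversary plays the uniform distribution on the thin slab $\{v:\|v\|_2\le 1,\ |\langle v,u_{t-1}\rangle|\le c\}$ with $c\asymp 1/(n^2T^2)$, a volume estimate gives $\sigma$-smoothness, and tracking $\|d_t\|_2^2$ shows it grows linearly regardless of the algorithm's signs. The only notable difference is the concentration step: the paper observes that the norms $\|v_t\|_2$ are i.i.d.\ (the slab shape is a rotation of a fixed body, so only the direction depends on the history) and applies a Chernoff bound to the indicators $\mathbb{I}[\|v_t\|_2\ge 1/2]$, whereas you apply Azuma--Hoeffding to the potential increments directly; both yield $\exp(-\Omega(T))$ failure probability, with the paper's independence observation giving slightly cleaner constants.
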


\subsection{Overview of Existing Approaches and their Need for Obliviousness}
\label{sec:Discrepancy-bansal}

    \cite{Bansal_Discrepancy} consider various versions of the online discrepancy problem where
the vectors are chosen stochastically from a fixed {known}  distribution. One such problem is the stochastic online variant of the Komlos problem, where the input vectors come from a fixed distribution supported on the unit Euclidean ball, and the algorithms goal is to minimize the infinity norm of the discrepancy vector, i.e.,  $\| d_t\|_\infty$.
 To do this,    \cite{Bansal_Discrepancy} introduced the following potential function 
    \begin{equation*}
        \Phi_t  = \Ex_{ W \sim p }\left[  \cosh \left( \lambda d_t^{\top} W \right)\right],
    \end{equation*}
    where $p$ denotes a mixture between sampling from the fixed distribution the vectors are drawn from and the basis vectors $e_i$s. {
This potential can be seen as the exponential moment of the random variable $d_{t-1}^{\top} W$ that both bounds $ \lambda d_{t-1}^{\top}X_t \leq O \left(\log \left( T \Phi_{t-1} \right) \right) $
and induces an anti-concentration constraint on the correlations of the discrepancy
 vector $d_{t-1}$.}
\cite{Bansal_Discrepancy} then uses an algorithm that at time $t$ observes $X_t$ and picks the sign $\epsilon_t$ that minimizes the increase in the potential function $\Phi_t - \Phi_{t-1}$, that is $\Delta \Phi = \Ex_{ W \sim p }\big[  \cosh\big( \lambda (d_{t-1} +\epsilon_t X_t) ^{\top} W \big)\big] - \Ex_{ W \sim p }\big[  \cosh \big( \lambda d_{t-1}^{\top} W \big)\big]$.
    At the heart of the analysis of \cite{Bansal_Discrepancy} is to show that in expectation over the choice of$X_t$ from the fixed distribution, $\Delta\Phi$ remains small at every time step. It is not hard to see that once the expected increase in the potential is upper bounded, standard martingale techniques can be used to bound the potential and thus the discrepancy at every time step.

To bound $\Delta \Phi$, \cite{Bansal_Discrepancy} considers Taylor expansion of the potential function as follows
    \begin{equation}
        \Delta \Phi \lesssim  \epsilon_t \lambda \Ex_{ W \sim p }\left[ \sinh \left( \lambda d_{t - 1}^{\top }  W   \right) X_t^{\top } W \right] + \lambda^2  \Ex_{ W \sim p }\left[\,  \abs{\sinh \left( \lambda d_{t - 1}^{\top }  W   \right)} \cdot  W^{\top} X_t X_t^{\top} W \right]. 
        \label{eq:Bansal-decomp}
    \end{equation} 
\cite{Bansal_Discrepancy} leverages the the obliviousness of the adversary, i.e., the fact that $X_t$ arrive from a fixed distribution, and isotropy of $X$ to directly bound the linear and quadratic terms of the Taylor expansion as follows.

The second term of \autoref{eq:Bansal-decomp} is bounded using the isotropy of the vector $X_t$ as follows
    \begin{equation*}
        \lambda^2  \Ex_{ W \sim p }  \abs{\sinh \left( \lambda d_{t - 1}^{\top }  W   \right)} W^{\top} X_t X_t^{\top} W  \leq \frac{1}{n }   \lambda^2  \Ex_{ W \sim p }  \abs{\sinh \left( \lambda d_{t - 1}^{\top }  W   \right)}. 
    \end{equation*}

As for the first term of \autoref{eq:Bansal-decomp}, note that since the algorithm picks $\epsilon_t$ to minimize the potential rise, it is sufficient to upper bound 
$         \Ex_{X_t} \left[ -  \big| \lambda \Ex_{ W \sim p } \big[ \sinh \big( \lambda d_{t - 1}^{\top }  W   \big) X_t^{\top } W \big] \big|  \right].
$
    Since the potential is the exponential moment of the $\lambda d_{t-1}^{\top}X_t $ and $X_t$s are drawn from an oblivious distribution, we have that $ \lambda d_{t-1}^{\top}X_t \leq O \left(\log \left( T \Phi_{t-1} \right) \right) $ with high probability. Thus, we get 
    \begin{align*}
        \Ex_{X_t} \left[\,  \abs{\lambda \Ex_{ W \sim p }  \sinh \left( \lambda d_{t - 1}^{\top }  W   \right) X_t^{\top } W}\right] & \gtrsim \frac{1}{\ln(T\Phi_{t-1 })} \Ex_{X_t} \left[  \lambda^2 \Ex_{ W \sim p } \left[  \sinh \left( \lambda d_{t - 1}^{\top }  W   \right)  d_{t-1}^{\top} X_t X_t^{\top } W \right]  \right] \\ 
        & \gtrsim \frac{ \lambda }{n  \ln(T \Phi_{t-1 })} \Ex_{ W \sim p }\left[\, \abs{\sinh \left( \lambda d_{t - 1}^{\top }  W   \right)} - 2 \right]. 
    \end{align*}
    using the fact that $ a \sinh (a) \geq \abs{\sinh \left( a \right)} - 2 $ and the isotopy of the distribution. 
    Summing these two terms, we get 
    \begin{align*}
        \Delta \Phi & \lesssim  - \frac{ \lambda }{n  \ln(T \Phi_{t-1 })} \Ex_{ W \sim p }\left[\, \abs{\sinh \left( \lambda d_{t - 1}^{\top }  W   \right)} - 2 \right] + \frac{1}{n }   \lambda^2  \Ex_{ W \sim p }  \abs{\sinh \left( \lambda d_{t - 1}^{\top }  W   \right)} \\ 
        & \lesssim 2. 
    \end{align*}
    We get we choose $\lambda$ such that $\lambda^{-1} \leq \log\left( T \Phi_{t-1} \right) $ if $\Phi \leq \poly\left( T \right) $. 
    This tells us that that if the potential is small, then the change in the potential is small as required.

Let us now review the steps where the obliviousness of the adversary was crucial for the analysis of \cite{Bansal_Discrepancy}. The main step is the definition and the interpretation of the potential function, that controls the moments of $d_{t-1}^{\top} X_t $ assuming that $X_t$ comes from a fixed distribution and the future vector that are represented in $W\sim p$.  That is, obliviousness is primarily used to show that $\lambda d_{t-1}^\top X_t \leq O(\ln(T \Phi_{t-1}))$. 
In an adaptive (smooth) setting where the distribution of $X_t$ and the future vectors differ and are unknown an adversary can correlate $X_t$ and the future vectors with $d_{t-1}$. 
It is not immediately clear how to directly adapt the potential function to account for the an evolving sequence of distributions.
A possible approach for directly altering the potential function is to work with worst-case evolution of smooth distribution across a single time step. This seems both algorithmically challenging to deal with and as we see next unnecessary.

    \subsection{From Adaptive to Oblivious through Coupling}
\label{sec:discrepency:smooth}    
    
    We emphasize that the main step in which \cite{Bansal_Discrepancy} leveraged the obliviousness of the adversary is to show that their potential function defined over random $X_t$ and a random $W\sim p$ that balances between future observations and the standard basis 
    has the property that $\lambda d_{t-1}^\top X_t \leq O(\ln(T \Phi_{t-1}))$.   
We use the coupling argument to show that a similarly defined potential function in our case also demonstrate the same bounds.  
The main 
observation that allows us to move from the oblivious adversary to the adaptive adversary is that the coupling discussed in \cref{sec:coupling_overview} gives us a way to upper bound the probability that 
$d_{t-1}^\top X_t$ is large under an adaptive sequence of smooth distributions in terms of the probability under the uniform distribution.

Let us start by defining the algorithm that obtains our results of \cref{thm:main_discrepancy} analogously to the algorithm of \cite{Bansal_Discrepancy} for the uniform distribution. 
At step $t$, our algorithm observes vectors the discrepancy vector $d_{t-1}$ (which is a function 
of $\epsilon_{1}\dots, \epsilon_{t-1}$ and the previous vectors) and receives a new vector $X_t$ that is to be colored. Let $\epsilon_t$ denote the sign that our algorithm will assign to $X_t$ and let $d_t = d_{t-1} + \epsilon_t X_t$. Let $p$ denote the following distribution. 
        \begin{equation*}
            \begin{cases} 
            Z \sim \mathcal{U} & \text{ with probability } \frac{1}{2} \\ 
             e_i  \text{ where } e_i \sim p_{\mathrm{basis}} & \text{ with probability } \frac{1}{2}
            \end{cases}, 
        \end{equation*}
        where $p_{\mathrm{basis}}$ is the uniform distribution on the standard basis vectors (with both positive and negative signs). Defined the potential function 
        \begin{equation*}
            \Phi_t = \Ex_{ W \sim p }\left[\cosh \left( \lambda d_t^{\top} W \right) \right],
        \end{equation*}
        for $\lambda = 1000 \ln\left( knT \right) $ where $k$ is a parameter to be set later. 
        At step $t$ observing $X_t$ our algorithm greedily picks the $\epsilon_t$ minimizes the potential difference, that is
        \[ \Phi_t - \Phi_{t-1} = \Ex_{W\sim p}\left[\cosh \left( \lambda (d_{t-1} + \epsilon_t X_t)^{\top} W \right) \right]  - \Ex_{W\sim p} \left[\cosh \left( \lambda d_{t-1}^{\top} W \right) \right].\]

The following lemma uses the coupling argument to bound the probability tails of $d_{t-1}^\top X_t$.

        \begin{lemma} \label{lem:prob_upper}
           Consider any fixed $d_{t-1}$ vector and $X_t$ that is  sampled from an arbitrary $\sigma$-smooth distribution.             Then, 
                        \begin{equation*}
                \Pr_{X_t}\left[ \lambda d_{t-1}^{\top} X_t \geq 4  \ln \left( \frac{4 k \Phi_{t-1}}{\delta} \right) \right] \leq \left( 1 - \sigma \right)^k + \delta . 
            \end{equation*}
        \end{lemma}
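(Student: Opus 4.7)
My plan is to reduce the tail bound for the $\sigma$-smooth $X_t$ to a tail bound for $k$ i.i.d.\ uniform samples by invoking the single-round version of the coupling from \cref{sec:coupling_overview}. Concretely, $d_{t-1}$ is fixed, and the single-round coupling provides i.i.d.\ uniform random variables $Z_1,\dots,Z_k\sim\mathcal{U}$, jointly distributed with $X_t$ in such a way that $X_t \in \{Z_1,\dots,Z_k\}$ except on an event $\mathcal{E}$ of probability at most $(1-\sigma)^k$. On the complement of $\mathcal{E}$ we have $\lambda d_{t-1}^\top X_t \leq \max_{i\in[k]} \lambda d_{t-1}^\top Z_i$, so it suffices to bound the probability that some $Z_i$ witnesses a large positive correlation with $d_{t-1}$.

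For the uniform tail, I would exploit that the sampling distribution $p$ used in defining $\Phi_{t-1}$ places weight $1/2$ on the uniform distribution $\mathcal{U}$, which immediately gives
\[
\Ex_{Z\sim\mathcal{U}}\bigl[\cosh(\lambda d_{t-1}^\top Z)\bigr] \leq 2\Phi_{t-1}.
\]
Since $\cosh(x) \geq e^x/2$ for $x\geq 0$, Markov's inequality yields, for every $c\geq 0$,
\[
\Pr_{Z\sim\mathcal{U}}\bigl[\lambda d_{t-1}^\top Z \geq c\bigr] \;\leq\; \Pr_{Z\sim\mathcal{U}}\bigl[\cosh(\lambda d_{t-1}^\top Z) \geq e^c/2\bigr] \;\leq\; \frac{4\Phi_{t-1}}{e^c}.
\]
Setting $c = \ln(4k\Phi_{t-1}/\delta)$ gives a per-sample tail of at most $\delta/k$, and a union bound over the $k$ uniform samples gives $\Pr[\exists\, i : \lambda d_{t-1}^\top Z_i \geq c] \leq \delta$.

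Combining this with the coupling failure event yields
\[
\Pr_{X_t}\bigl[\lambda d_{t-1}^\top X_t \geq \ln(4k\Phi_{t-1}/\delta)\bigr] \;\leq\; (1-\sigma)^k + \delta,
\]
which is strictly sharper than the lemma statement; the threshold $4\ln(4k\Phi_{t-1}/\delta)$ absorbs a constant factor of slack, presumably retained for convenience when this bound is later plugged into the Taylor expansion of the potential change. The only step I expect to need a brief justification is extracting a failure probability of exactly $(1-\sigma)^k$: \cref{thm:main_coupling_overview} states $t(1-\sigma)^k$ because of a union bound over $t$ rounds, but here $d_{t-1}$ is already conditioned upon and only the single-round construction of \cref{sec:coupling_overview} is used, which directly yields $(1-\sigma)^k$ with no extra $t$ factor.
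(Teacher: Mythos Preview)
Your proposal is correct and follows essentially the same approach as the paper: invoke the single-round coupling to replace $X_t$ by $k$ i.i.d.\ uniform samples up to a $(1-\sigma)^k$ failure event, then control the uniform tail via the $\cosh$ moment bound $\Ex_{Z\sim\mathcal{U}}[\cosh(\lambda d_{t-1}^\top Z)]\le 2\Phi_{t-1}$ that comes from the $\tfrac12$ mass $p$ places on $\mathcal{U}$. The only cosmetic difference is that the paper bounds $\exp(\lambda d_{t-1}^\top X_t)\le \sum_{i=1}^k \exp(\lambda d_{t-1}^\top Z_i)$ and applies Markov to the sum, whereas you bound by the max and union-bound; both routes produce the same $4k\Phi_{t-1}e^{-\theta}$ term, and your observation that the threshold $\ln(4k\Phi_{t-1}/\delta)$ already suffices (so the factor~$4$ in the statement is slack) matches what the paper's own proof establishes.
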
 
        \begin{proof}
            We will use the coupling from \cref{sec:main_coupling}. 
            In particular, we can use a single-step coupling from \cref{lem:single_round} that shows that there exists a coupling $\Pi$ on $\left( \tilde{X_t} , Z_1^{\left( t \right)} , \dots , Z_k^{\left( t \right)}  \right)$ such that $\tilde{X_t}$ has the same distribution as $X_t$, $Z_1^{\left( t \right)} , \dots , Z_k^{\left( t \right)}$ are uniformly and independently distributed and with probability at most $ \left( 1 - \sigma \right)^k $, we have $ \tilde{X_t}  \notin \left\{  Z_1^{\left( t \right)} , \dots , Z_k^{\left( t \right)} \right\}   $. 
            Let $\mathcal{E}$ denote the event  where $ \tilde{X_t}  \notin \left\{  Z_1^{\left( t \right)} , \dots , Z_k^{\left( t \right)} \right\}   $. Then, for any $\theta$ 
            \begin{align*}
                \Pr_{X_t}\left[ \lambda d_{t-1}^{\top} X_t \geq \theta \right] & = \Pr\left[ \exp\left(\lambda d_{t-1}^{\top} X_t \right) \geq \exp(\theta)  \right] \\
                & = \Pr_{\Pi} \left[ \mathcal{E} \land \left\{ \exp\left( \lambda d_{t-1}^{\top} \tilde{X}_t  \right)\geq \exp(\theta) \right\}  \right] + \Pr_{\Pi}\left[ \overline{\mathcal{E} } \land  \left\{ \exp\left(\lambda d_{t-1}^{\top} \tilde{X}_t \right) \geq \exp(\theta) \right\} \right] \\ 
                & \leq  \left( 1 - \sigma \right)^{k} + \Pr_{\Pi}\left[ \overline{\mathcal{E} } \land \left\{ \sum_{i=1}^{k} \exp\left( {\lambda d_{t-1}^{\top} Z_{i}^{\left( t \right)} } \right) \geq \exp(\theta)  \right\} \right] \\ 
                & \leq  \left( 1 - \sigma \right)^{k} + \Pr_{\Pi}\left[  \sum_{i=1}^{k} \exp\left({\lambda d_{t-1}^{\top} Z_{i}^{\left( t \right)} } \right) \geq \exp(\theta) \right] \\ 
                & \leq \left( 1 - \sigma \right)^{k} + \exp(-\theta) \Ex_{\Pi}\left[ \sum_{i=1}^{k} \exp\left({\lambda d_{t-1}^{\top} Z_{i}^{\left( t \right)} } \right)  \right] \qquad \text{(By Markov inequality)}\\ 
                & \leq \left( 1 - \sigma \right)^{k} + 2 \exp(-\theta)   \Ex_{\Pi}\left[ \sum_{i=1}^{k} \cosh\left( \lambda d_{t-1}^{\top} Z_{i}^{(t)} \right)  \right]   \qquad  \text{(By} \exp(x)\leq 2\cosh (x) \text{)}\\                
                & \leq \left( 1 - \sigma \right)^{k} + 4 \exp(-\theta)   \sum_{i=1}^{k} \Ex_{W\sim p}\left[ \cosh\left( \lambda d_{t-1}^{\top} W \right)  \right]   \qquad  \text{($p$ is w.p. $0.5$ uniform)} 
              \\                
                & \leq \left( 1 - \sigma \right)^{k} + 4 k \Phi_{t-1} \exp(-\theta),
            \end{align*}

            Setting $\theta = \ln\left( \frac{4k \Phi_{t-1}}{\delta} \right)$ completes the proof.
        \end{proof}

    \subsection{Proof of \cref{thm:main_discrepancy}}
Our proof follows the same approach as that of ~\cite{Bansal_Discrepancy} outlined in \cref{sec:Discrepancy-bansal} and aims to bound $\Ex_{X_t}[\Phi_t] - \Phi_{t-1}$ at every time step. The main technical challenge is to upperbound the linear term $\Ex_{X_t}[-|L(X_t)|]$ in $\Delta \Phi_t$ as a function of the correlation between $d_{t-1}$ and $X_t$ drawn from a $\sigma$-smooth distribution. We then use our \cref{lem:prob_upper} that controls this correlation to bound the linear term. 

Recall from \cref{sec:discrepency:smooth} that our algorithm
 observes $X_t$ and picks the $\epsilon_t$ that minimizes the potential difference, that is
        \[ \Phi_t - \Phi_{t-1} = \Ex_{W\sim p}\left[\cosh \left( \lambda (d_{t-1} + \epsilon_t X_t)^{\top} W \right) \right]  - \Ex_{W\sim p} \left[\cosh \left( \lambda d_{t-1}^{\top} W \right) \right].\]
        
The next lemma shows that when the potential at time $t-1$ is small, the expected increase in $\Phi_t$ over the choice of $X_t$ is small.  

        \begin{lemma} \label{lem:potential}
            At any time $t$, if $\Phi_{t-1} \leq T^6 $, then $ \Ex_{X_t } \left[ \Phi_t \right] - \Phi_{t-1} \leq 2  $. 
        \end{lemma}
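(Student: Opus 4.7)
The plan is to follow the potential-function analysis of~\cite{Bansal_Discrepancy} sketched in \cref{sec:Discrepancy-bansal}, using \cref{lem:prob_upper} as the drop-in replacement for the obliviousness-based tail control on $\lambda d_{t-1}^{\top}X_t$. First, I would apply the hyperbolic identity $\cosh(a+\epsilon b)=\cosh(a)\cosh(b)+\epsilon\sinh(a)\sinh(b)$ inside $\Ex_{W\sim p}$ with $a=\lambda d_{t-1}^{\top}W$ and $b=\lambda X_t^{\top}W$. Because the algorithm picks $\epsilon_t\in\{-1,+1\}$ greedily to minimize $\Phi_t$, this gives
\[
\Phi_t-\Phi_{t-1}\leq Q(X_t)-\abs{L(X_t)},
\]
where $Q(X_t)=\Ex_{W\sim p}[\cosh(\lambda d_{t-1}^{\top}W)(\cosh(\lambda X_t^{\top}W)-1)]$ and $L(X_t)=\Ex_{W\sim p}[\sinh(\lambda d_{t-1}^{\top}W)\sinh(\lambda X_t^{\top}W)]$. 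The remaining goal is to verify $\Ex_{X_t}[Q(X_t)]-\Ex_{X_t}[\abs{L(X_t)}]\leq 2$.

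For the quadratic piece, the inequality $\cosh(b)-1\leq \tfrac{b^2}{2}\cosh(b)$ combined with isotropy $\Ex_{X_t}[X_tX_t^{\top}]\preceq(1/n)I$ and $\norm{X_t},\norm{W}\leq 1$ yields $\Ex_{X_t}[Q(X_t)]\lesssim (\lambda^2/n)\Phi_{t-1}$, modulo a tail contribution from $\abs{\lambda X_t^{\top}W}\gg 1$ that is absorbed by choosing the coupling parameter $k=\Theta(\log(nT)/\sigma)$ so that $(1-\sigma)^k$ is polynomially small. For the linear piece, I would invoke the sign-matching trick: for scalars $y,z$ with $\mathrm{sgn}(z)=\mathrm{sgn}(y)$ and $\abs{z}\leq\theta$, one has $\abs{y}\geq yz/\theta$. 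Taking $z=\lambda d_{t-1}^{\top}X_t$ with $\theta=O(\log(kT\Phi_{t-1}))$ --- which, thanks to \cref{lem:prob_upper}, is a valid high-probability bound on $\abs{z}$ --- and linearizing $\sinh(\lambda X_t^{\top}W)\approx \lambda X_t^{\top}W$ on the same good event, reduces $\Ex_{X_t}[\abs{L(X_t)}]$ to a bilinear form in $X_t$ that isotropy evaluates as
\[
\Ex_{X_t}[\abs{L(X_t)}]\gtrsim \frac{\lambda^2}{n\theta}\,\Ex_{W\sim p}\bigl[\,\abs{\sinh(\lambda d_{t-1}^{\top}W)}\,\bigr]-O(1),
\]
exactly mirroring the oblivious bound in~\cite{Bansal_Discrepancy}. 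Combining, the $1/\theta$ factor in the linear term cancels one $\log$ factor in $\lambda=1000\ln(knT)$, and the residual $\lambda/(n\theta)$ dominates the quadratic $\lambda^2/n$ contribution after a standard short calculation, leaving $\Ex_{X_t}[\Phi_t]-\Phi_{t-1}\leq 2$.

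The main obstacle is keeping the failure-mode contribution below a constant, since on the bad event of \cref{lem:prob_upper} the factor $\cosh(\lambda X_t^{\top}W)$ can be as large as $\cosh(\lambda)\approx(knT)^{\Theta(1)}$. The hypothesis $\Phi_{t-1}\leq T^6$ is used precisely here: it guarantees that the threshold $\theta=O(\log(kT\Phi_{t-1}))$ entering \cref{lem:prob_upper} stays polylogarithmic in $T$, so that the failure probability $(1-\sigma)^k+4k\Phi_{t-1}e^{-\theta}$ can be tuned to overwhelm the worst-case $\cosh$ factor. Once this balance is struck, verifying that all constants compose to the clean bound $2$ (as opposed to a slowly growing function of $t$) is a routine bookkeeping exercise, and the rest of the proof proceeds as in~\cite{Bansal_Discrepancy}.
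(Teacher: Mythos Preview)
Your approach is essentially the paper's: decompose $\Delta\Phi$ into a linear and a quadratic piece, control the quadratic by isotropy, and lower-bound the linear via the sign-matching trick together with the tail bound of \cref{lem:prob_upper}. Using the exact identity $\cosh(a+\epsilon b)=\cosh a\cosh b+\epsilon\sinh a\sinh b$ rather than the Taylor bound of~\cite{Bansal_Discrepancy} is a cosmetic difference. There are, however, two genuine gaps in what you wrote.

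\textbf{The cancellation does not close as stated.} Your quadratic bound is $\Ex_{X_t}[Q]\lesssim c\lambda^2\,\Phi_{t-1}=c\lambda^2\,\Ex_{W}[\cosh(\lambda d_{t-1}^{\top}W)]$, while the linear gain is $\Ex_{X_t}[|L|]\gtrsim \tfrac{c\lambda}{\theta}\,\Ex_{W}[|\sinh(\lambda d_{t-1}^{\top}W)|]$. These are expressed in terms of different quantities: $\cosh\geq|\sinh|$, so ``$\lambda/(n\theta)$ dominates $\lambda^2/n$'' does not by itself let the $|\sinh|$ term absorb the $\cosh$ term; taken literally, $(\lambda^2/n)\Phi_{t-1}$ with $\Phi_{t-1}\leq T^6$ is enormous, not $\leq 2$. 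The missing step is $\cosh(x)=|\sinh(x)|+e^{-|x|}\leq|\sinh(x)|+1$, which converts your $Q$-bound into the paper's form $c\lambda^2\,\Ex_{W}[|\sinh(\lambda d_{t-1}^{\top}W)|]+c\lambda^2$; then the cancellation via $\lambda\leq 1/\theta$ goes through exactly as in the paper. The paper's Taylor decomposition lands directly on $|\sinh|$ in the quadratic term, which is why this issue does not arise there.

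\textbf{Confusion about the size of $\lambda$.} The paper's line ``$\lambda=1000\ln(knT)$'' is a typo; every subsequent use has $\lambda^{-1}=1000\ln(knT)$, so $\lambda\ll 1$. Consequently your worries about ``$|\lambda X_t^{\top}W|\gg 1$'', about needing the good event to linearize $\sinh(\lambda X_t^{\top}W)$, and about a bad-event factor $\cosh(\lambda)\approx(knT)^{\Theta(1)}$ are all unfounded: since $\|X_t\|,\|W\|\leq 1$, we have $|\lambda X_t^{\top}W|\leq\lambda<1$ deterministically. The tail control from \cref{lem:prob_upper} is needed only for $\lambda d_{t-1}^{\top}X_t$ (where $\|d_{t-1}\|$ may be large), not for $\lambda X_t^{\top}W$. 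With these two points fixed, your argument coincides with the paper's.
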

        \begin{proof}
            Denote $ \Delta \Phi = \Phi_t - \Phi_{t-1}  $. 
            As in \cite{Bansal_Discrepancy}, we decompose this as 
            \begin{align*}
&                \Delta \Phi \left( X_t \right) = \Ex_{ W \sim p } \left[  \cosh \left( \lambda (d_{t - 1}^{\top} + \epsilon_t X_t ) W \right) \right] - \Ex_{ W \sim p } \left[ \cosh \left( \lambda (d_{t - 1}^{\top}  ) W \right) \right] \\ 
                &  \leq \epsilon_t \lambda \Ex_{ W \sim p }  \left[ \sinh \left( \lambda d_{t - 1}^{\top }  W   \right) X_t^{\top } W \right]  + \lambda^2  \Ex_{ W \sim p } \left[\  \abs{\sinh \left( \lambda d_{t - 1}^{\top }  W   \right)} W^{\top} X_t X_t^{\top} W  \right] +  \lambda^2  \Ex_{ W \sim p }   \left[ W^{\top} X_t X_t^{\top} W\right]. 
            \end{align*}
            Using notation similar to \cite{Bansal_Discrepancy}, we will denote the first term in last equation as $\epsilon_t L\left( X_t \right) $, the second as $Q \left( X_t \right)$ and the third as $Q_{*} \left( X_t \right)$. 
            We need to upper bound $ \Ex_{X_t} \left[ \Delta \Phi(X_t) \right] $ and thus it suffices to bound these three quantities.

Our approach for upper bounding            $\Ex_{X_t} \left[ Q \left( X_t \right)  \right] $ and $\Ex_{X_t} \left[ Q_{*} \left( X_t \right)  \right]$ is similar to \cite{Bansal_Discrepancy} and uses that fact that the distribution of $X$ is isotropic (without the need to bring in smoothness). 
We state these bounds in the following claim and include the proof of them for completeness in \cref{sec:miss_proof_disc}. 

\begin{claim}\label{claim:lem:quadratics}
                Let $Q$ and $Q_{*}$ be defined as above. 
                Then, 
                \begin{equation*}
                    \Ex_{X_t} \left[ Q \left( X_t \right)  \right] \leq   c \lambda^2  \Ex_{ W \sim p } \left[\  \abs{\sinh \left( \lambda d_{t - 1}^{\top }  W   \right)}\ \right]
          \quad \text{ and } \quad 
                    \Ex_{X_t} \left[ Q_{*} \left( X_t \right)  \right] \leq  \frac{ c\lambda^2 }{n}. 
                \end{equation*}
            \end{claim}
           To upper bound $\Ex[\epsilon_tL(X_t)]$, we need to use both the smoothness of $X_t$ and their isotropic nature. 
First note that since $\epsilon_t$ is chosen to minimize the potential drop, we can bound $\Ex_{X_t} \left[ \epsilon_t L\left( X_t \right) \right] \leq -  \Ex_{X_t} \left[ \ \abs{L\left( X_t \right) }\right]$. So it's sufficient to lower bound $\Ex_{X_t} \left[\abs{L(X_t)} \right]$.
            \begin{claim} \label{claim:lem:linear}
Let $L$  be defined as above. 
                Then,              
                   \begin{equation*}
                     \Ex_{X_t} \left[\  \abs{L\left( X_t \right) }\right] \geq  \frac{c\lambda}{\ln\left( 4k \Phi_{t-1} / \delta \right)} \Ex_{ W \sim p }\left[\  \abs{ \sinh \left( \lambda d_{t - 1}^{\top }  W   \right)  } \right] - 1
                \end{equation*}
            \end{claim}
            \begin{proof}[Proof of \cref{claim:lem:linear}]
  Let $B =  \ln\left( 4k \Phi_{t-1} / \delta \right)$ and let $G  $ be the event that $\lambda \abs{d_{t-1}^{\top} X_t  } \leq B$.                              
                  Note that $\abs{L\left( X_t \right)} \geq L\left( X_t \right) \cdot f\left( X_t \right) / \norm{f}_{\infty}  $ for any function $f$.                
                We will use the function $f \left( X_t \right) = d_{t-1}^{\top} X_t \cdot \mathbb{I} \left[ X_t \in G \right] $ and note that $\norm{f}_\infty\leq B/\lambda$.
               This allows us to decompose $ \abs{L}$ further as follows.
                    \begin{align*}
 &                   \Ex_{X_t} \left[\, \abs{L \left( X_t \right) } \right] \geq \Ex_{X_t} \left[  \frac{\lambda^2}{B } \Ex_{ W \sim p } \left[ \sinh \left( \lambda d_{t - 1}^{\top }  W   \right) d_{t - 1}^{\top} X_t X_t^{\top} W \cdot \mathbb{I} \left( X_t \in G \right) \right] \right] \\ 
                    &=   \frac{\lambda^2}{B } \Ex_{ W \sim p }\left[  \sinh \left( \lambda d_{t - 1}^{\top }  W   \right) d_{t - 1}^{\top} \Ex_{X_t} [X_t X_t^{\top}] W \right] -  \frac{\lambda^2}{B } \Ex_{ W \sim p } \left[ \sinh \left( \lambda d_{t - 1}^{\top }  W   \right) d_{t - 1}^{\top}  \Ex_{X_t} \left[ X_t X_t^{\top} \mathbb{I} \left( X_t \notin G \right)  \right]  W \right]. 
                \end{align*}
                Looking at the second term in the above equation and using the fact that $X$ is an isotropic distribution and \cref{lem:prob_upper} (which used the smoothness of $X$), we have 
                \begin{equation*}
                   \norm{ \Ex_{X_t} \left[ X_t X_t^{\top} \mathbb{I} \left( X_t \notin G \right)   \right] }_{\mathrm{op}} \leq \Pr\left[ X_t \notin G  \right] \leq \left( 1 - \sigma \right)^k +  \delta. 
                \end{equation*}
                Ensuring that $k >> \sigma^{-1} \ln \left( 1 /\delta  \right) $ by $k = 100 \sigma^{-1} \ln \left( T \ln \left( T \right) \right) $ and noting that $\norm{d_{t-1}} \leq T $
                \begin{equation*}
                    d_{t - 1}^{\top}  \Ex_{X_t} \left[ X_t X_t^{\top} \mathbb{I} \left( X_t \notin G \right)  \right]  W \leq 2 \delta T . 
                \end{equation*}
                Picking $\delta^{-1} = 2\lambda \Phi_{t-1} T  $, we get
                 \begin{equation*}
                   \lambda \abs{ d_{t - 1}^{\top}  \Ex_{X_t} \left[ X_t X_t^{\top} \mathbb{I} \left( X_t \notin G \right) \right]  W } \leq \Phi_{t-1}^{-1} . 
                 \end{equation*}
                Now let us consider the first term of the above decomposition. Using the fact that $X$ is an isotropic random variable, we have 
                \begin{align*}
                    \frac{\lambda^2}{B } \Ex_{ W \sim p } \left[ \sinh \left( \lambda d_{t - 1}^{\top }  W   \right) d_{t - 1}^{\top} \Ex_{X_t} [X_t X_t^{\top}] W \right] & = \frac{ c\lambda}{B } \Ex_{ W \sim p }\left[ \sinh \left( \lambda d_{t - 1}^{\top }  W   \right)  \lambda d_{t - 1}^{\top} W \right] \\ 
                    & \geq \frac{ c\lambda}{B } \Ex_{ W \sim p }\left[\,  \abs{ \sinh \left( \lambda d_{t - 1}^{\top }  W   \right)  } -2 \right],
                \end{align*}
                where the last inequality used the fact that $ a\sinh\left( a \right) \geq \abs{\sinh(a)} -2 $. 
                Putting the inequalities together, we get 
                \begin{align*}
                    \Ex_{X_t} \left[  \abs{L\left( X_t \right) }\right]   & \geq \frac{c\lambda}{B  } \Ex_{ W \sim p } \left[\,  \abs{ \sinh \left( \lambda d_{t - 1}^{\top }  W   \right)  }  -2\right]  - \frac{c\lambda}{B} \Phi_{t-1}^{-1} \Ex_{ W \sim p } \left[\,  \abs{\sinh \left( \lambda d_{t - 1}^{\top }  W   \right) }  \right] \\ 
                    & \geq \frac{c\lambda}{B } \Ex_{ W \sim p } \left[\, \abs{ \sinh \left( \lambda d_{t - 1}^{\top }  W   \right)  }\right] - \frac{2 c \lambda}{B}  - \frac{\lambda}{B} \\ 
                    & \geq \frac{c \lambda}{B } \Ex_{ W \sim p }\left[\,  \abs{ \sinh \left( \lambda d_{t - 1}^{\top }  W   \right)  } \right] - 1,
                \end{align*}
              where the second transition is by the definition of $\Phi_{t-1}$ and the third transition is by the values of $\lambda^{-1} = 1000\ln(knT)$, $B = \ln(8\lambda k T \Phi^2_{t-1})$, and the assumption that $\Phi_{t-1}\leq T^6$. This completes the proof of \cref{claim:lem:linear}.

            \end{proof}
            We now use \cref{claim:lem:quadratics} and \cref{claim:lem:quadratics} to finish the proof of \cref{lem:potential} as follows
            \begin{align*}
                \Ex_{X_t} \left[ \Delta \Phi \left( X_t \right)  \right] & \leq \Ex_{X_t} \left[ - \abs{L} + Q + Q_{*} \right] \\ 
                & \leq -\frac{c\lambda}{B } \Ex_{ W \sim p }\left[\,  \abs{ \sinh \left( \lambda d_{t - 1}^{\top }  W   \right)  } \right]+ 1 +   c \lambda^2  \Ex_{ W \sim p } \left[\, \abs{\sinh \left( \lambda d_{t - 1}^{\top }  W   \right)} \right]+ \frac{c\lambda^2 }{n} \\ 
                & \leq 2 
            \end{align*}
            Here, we use the fact that $\lambda \leq B^{-1} $ which {follows from
 $\lambda^{-1} = 1000\ln(knT)$, $B = \ln(8\lambda k T \Phi^2_{t-1})$, and the assumption that $\Phi_{t-1}\leq T^6$.} This completes the proof of \cref{lem:potential}.
   \end{proof}
        
        Note that the above argument gives us $ \Ex_{X_t} \left[ \Delta \Phi | \Phi_{t-1}  \right] \leq 2 $ given that $ \Phi_{t-1} \leq  T^6 $. 
        We truncate $\Phi_t$ at $ T^6 $, i.e. setting $ \tilde{\Phi_t} = \Phi_t $ till $\Phi_t \leq T^6 $ and $ \tilde{\Phi}_t = T^6 $ afterwards.
        Using this and the Doob maximal martingale inequality, it follows that $ \Phi_t \leq T^6 $ with probability $1 - T^4$ as required.  

        Next, we will see why bounding the potential suffices to bound the discrepancy. 
        Recall that the potential was defined as $\Phi_t = \Ex_{ W \sim p }\left[ \cosh \left( \lambda d_t^{\top} W_i \right) \right]$.
        Since with probability $1/2 $, $p$ samples uniformly from the set of basis vectors $p_{basis}$ and given that $\exp(x) \leq 2 \cosh(x) $, we have 
  $  \exp\left( \lambda \abs{ d_{t}^{\top} {e_i}  } \right)  \leq \sum_{i=1}^n \exp\left( \lambda \abs{ d_{t}^{\top} {e_j}  } \right) \leq 8 n \Phi_t $ for all basis  vectors $e_j$. 
        Thus, we have 
        \begin{equation*}
            \norm{ d_t   }_{\infty} = \norm{ \sum_{i=1}^t \epsilon_i X_i  } \leq \lambda^{-1} \ln\left( 4n \Phi_t  \right). 
        \end{equation*}
        Recall that  $\lambda^{-1} = 1000 \ln \left( \frac{nT \ln(T)}{\sigma} \right)$, which gives us that 
        \begin{equation*}
            \norm{ \sum_{i=1}^t \epsilon_i X_i  } \leq \tilde{O} \left(  \ln^2 \left( \frac{nT}{\sigma} \right)  \right)
        \end{equation*}
        as required.

        \subsection{Proof of \cref{thm:disc_lowerbound}}

            Here, we show that the isotropy condition is required for our online discrepancy upper bound.        Recall that the worst-case adversary for discrepancy generated vectors that were orthogonal to the current discrepancy vector at each time. 
            The idea for this proof is that even with the smoothness requirements, the adversary can generate vectors such that the inner products are concentrated near zero, leading to high discrepancy. 
            Let the discrepancy vector at time $t$ be denoted by $d_t$. 
            Consider the set $S_t = \left\{ x : \norm{x}_2 \leq 1, \abs{\ip{x}{ {d_{t-1}} } } \leq n^{-2} T^{-2} \norm{d_{t-1}}_2   \right\}$. 
            Note that the uniform distribution on $S_t$ is $ c n^{-2}T^{-2}  $ smooth for some constant $c$.  
            To see this, let $ \mathcal{U} $ denote the uniform distribution on the unit ball and let $V_n$ denote the volume of the unit ball in $n$ dimensions. 
            Then, 
            \begin{align*}
                \Pr_{ X \sim \mathcal{U} } \left[ X \in S_t \right] &= \frac{1}{V_n} \int_{-n^{-2}T^{-2} }^{n^{-2} T^{-2}} \left( 1 - x^2 \right)^{\frac{n-1}{2}} V_{n-1} dx \\ 
                & \geq \frac{1}{V_n} \int_{-n^{-2}T^{-2} }^{n^{-2} T^{-2}} \left( 1 - \frac{1}{n^4 T^4}  \right)^{\frac{n-1}{2}} V_{n-1} dx \\ 
                & \geq \frac{V_{n-1}}{V_n} \cdot  \frac{1}{ 2 n^{2}T^2 } \\ 
                & \geq \frac{1}{ 20 n^{2}T^2 }. 
            \end{align*}
            The second inequality follows by noting that $ \left( 1 - n^{-4}T^{-4} \right)^{\frac{n-1}{2}}  \geq \nicefrac{1}{4} $. 
            With this, we describe the adversary's strategy. 
            At time $t$, the adversary picks $v_t$ uniformly from $S_t$. 
            We will measure the squared $2$-norm of the discrepancy vector. 
            \begin{align*}
                \norm{d_{t}}^2_2 &= \norm{ \epsilon_t v_t + d_{t-1} }^2_2 \\ 
                & = \epsilon_t^2 \norm{v_t}^2_2 + \norm{d_{t-1}}^2_2 + 2 \ip{v_t}{d_t} \\ 
                & \geq \norm{v_t}^2_2 +   \norm{d_{t-1}}^2_2 - \frac{2 \norm{d_{t-1}}_2  }{n^2T^2} \\ 
                & \geq \norm{v_t}^2_2 +\norm{d_{t-1} }^2_2  - \frac{2}{ n^2 T } \\ 
                & \geq \sum_{i=1}^t \norm{v_i}_2^2  - \frac{2t}{n^2 T} . 
            \end{align*}
            Note that $ \Pr\left[ \norm{v_i}_2 \leq \nicefrac{1}{2}  \right] \leq 2^{-\left( n-1 \right)} $. 
            This can be seen by noting that the probability can be computed with an integral similar to the one above but with ball of radius $\nicefrac{1}{2}$ instead of the ball of radius $1$.
            Also, note that the lengths $ \norm{v_i}_2 $ are independent across $i$ {(even though $v_i$ themselves are not independent).}
             Denote $z_i$ as a random variable which is $1$ if $ \norm{v_i} \geq \nicefrac{1}{2} $ and $0$ otherwise. 
             Then, 
            \begin{equation*}
                 \sum_{i=1}^t \norm{v_i}_2^2 \geq \frac{1}{4} \sum_{i=1}^t z_i . 
             \end{equation*}
            Applying a Chernoff bound to $z_i$, we get 
            \begin{equation*}
                \Pr\left[ \sum_{i=1}^t \norm{v_i}^2_2 \leq \frac{t}{8} \left( 1 - 2^{-\left( d-1 \right)  }\right) \right] \leq e^{ - \frac{  t  }{12} }. 
            \end{equation*}
            Thus  with probability $1 - e^{- \frac{t}{12}}$, 
            \begin{equation*}
                \norm{d_t}^2_2 \geq \frac{t}{ 16} - \frac{2t}{n^2T } \geq \frac{t}{20}. 
            \end{equation*}
            We get the desired result by relating the $2$-norm and $\infty$-norm. 

        This shows that we cannot get the logarithmic dependence on smoothness parameter $\sigma $, $n$ and $T$ simultaneously without further assumptions on the distribution such as isotropy. 

    \section{Adaptive Smooth Adversaries and Dispersed Sequences}

In this section, we consider the problem of online optimization and show that adaptive smooth adversaries create \emph{dispersed} sequences.
Recall that in the online optimization settings, an adversary chooses a sequence of functions $u_1, \dots, u_T$ such that $u_t : \X \to \left[0, 1\right]$ and the learner responds by taking instances $x_1, \dots, x_T \in \X$ with a goal of minimizing the regret.
The main theorem of this section shows that when $u_i$s are piecewise Lipschitz functions and are chosen by an adaptive smooth adversary in such a way that the discontinuities of these functions are is smoothed, the resulting sequence of functions is dispersed.

\begin{theorem}[Adaptive Smoothness leads to Dispersion] \label{thm:ada_disp}
    Let $u_1, \dots, u_T$ be functions from $\left[ 0,1 \right] \to \br $ that are piecewise Lipschitz with $\ell$ discontinuities each. 
    Let $d_{i,j}$ denote the discontinuities of $u_i$ and that are sampled from an adaptive sequence of $\sigma$-smooth distributions.  
    Then, for any $\alpha \geq 0.5 $, with probability $1-\delta$ the sequence of functions $u_1 \dots u_T $ is $(w, k)$-dispersed for  
    \begin{equation*}
        w =  \sigma(T \ell)^{ \alpha - 1}  ~ \text{ and } ~ k =  \tilde{O}\left( \left( T \ell  \right)^{\alpha } \ln\left( \frac{1}{\delta}\right)  + \ln\left(\frac 1\sigma\right) \right)  .
    \end{equation*}
\end{theorem}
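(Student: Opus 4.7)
My plan is to apply the coupling framework from \cref{thm:main_coupling_overview} to the collection of $T\ell$ discontinuities $\{d_{t,j} : t \in [T], j \in [\ell]\}$, viewed as the output of a single adaptive $\sigma$-smooth process in which discontinuities are revealed in some canonical order (say, lexicographically in $(t,j)$). Choosing the coupling parameter $k = \Theta\big(\sigma^{-1}\ln(T\ell/\delta)\big)$, property (d) of \cref{thm:main_coupling_overview} guarantees that with probability at least $1-\delta/2$, the set of realized discontinuities is contained in the collection $\mathcal{Z} := \{Z_i^{(t,j)} : i\in[k],\, t\in[T],\, j\in[\ell]\}$ of $N := kT\ell$ i.i.d.\ uniform samples on $[0,1]$. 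This introduces the $\ln(1/\sigma)$-type additive slack in the final bound, through the failure probability of the coupling.

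Next I would exploit the monotonicity observation highlighted in \cref{sec:recipe}: the quantity $\sum_{t,j}\mathbb{I}[d_{t,j}\in J]$ is monotone non-decreasing under enlargement of the multiset of points. On the coupling event,
\[
\sup_{J:\,|J|=w}\ \sum_{t,j}\mathbb{I}[d_{t,j}\in J]\ \le\ \sup_{J:\,|J|=w}\ \sum_{i,t,j}\mathbb{I}\bigl[Z_i^{(t,j)}\in J\bigr],
\]
so it suffices to bound the maximum count in any width-$w$ interval when the points are i.i.d.\ uniform on $[0,1]$.

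For the right-hand side I would invoke a standard VC-style uniform convergence argument: the class of intervals in $[0,1]$ has VC dimension $2$, and for any fixed width-$w$ interval the expected count under $\mathcal{Z}$ is $Nw$. A Bernstein-type relative-deviation VC inequality (e.g.\ along the lines of the bound used in the proof of \cref{claim:oblivious-bernstein-regret}) yields
\[
\sup_{J:\,|J|=w}\ \sum_{i,t,j}\mathbb{I}\bigl[Z_i^{(t,j)}\in J\bigr]\ \le\ Nw + \tilde O\Bigl(\sqrt{Nw\,\ln(N/\delta)} + \ln(N/\delta)\Bigr)
\]
with probability at least $1-\delta/2$. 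Plugging in $N = kT\ell = \tilde O(\sigma^{-1}T\ell)$ and $w=\sigma(T\ell)^{\alpha-1}$ gives $Nw = \tilde O\bigl((T\ell)^{\alpha}\ln(1/\delta)\bigr)$; the square-root term is of lower order and gets absorbed. A final union bound over the two high-probability events completes the proof with total failure probability at most $\delta$.

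The main obstacle is handling the uncountable family of width-$w$ intervals. I would resolve this either through the direct VC result for intervals (whose shatter function is $O(N^2)$, giving the uniform bound above without any ad hoc discretization) or, equivalently, by covering $[0,1]$ with $O(1/w)$ overlapping width-$2w$ intervals and taking a finite union bound; either approach contributes only logarithmic factors in $T\ell/\sigma$ that are hidden inside $\tilde O$. A minor secondary point is verifying that exposing the $\ell$ discontinuities of $u_t$ sequentially is consistent with the adaptive $\sigma$-smooth framework; this follows because the smoothness of each marginal (conditional on the previously exposed discontinuities) is inherited from the joint $\sigma$-smooth distribution the adversary chose for $u_t$, so the coupling of \cref{thm:main_coupling_overview} applies unchanged.
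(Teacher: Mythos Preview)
Your proposal is correct and matches the paper's proof essentially step for step: the paper likewise applies the coupling of \cref{thm:main_coupling_overview} to the $T\ell$ discontinuities with $k=\Theta(\sigma^{-1}\ln(T\ell/\delta))$, invokes monotonicity to pass to the $kT\ell$ i.i.d.\ uniform samples, and then uses a VC uniform-convergence bound for intervals (VC dimension $2$) before substituting $w=\sigma(T\ell)^{\alpha-1}$. Your side remark about exposing the $\ell$ discontinuities sequentially is unnecessary, since the theorem statement already posits that the $d_{i,j}$ form an adaptive $\sigma$-smooth sequence.
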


\subsection{Overview of \cite{Dispersion} and the need for Obliviousness}

\cite[Lemma 13]{Dispersion} showed a similar result to \cref{thm:ada_disp} but for sequences that are generated by an oblivious smooth adversary.
The crux of their argument is showing that for the number of points that can lie in any ball of small radius is small when these points are drawn \emph{independently} from a non-adaptive sequence of $\sigma$-smooth distributions. More formally,  they show that $\ell$ points are picked from a non-adaptive sequence of $\sigma$-smooth distributions over $\left[ 0,1 \right]$, then with probability $1-\delta$, 
 any interval of width $w$ contains at most 
\begin{equation}
    O\left(\frac{T \ell w}{ \sigma} + \sqrt{T \ell \log \left( \frac{1}{\delta} \right)} \right)
    \label{eq:dispersion-balcan}
\end{equation}
points. Setting $w = \sigma (T \ell)^{\alpha -1}$ for an $\alpha \geq 0.5$ then \cite{Dispersion} showed that  for a \emph{non-adaptive} smooth adversary, with probability $1-\delta$, $u_1 \dots u_T $ is  $\left( \sigma (T \ell)^{\alpha -1},  O\big( (T \ell)^{\alpha} \ln(\frac{1}{\delta}) \big)\right)$-dispersed.

The only step in the existing analysis that requires the adversary to be non-adaptive is that of proving \autoref{eq:dispersion-balcan}. Here, \cite{Dispersion} relies on the obliviousness of
the adversary an uses the fact that  points drawn from a non-adaptive sequence of smooth distributions are independently (but not identically) distributed.
Their approach leverages this independence between the instances and the fact that VC dimension of intervals is $2$ to use the double sampling and symmetrization tricks from VC theory and establish a uniform convergence property on the number of instances that can fall in any interval of width $w$.

\subsection{Reducing Adaptivity to Obliviousness for Dispersion via the Coupling} \label{sec:dispersion}
We emphasize that \autoref{eq:dispersion-balcan} is the only step in the existing approach that relies on the obliviousness of the adversary. In this section, we show how the coupling lemma can be used to obtain (almost) the same upper bound as of \autoref{eq:dispersion-balcan} for adaptive adversaries. 
Our approach is essentially the same as the proof of \cref{lem:net_bound} used for regret minimization, where we had to bound the expected maximum number of smooth adaptive instances that can fall in any function $g\in \G$ of bounded VC dimension. In this case, we can apply the same results to the class of intervals, which has a VC dimension of $2$, and bound the number of discontinuities than fall in any interval. We make another small change to our previous approach to achieve high probability bounds instead of bounds on the expectation. 

\begin{lemma}
\label{lem:dispersion-instances}
Let $\mathcal{J}$ be the set of all intervals of width at most $w$ over $[0,1]$.
For $i\in [T]$ and $j\in[\ell]$, let $d_{i,j}$ be drawn from a $T\ell$-step  adaptive sequence of $\sigma$-smooth random variables over $[0,1]$. 
    Then, with probability $1-\delta$,
    \begin{equation*}
 \max_{J\in \mathcal{J}} \sum_{\substack{i\in[T]\\j\in[\ell]}} \mathbb{I}  \left[ d_{i,j} \in J \right] <  \frac{T \ell w}{\sigma} \ln \left( \frac{2 T \ell }{ \delta} \right)  + 10\sqrt{ \frac{T \ell w }{ \sigma} \ln \left( \frac{2 T \ell  }{ \delta} \right)  \ln \left( \frac{1}{\delta}\right)  } + 10\log\left( \frac{10 T \ell \log \left( 2T \ell /\delta \right) }{\sigma \delta} \right) 
    \end{equation*}
\end{lemma}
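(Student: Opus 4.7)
The approach mirrors the coupling-based reduction used in \cref{lem:net_bound}: I apply \cref{thm:main_coupling_overview} to the $T\ell$ discontinuities $\{d_{i,j}\}_{i\in[T],j\in[\ell]}$ to pass from the adaptive setting to an i.i.d.\ uniform setting, and then invoke a standard Bernstein-style uniform convergence bound over the class of intervals of width at most $w$, which has VC dimension $2$. The key observation that makes this work is that, for every interval $J$, the count $S \mapsto \sum_{x\in S} \mathbb{I}\!\left[x\in J\right]$ is monotone under set inclusion, so it fits exactly into the ``monotone set function'' template of \cref{sec:recipe}.

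First, apply \cref{thm:main_coupling_overview} with $k = \lceil \sigma^{-1} \ln(2T\ell/\delta) \rceil$ to the $T\ell$-step adaptive $\sigma$-smooth process $(d_{i,j})$ to obtain a joint distribution coupling it with $kT\ell$ i.i.d.\ uniform random variables $(Z_q^{(i,j)})_{q\in[k]}$ on $[0,1]$. By the choice of $k$, the coupling failure probability is at most $T\ell (1-\sigma)^k \leq T\ell \exp(-\sigma k) \leq \delta/2$, so with probability at least $1-\delta/2$ one has $\{d_{i,j}\} \subseteq \{Z_q^{(i,j)} : q\in[k], i\in[T], j\in[\ell]\}$. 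Conditioning on this good event and using the monotonicity noted above, for every $J\in\mathcal{J}$,
\[
\sum_{i,j} \mathbb{I}\!\left[d_{i,j}\in J\right] \leq \sum_{i,j,q} \mathbb{I}\!\left[Z_q^{(i,j)}\in J\right].
\]

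Second, bound the right-hand side uniformly over $J\in\mathcal{J}$ using that the $Z_q^{(i,j)}$ are i.i.d.\ uniform on $[0,1]$. Set $N := kT\ell = (T\ell/\sigma)\ln(2T\ell/\delta)$. Each $Z_q^{(i,j)}$ falls in a fixed interval $J$ of width at most $w$ with probability at most $w$, so both the expected count and the variance of $\sum_{i,j,q}\mathbb{I}[Z_q^{(i,j)}\in J]$ are at most $Nw$. Since the class of intervals has VC dimension $2$, a Bernstein-style uniform convergence inequality (essentially Theorem 13.7 of Boucheron--Lugosi--Massart, exactly as invoked in \cref{claim:oblivious-bernstein-regret}) yields, with probability at least $1-\delta/2$,
\[
\max_{J\in\mathcal{J}} \sum_{i,j,q} \mathbb{I}\!\left[Z_q^{(i,j)}\in J\right] \;\leq\; Nw \;+\; O\!\left(\sqrt{Nw \ln(N/\delta)} \;+\; \ln(N/\delta)\right).
\]

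A union bound combines the two good events with total failure probability at most $\delta$, and substituting the value $N = (T\ell/\sigma)\ln(2T\ell/\delta)$ produces exactly the three terms in the claimed bound. The only nontrivial ingredient is the Bernstein-style uniform convergence step, but because intervals have constant VC dimension this is a completely standard consequence of VC/symmetrization theory and requires no new ideas; the substantive content is the coupling plus the monotonicity of ``points in an interval,'' which is precisely the blueprint described in \cref{sec:recipe}, so I do not anticipate any real obstacle beyond bookkeeping of constants.
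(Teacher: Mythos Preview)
Your proposal is correct and follows essentially the same approach as the paper: apply the coupling of \cref{thm:main_coupling_overview} with $k \approx \sigma^{-1}\ln(2T\ell/\delta)$ so that the failure probability is at most $\delta/2$, use monotonicity of the interval-count to pass to the i.i.d.\ uniform variables, and then invoke a Bernstein-type uniform convergence bound over the VC-dimension-2 class of intervals. The paper's proof is structured identically, differing only in that it cites \cite{bousquet2003introduction} for the uniform-convergence step rather than \cite{boucheron2013concentration}.
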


\begin{proof}
Let $\adist$ represent the $T\ell$-step  adaptive sequence of $\sigma$-smooth distributions from which $d_{i,j}$s are drawn. 
    Let $k= \frac{\ln(2T\ell/\delta)}{\sigma}$  and 
    consider the coupling $\Pi$ described in \cref{sec:Coupling} over $\left(d_{i,j }, Z_{1}^{\left(i,j\right)} \dots Z_{k }^{\left( i,j \right)}  \right)_{i\in[T], j\in [\ell]}$, where $d_{i,j}$s are distributed according to $\adist$ and $Z_{m}^{(i,j)}$s are distributed according to the uniform distribution over $[0,1]$. 
 Let $\mathcal{E}$ be the event $ \left\{ d_{i , j } \mid \forall i\in [T], j\in [\ell] \right\} \not\subseteq \left\{ Z_{m}^{\left( i,j \right)} \mid \forall m\in [k], i\in[T], j\in [\ell]\right\} $. By \cref{thm:main_coupling_overview}, $\Pr[\mathcal{E}]\leq T\ell(1-\sigma)^{k}$. 

    We now bound the probability that the number of instances $d_{i,j}$s that fall in any interval of size $w$ is bigger than a threshold $\theta$, using the coupling argument.
We have    
    \begin{align*}
        \Pr_\adist\left[ \max_{J\in \mathcal{J}} \sum_{i,j} \mathbb{I}  \left[ d_{i,j} \in J \right] \geq \theta \right] &= \Pr_{\Pi}\left[ \max_{J\in \mathcal{J}} \sum_{i,j} \mathbb{I} \left[ d_{i,j} \in J \right] \geq \theta  \right] \\ 
        & = \Pr_{\Pi}\left[ \mathcal{E} \land \max_{J\in \mathcal{J}} \sum_{i,j} \mathbb{I} \left[ d_{i,j} \in J \right] \geq \theta \right] + \Pr_{\Pi}\left[ \overline{\mathcal{E}} \land \max_{J\in \mathcal{J}}  \sum_{i,j} \mathbb{I} \left[ d_{i,j} \in J \right] \geq \theta  \right] \\
        & \leq T \ell \left( 1 - \sigma \right)^{k} +  \Pr_{\Pi}\left[ \overline{\mathcal{E}} \land \max_{J\in \mathcal{J}}  \sum_{i,j} \mathbb{I} \left[ d_{i,j} \in J \right] \geq \theta  \right] \\ 
        & \leq T \ell \left( 1 - \sigma \right)^{k} + \Pr_{\Pi}\left[ \overline{\mathcal{E}} \land \max_{J\in \mathcal{J}}  \sum_{i,j,m} \mathbb{I} \left[ Z^{(i,j)}_{m} \in J \right] \geq \theta \right] \\ 
        & \leq T \ell \left( 1 - \sigma \right)^{k} + \Pr_{\Pi}\left[  \max_{J\in \mathcal{J}}  \sum_{i,j,m} \mathbb{I} \left[Z^{(i,j)}_{m} \in J \right] \geq t  \right]. 
    \end{align*}
Now, using uniform convergence bounds (see e.g. \cite[Page 201]{bousquet2003introduction}) for $\mathcal{J}$, which has a VC dimension of $2$ and the fact that for any $J\in \mathcal{J}$, $\Pr\left[ Z_{m}^{\left( i , j \right)} \in J  \right] \leq {w}$, we have that 
\[
\Pr_{\U}\left[  \max_{J\in \mathcal{J}} \sum_{i,j,m} \mathbb{I}\left[Z^{(i,j)}_{m} \in J \right] \geq T \ell k w + 10 \sqrt{T\ell w k \ln(T\ell k / \delta)} + 10\log\left( 10 T \ell k / \delta \right)  \right] \leq \frac \delta 2.
\]
    
        Replacing in values of $k= \frac{\ln(2T\ell/\delta)}{\sigma}$ and using the result of the above coupling, we have     
    \begin{equation*}
        \Pr\left[  \max_{J\in \mathcal{J}} \sum_{i,j} \mathbb{I}  \left[ d_{i,j} \in J \right] \geq  \frac{T \ell w}{\sigma} \log \left( \frac{2 T \ell }{ \delta} \right)  + 10\sqrt{ \frac{T w \ell }{ \sigma} \log \left( \frac{2 T \ell }{ \delta} \right)  \ln \left( \frac{1}{\delta}\right)  } + 10\log\left( \frac{10 T \ell \log \left( 2T \ell /\delta \right) }{\sigma \delta} \right)  \right] \leq \delta
    \end{equation*} 
    as required. 
\end{proof}

\subsection{Proof of \cref{thm:ada_disp}}
The proof of this theorem follows directly from \cref{lem:dispersion-instances} and by setting $w = \sigma (T \ell)^{\alpha-1} $ for $\alpha \geq 0.5$.\qed
 
We note that \cref{thm:ada_disp} shows that even adaptive smooth  adversaries generate sequence of functions that are sufficiently dispersed. This result enables us to directly tap into the results and algorithms of \cite{Dispersion} that show that online optimizing on any dispersed sequence enjoys improved runtime and regret bounds.

\appendix
    \section{Other Related Work}

In this section, we will survey other work related to the question that we study in this paper. 

\paragraph{Online learning:} 
Similar models of smoothed online learning have been considered in prior work.
For a more thorough discussion, see \cite{haghtalab2020smoothed} and the references therein.  
\cite{NIPS2011_4262} consider online learning when the adversary is constrained in various ways and introduce constrained versions of sequential Rademacher complexity for analyzing the regret.
The work with general setting of sequential symmetrization and tangent sequences introduce in the context of general online learning but adapted to the constrained setting. 
Though these techniques have been applied to other constrained settings \cite{activelearning}, it is not clear how to apply them to our setting. 
\cite{Gupta_Roughgarden} consider smoothed online learning when looking at problems in online algorithm design.  
They prove that while optimizing parameterized greedy heuristics for Maximum Weight Independent Set imposes regret growing linear in $T$ in the worst-case, in presence of smoothing this problem can be learned with non-trivial sublinear regret (as long they allow per-step runtime that grows with $T$).
\cite{Cohen-Addad} consider the same problem with an emphasis on the per-step runtime being logarithmic in $T$. 

{
Smoothed analysis has also been used in a number of other online settings. 
For linear contextual bandits, \cite{kannan2018smoothed} use smoothed analysis to show that the greedy algorithm achieves sublinear regret even though in the worst case it can have linear regret. 
\cite{raghavan2018externalities} work in a Bayesian version of this setting and achieve improved regret bounds for the greedy algorithm. 

{Generally, our work is also related to a line of work on online learning in presence of additional assumptions modelling properties exhibited by real life data. 
\cite{PredictableSequences} consider settings where the learner has additional information available in terms of an estimator for future instances. 
They achieve regret bounds that are in terms of the path length of these estimators and can  beat $\Omega(\sqrt{T})$ if the estimators are accurate. 
\cite{dekel2017online} also considers the importance of incorporating side information in the online learning framework and show that regrets of $O(\log(T))$ in online linear optimization maybe possible when the learner has access to vectors that are weakly correlated with the future instances.}

{
More broadly, our work is among a growing line of work on beyond the worst-case analysis of algorithms~\cite{roughgarden_2020}.
Examples of this in machine learning mostly include improved runtime and approximation guarantees of supervised
(e.g., \cite{LearningSmoothed,kalai2008decision,onebit,Masart}),
and unsupervised settings~(e.g., \cite{bilu_linial_2012, kcenter, stable_clustering, TopicModelling, Decoupling,VDW, MMVMaxCut,llyods, HardtRoth}). 
}

\paragraph{Discrepancy:} Discrepancy is well-studied area in computer science and combinatorics with rich connections to various areas. 
    For a general overview of the area see \cite{chazelle_2000}. 
    Many classical settings such as the Spencer problem, Komlos problem, Tusnandy problem and the Beck-Fiala problem continue to inspire active research. 
    A recent line of work has been developing algorithmic techniques for many new settings that were previously only dealt with  non-constructively and were even believed to be non-tractable \cite{disc1,disc2,disc3,disc4,disc5}.  
    
    A setting that has also recently received attention is the online discrepancy setting. 
    \cite{online_disc1} consider the setting where the inputs are all uniform on $ \left\{ -1,1\right\}^n$ and get a $ O\left( \sqrt{n} \log T \right) $ bound for the $\ell^{\infty}$ discrepancy. 
    Motivated by questions in envy minimization, \cite{bansal2020online} and \cite{online_disc3}, consider the stochastic problem with general distributions, along with several geometric discrepancy problems such as the Tusnady problem. 
    \cite{bansal2020online} gives a $ O\left( n^2 \log T \right) $ discrepancy in the $\ell^{\infty}$ norm algorithm when the input is in $\left[ -1,1 \right]^n $. 
    As discussed earlier, \cite{Bansal_Discrepancy} provide a $ \sqrt{n} \log^4 \left( nT \right)$ in the same setting. 
    They also consider various other settings such as the online Banaszczyk problem and a weighted multicolor discrepancy problem. 
    \cite{ALS_Disc} consider a non-stochastic version of the problem where the vectors are obliviously picked from $\left[ -1,1 \right]^n$ and propose a beautiful randomized algorithm that achieves $ \sqrt{n} \log \left( nT \right) $ bound.

\section{Uniform Convergence Bounds under Independence}
\label{app:claim:oblivious-bernstein-regret}

    \begin{lemma}[\cite{boucheron2013concentration}]  \label{lem:VC1}  
        Let $\mathcal{A}$ be a countable class of measurable subsets of $\X$ with $\vc\left(  \mathcal{A} \right) =d $ 
        Let $Z_1, \dots Z_n$ be independent random variables taking values in $\X$. Assume that $ \Pr\left[ X_i \in A  \right] \leq \epsilon $ for all $A \in \mathcal{A} $. 
        Let 
        \begin{equation*}
            Q = \frac{1}{\sqrt{n}} \sup_{A \in \mathcal{A} } \sum_{i=1}^n \left(  \mathbb{I} \left[ X_i \in A \right] - \Pr\left[ X_i \in A  \right] \right) . 
        \end{equation*}
        Then, 
        \begin{equation*}
            \Ex\left[ Q \right] \leq 72 \sqrt{ \epsilon d \log \left( \frac{4 e^2}{ \epsilon } \right)  }
        \end{equation*}
        whenever $ \epsilon \geq  \frac{120 d \log\left( \frac{4e^2}{\epsilon} \right) }{n } $. 
    \end{lemma}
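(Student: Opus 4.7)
The plan is to follow the classical Vapnik--Chervonenkis chaining argument, localized at the level $\sqrt{\epsilon}$ afforded by the small-probability hypothesis $\Pr[X_i \in A] \le \epsilon$. The gain over a naive uniform convergence bound comes from truncating Dudley's entropy integral at $\sqrt{\epsilon}$ rather than the trivial radius $1$, since every indicator $\mathbb{I}[\,\cdot\, \in A]$ has $L^2$-norm at most $\sqrt{\epsilon}$; this is exactly what produces the $\sqrt{\epsilon}$ improvement over the standard $\sqrt{d/n}$ bound.

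First I would symmetrize. Introducing an i.i.d.\ ghost sample $X'_1,\dots,X'_n$ and i.i.d.\ Rademacher signs $\varepsilon_1,\dots,\varepsilon_n$, a standard computation gives
\[
\mathbb{E}[Q] \;\le\; 2\,\mathbb{E}\!\left[\sup_{A\in\mathcal{A}} \frac{1}{\sqrt{n}}\sum_{i=1}^n \varepsilon_i\,\mathbb{I}[X_i\in A]\right],
\]
reducing the task to bounding the Rademacher complexity of the random indicator class $\{(\mathbb{I}[X_i\in A])_{i=1}^n : A\in\mathcal{A}\}$ conditionally on $X_1,\dots,X_n$.

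Next I would invoke Haussler's packing bound: the empirical $L^2(P_n)$ covering number of the indicator class satisfies $\log N(u,\mathcal{A},L^2(P_n)) \le C\,d\log(1/u)$ for $u \in (0,1)$. Combined with Dudley's entropy integral and the observation that the empirical second moment $\hat{\sigma}_A^2 = \frac{1}{n}\sum_i \mathbb{I}[X_i\in A]$ concentrates around $\Pr[X\in A] \le \epsilon$, the conditional Rademacher process is controlled by
\[
C'\,\sqrt{n}\int_0^{\sqrt{\hat{\sigma}_{\max}^2}} \sqrt{d\log(1/u)}\,du \;\lesssim\; \sqrt{n\,\epsilon\,d\log(1/\epsilon)}.
\]
Dividing through by $\sqrt{n}$ and carefully tracking constants (as in Theorem 13.7 of \cite{boucheron2013concentration}) yields the stated $72\sqrt{\epsilon\, d\,\log(4e^2/\epsilon)}$.

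The main obstacle is the localization step. Without it, chaining only gives $\sqrt{d}$ rather than $\sqrt{\epsilon\, d\log(1/\epsilon)}$, and obtaining the stronger bound requires replacing the crude diameter $1$ with the empirical radius $\max_A \hat{\sigma}_A$ and then showing this empirical radius is $O(\sqrt{\epsilon})$ with overwhelming probability. A Bernstein-type argument handles the concentration of $\hat{\sigma}_A^2$ around $\Pr[X \in A]$, but it must be applied uniformly over $A\in\mathcal{A}$ via a net at scale $\sqrt{\epsilon}$; this uniform control needs precisely the hypothesis $\epsilon \gtrsim d\log(1/\epsilon)/n$ in the statement, so that empirical variance fluctuations are dominated by the leading $\sqrt{\epsilon\, d\log(1/\epsilon)}$ term rather than overwhelming it.
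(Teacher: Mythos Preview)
The paper does not actually prove this lemma: it is stated with a citation to \cite{boucheron2013concentration} (specifically Theorem~13.7 there) and used as a black box to derive \cref{lem:VC}. Your sketch is a faithful outline of how that result is established in the source---symmetrization, Dudley's entropy integral combined with Haussler's covering bound, and localization of the chaining radius at $\sqrt{\epsilon}$ using the small-variance hypothesis---so there is nothing to compare against in the paper itself.
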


    We use the above theorem to get the required bound for the expected maximum of the process indexed by a VC class under our coupling. 

\begin{lemma} \label{lem:VC} 
        Let $\G$ be a class with $\vc \left( \G \right) = d $ and $g \in \G $, $\Ex g( \gamma ) \leq \epsilon$ where $\gamma$ is uniformly distributed. 
        Then, for $ \left\{ \gamma_i \right\}_{i \in \left[ Tk \right] }  $ independetly and uniformly distributed,   
        \begin{equation*}
            \Ex \, \left[ \sup_{g \in \G } \sum_{i} g\left( \gamma_i \right)   \right] \leq 72 \sqrt{\epsilon Tk d \log \left( 1 / \epsilon \right) } + Tk\epsilon
        \end{equation*}
        for $\epsilon > \frac{120d \log \left( 4 e^2 / \epsilon  \right)}{ Tk }   $. 
    \end{lemma}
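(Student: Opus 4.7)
The plan is to derive Lemma \ref{lem:VC} as a direct consequence of the quoted Bernstein-style VC bound, Lemma \ref{lem:VC1}. Since $\G$ arises in our application as a family of $\{0,1\}$-valued symmetric-difference indicators, every $g\in\G$ corresponds to a measurable subset $A_g=\{x:g(x)=1\}$, and the class $\mathcal{A}=\{A_g : g\in\G\}$ inherits VC dimension $d$. The anti-concentration hypothesis $\Ex g(\gamma)\le\epsilon$ translates directly to $\Pr[\gamma\in A_g]\le\epsilon$ for every $A_g\in\mathcal{A}$, and the $Tk$ variables $\gamma_i$ are i.i.d.~uniform, so we are exactly in the setting of Lemma \ref{lem:VC1} with $n=Tk$ and the same parameter $\epsilon$.

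Applying Lemma \ref{lem:VC1} to $(\gamma_i)_{i\in[Tk]}$ and $\mathcal{A}$ yields
\[
\Ex\!\left[\frac{1}{\sqrt{Tk}}\sup_{g\in\G}\sum_{i=1}^{Tk}\bigl(g(\gamma_i)-\Ex[g(\gamma_i)]\bigr)\right]\le 72\sqrt{\epsilon\, d\,\log(4e^2/\epsilon)},
\]
provided $\epsilon\ge \tfrac{120\, d\,\log(4e^2/\epsilon)}{Tk}$, which is exactly the hypothesized condition on $\epsilon$. Multiplying through by $\sqrt{Tk}$ and splitting the supremum gives
\[
\Ex\!\left[\sup_{g\in\G}\sum_{i=1}^{Tk}g(\gamma_i)\right]\le 72\sqrt{\epsilon\, Tk\, d\,\log(4e^2/\epsilon)}\;+\;\sup_{g\in\G}\sum_{i=1}^{Tk}\Ex[g(\gamma_i)].
\]
The second term is bounded by $Tk\epsilon$ since $\Ex[g(\gamma_i)]\le\epsilon$ for every $g\in\G$ uniformly, and using $\log(4e^2/\epsilon)\le \log(1/\epsilon)+O(1)$ (which can be absorbed into the constant $72$ or simply written as $\log(1/\epsilon)$ up to constants as in the statement) yields the claimed bound
\[
\Ex\!\left[\sup_{g\in\G}\sum_{i=1}^{Tk}g(\gamma_i)\right]\le 72\sqrt{\epsilon\, Tk\, d\,\log(1/\epsilon)}+Tk\epsilon.
\]

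There is no real obstacle here: the entire content of the lemma is a change of scale (dividing by $\sqrt{n}$ and back) plus the triangle inequality to separate the empirical supremum from its expectation. The only small care needed is bookkeeping the constant in $\log(4e^2/\epsilon)$ versus $\log(1/\epsilon)$ and verifying that the threshold condition on $\epsilon$ matches exactly; both are routine. Thus the proof is essentially a one-line invocation of Lemma \ref{lem:VC1} together with the uniform upper bound on $\Ex[g(\gamma_i)]$.
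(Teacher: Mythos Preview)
Your proposal is correct and follows essentially the same approach as the paper: both invoke Lemma~\ref{lem:VC1} with $n=Tk$ to bound the centered supremum, then multiply through by $\sqrt{Tk}$ and add back the uniform bound $\Ex[g(\gamma_i)]\le\epsilon$ to obtain the $Tk\epsilon$ term. The only cosmetic difference is that you make the identification of $\G$ with a set system explicit and flag the $\log(4e^2/\epsilon)$ versus $\log(1/\epsilon)$ bookkeeping, which the paper leaves implicit.
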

    \begin{proof}
        Consider the random variable $Q = \frac{1}{ \sqrt{Tk} }  \left[ \sup_{g \in \G} \sum_{i = 1}^{Tk} g\left( \gamma_i \right) - \Ex \left[ g\left( \gamma_i \right) \right] \right] $ where $\gamma_i$ are independent uniform random variables. 
        Note that $ \Ex \left[ g\left( \gamma_i \right) \right] \leq \epsilon $. 
        Note that this satisfies the conditions of \cref{lem:VC1} 
        Thus, 
        \begin{equation*}
            \Ex \left[ Q \right] \leq 72 \sqrt{\epsilon d \log\left( \frac{4 e^2}{\epsilon} \right) },
        \end{equation*}
        whenever $ \epsilon \geq  \frac{120 d \log \left( \frac{4e^2}{\epsilon} \right) }{T k }  $. 
        Thus, we have 
        \begin{equation*}
           \Ex \left[ \sup_{g \in \G} \sum_{i = 1}^{Tk} g\left( \gamma_i \right) - \Ex \left[ g\left( \gamma_i \right) \right] \right] \leq 72 \sqrt{\epsilon T k d \log\left( \frac{4 e^2}{\epsilon} \right) }. 
        \end{equation*}
        Recalling that $\Ex \left[ g\left( \gamma_i \right) \right]\leq \epsilon $, we get the desired result.
    \end{proof}

    \section{Coupling Argument } \label{sec:main_coupling}
In this section, we will produce a coupling between a adaptive sequence of $\sigma$-smooth distributions $\adist$ and independent draws from the uniform distribution. 
Initially, we will focus on the case when the $\sigma$-smooth distributions are uniformly distributed on subsets of size $\sigma n$. 

\subsection{Warm-Up: Coupling for a Single Round} \label{sec:single_coupling}

As a warm up, let us look at the coupling for a single smooth distribution. 
Let $ S \subseteq \left[ n \right] $ with $\abs{S } = \sigma n $  and let $X_1 \sim S$. 
Consider the following coupling 
\begin{framed}
    \begin{itemize}
        \item Draw $ k = \alpha \sigma^{-1} $ samples $Y_1 \dots Y_{k}$ from the uniform distribution. 
        \item If  $Y_i \notin S $, then $Z_i = Y_i$. 
        \item Else, for $i$ such that $Y_i \in S $, sample $\tilde{W_i} \sim S  $ and set $Z_i =\tilde{W_i}$. 
        \item Pick $X_1  $ randomly from $ \left\{ \tilde{W_i}   \right\} $. If there is no $Y_i \in S$, then sample $X_1$ uniformly from $S$. 
        \item Output $ \left( X_1 , Z_1 , \dots Z_k \right) $. 
    \end{itemize} 
\end{framed}

In the following lemma, we capture the required properties of the coupling. 

\begin{lemma} \label{lem:single_round}
    Let $\left( X_1 , Z_1 , \dots Z_k \right)$ be as above. 
    Then, 
    \begin{enumerate}
         \item[a.] $X_1$ is uniformly distributed on $S$. 
        \item[b.] $Z_i$ are uniformly distributed on $\left[ n \right]$. 
        \item[c.] Furthermore, $Z_i$ are independent.  
        \item[d.] With probability $1- \left( 1 - \sigma \right)^{\frac{\alpha}{\sigma}} $, $X_1 \in  \{ Z_1, \dots, Z_k   \}$. 
    \end{enumerate}
\end{lemma}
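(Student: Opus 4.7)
The plan is to verify the four properties essentially in order, exploiting the fact that the coupling construction makes each $Z_i$ a function of the independent pair $(Y_i, \tilde{W}_i)$, so that marginal/joint statements about the $Z_i$'s reduce to elementary computations.

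First I would handle the marginal distribution of each $Z_i$ (property (b)) by a direct case split on whether $Y_i \in S$. For $\ell \notin S$, only the event $\{Y_i = \ell\}$ produces $Z_i = \ell$, yielding $\Pr[Z_i = \ell] = 1/n$. For $\ell \in S$, only the event $\{Y_i \in S\} \cap \{\tilde{W}_i = \ell\}$ produces $Z_i = \ell$, and by independence this has probability $\sigma \cdot \tfrac{1}{\sigma n} = 1/n$. For independence (property (c)), I would then observe that $Z_i$ is a deterministic function of $(Y_i, \tilde{W}_i)$ and that the pairs $\{(Y_i, \tilde{W}_i)\}_{i=1}^k$ are mutually independent by construction; independent functions of independent random variables are independent.

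Next I would prove (a) by conditioning on the event $E = \{\exists i : Y_i \in S\}$. On $E^c$, the construction draws $X_1$ directly uniformly from $S$, so the conditional distribution is uniform. On $E$, letting $I = \{i : Y_i \in S\}$, the $\tilde{W}_i$ for $i \in I$ are i.i.d.\ uniform on $S$ and independent of $I$ itself; selecting one uniformly at random (indexed by the random $I$) therefore yields a sample uniform on $S$. Combining the two cases via the law of total probability gives $X_1 \sim \mathrm{Unif}(S)$. Finally, (d) follows immediately: whenever $E$ occurs, $X_1 = \tilde{W}_i = Z_i$ for some $i \in I$, so $X_1 \in \{Z_1, \dots, Z_k\}$; and $\Pr[E^c] = \prod_{i=1}^{k} \Pr[Y_i \notin S] = (1-\sigma)^{k} = (1-\sigma)^{\alpha/\sigma}$.

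The only step that requires any care is (a) on the event $E$: I need the selected $\tilde{W}_i$ to be uniform on $S$ despite the fact that both the index set $I$ and the number of samples to choose from are random. The clean way is to note that, conditional on $I$, the random variables $(\tilde{W}_i)_{i \in I}$ are i.i.d.\ uniform on $S$ and independent of $I$, so picking one of them uniformly at random yields a $\mathrm{Unif}(S)$ draw for every realization of $I$; marginalizing over $I$ preserves this. Everything else is a routine verification and poses no real obstacle.
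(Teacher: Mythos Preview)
Your proposal is correct and follows essentially the same approach as the paper: the same case split for (b), the same ``function of independent pairs'' observation for (c), and the same computation of $\Pr[E^c]=(1-\sigma)^{\alpha/\sigma}$ for (d). If anything, your treatment of (a) is more careful than the paper's, which simply asserts that $X_1$ is uniform on $S$ because each $\tilde{W}_i$ is; your explicit conditioning on $I$ and marginalization fills in the detail the paper leaves implicit.
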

\begin{proof}
    The first part follows from the construction since $X_1$ is either independently drawn from $S$ or set to be equal to some $\tilde{W}_i$ each of which is uniformly distributed on $S$.
    
    For the second part, note that for any $\ell \notin S$, we have 
    $ \Pr\left[ Z_i = \ell  \right] = \Pr\left[ Y_i = \ell \right] = n^{-1} $ and for $ \ell \in S  $, $\Pr\left[ Z_i = \ell \right] = \Pr\left[ Y_i \in S \right] \Pr\left[ \tilde{W}_i = \ell \right] = \sigma n^{-1} \sigma^{-1} = n^{-1} $ as required. 
    In order to argue the independence of $Z_i$, note that $Z_i$ is a function of $Y_i $ and $\tilde{W_i}  $ which are all mutually independent. 

    Note that $X_1 \notin \{ Z_1, \dots, Z_k   \} $ if and only if  for all $i $, $Y_i \notin S $. 
    Since $\abs{S}  = \sigma n $, $\Pr\left[ Y_i \notin S \right] = \left( 1 - \sigma \right)   $ and thus we have 
    \begin{equation*}
        \Pr\left[ X_1 \notin \{ Z_1, \dots, Z_k   \} \right] = \left( 1 - \sigma \right)^k =  \left( 1 - \sigma \right)^{\frac{\alpha}{\sigma}}
    \end{equation*}
    as required. 
\end{proof}

\subsection{Adaptive Coupling} \label{sec:Coupling}

Moving to the case of a sequence of distributions, given a smooth sequence of distribution $\adist$, we would like to find a coupling with a sequence of independent samples from the uniform distribution. 
We first note that an adaptively chosen sequence distribution $\adist $ corresponds to a sequence of sets $S_1, \dots S_n$ of size $n\sigma$ such that $X_i \sim S_i$ where $S_i $ depends on the instantiations of $X_j$ for $j < i$. 
To make this dependence explicit will denote this as $S_i \left( X_1, \dots X_{i-1} \right)$. 
We would like to construct a coupling similar to the one in \cref{sec:single_coupling}. 
Consider the following coupling 

\begin{framed}
    \begin{itemize}
        \item For $j = 1 \dots t$, 
        \begin{itemize}
            \item Draw $ k = \alpha \sigma^{-1} $ samples $Y^{ \left( j \right) }_1 , \dots , Y^{(j)}_{k}$ from the uniform distribution. 
            \item If  $Y^{\left( j \right)}_i \notin S_j \left( X_1 , \dots , X_{j-1} \right) $, then $Z^{\left( j \right)}_i = Y^{\left( j \right)}_i$. 
            \item Else, for $i$ such that $Y^{\left( j \right)}_i \in S_j \left( X_1 , \dots , X_{j-1} \right) $, sample $\tilde{W}^{\left( j \right)}_i \sim S_j \left(  X_1 , \dots , X_{j-1} \right)  $ and set $Z^{\left( j \right)}_i =\tilde{W}_i^{\left( j \right)}$. 
            \item Pick $X_j  $ randomly from $ \left\{ \tilde{W}_i^{\left( j \right)}   \right\} $. If there is no $Y^{\left( j \right)}_i \in S_j \left( X_1 , \dots ,X_{j-1} \right) $, then $X_j \sim S_j \left( X_1 , \dots ,X_{j-1} \right)   $. 
        \end{itemize}
        \item Output $ \left( X_1 ,  Z_{1}^{ \left( 1 \right)} , \dots , Z_{k}^{\left( 1 \right)}   , \dots, X_t , Z_{1}^{ \left( t \right)} , \dots , Z_{k}^{\left( t \right)}  \right)  $. 
    \end{itemize}
\end{framed}

\begin{theorem} \label{thm:main_coupling_1}
    Let $ \left( X_1 ,  Z_{1}^{ \left( 1 \right)} , \dots , Z_{k}^{\left( 1 \right)}   , \dots, X_t , Z_{1}^{ \left( t \right)} , \dots , Z_{k}^{\left( t \right)}  \right)  $ be as above. 
    Then, 
    \begin{itemize}
        \item[a.] $X_1 , \dots , X_t$ is distributed according $\adist$. 
        \item[b.] $ Z_i^{\left( j \right)} $ are uniformly distributed on $\left[ n \right]$. 
        \item[c.] Furthermore, $ Z_i^{\left( j \right)} $ are all mutually independent. 
        \item[d.] With probability at least $1 - t \left( 1 - \sigma \right)^{\frac{\alpha}{\sigma}}  $, $ \left\{ X_1 , \dots , X_t\right\} \subseteq \left\{ Z_i^{\left( j \right)} \right\}_{ i \in \left[ k \right] , j\in \left[ t \right] } $ . 
    \end{itemize}
\end{theorem}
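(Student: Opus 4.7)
\textbf{Proof plan for \cref{thm:main_coupling_1}.} The plan is to induct on $t$, in each step reducing round~$t$ of the construction to the single-round coupling of \cref{lem:single_round}. The base case $t=1$ is immediate from \cref{lem:single_round}, which directly gives all four conclusions (a)--(d) for a single round with $S = S_1$. For the inductive step, assume the theorem holds through round $t-1$, and let $\mathcal{H}_{t-1} = (X_1, Z^{(1)}_1, \dots, Z^{(1)}_k, \dots, X_{t-1}, Z^{(t-1)}_1, \dots, Z^{(t-1)}_k)$. Conditioned on $\mathcal{H}_{t-1}$, the set $S_t(X_1,\dots,X_{t-1})$ is determined, and the fresh samples $Y^{(t)}_1,\dots,Y^{(t)}_k$ and $\tilde W^{(t)}_1,\dots,\tilde W^{(t)}_k$ used in round~$t$ are drawn independently of $\mathcal{H}_{t-1}$. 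So round~$t$ is literally an instance of the single-round coupling with set $S = S_t(X_1,\dots,X_{t-1})$, and \cref{lem:single_round} yields: $X_t \mid \mathcal{H}_{t-1}$ is uniform on $S_t(X_1,\dots,X_{t-1})$, which is exactly what $\adist$ prescribes, giving (a); the variables $Z^{(t)}_1,\dots,Z^{(t)}_k$ are uniform on $[n]$ and mutually independent conditionally on $\mathcal{H}_{t-1}$; and $X_t \in \{Z^{(t)}_1,\dots,Z^{(t)}_k\}$ except on an event of conditional probability at most $(1-\sigma)^{\alpha/\sigma}$.

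The main obstacle, and the step that deserves care, is claim (c): that the entire collection $\{Z^{(j)}_i\}_{i\in[k],\,j\in[t]}$ is mutually independent, and in particular that $Z^{(t)}_1,\dots,Z^{(t)}_k$ are independent of $\mathcal{H}_{t-1}$ rather than merely conditionally uniform given it. The subtle point is that the single-round rule for constructing each $Z^{(t)}_i$ refers to the adaptively chosen set $S_t(X_1,\dots,X_{t-1})$, so there is a prima facie dependence on the past through that set. The key observation (emphasized in the overview) is that the single-round coupling is \emph{oblivious to $S$ in the marginal law of the $Z_i$'s}: an equivalent construction first draws $Z^{(t)}_1,\dots,Z^{(t)}_k$ i.i.d.~uniformly on $[n]$, and \emph{afterwards} defines $X_t$ as a uniformly random element of $\{Z^{(t)}_i : Z^{(t)}_i \in S_t\}$ when that set is nonempty and as an independent uniform draw from $S_t$ otherwise. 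I would formalize this equivalence by exhibiting a measure-preserving map between the two constructions (matching the laws of $(X_t, Z^{(t)}_1,\dots,Z^{(t)}_k)$ round by round), which shows that the $Z^{(t)}_i$'s are drawn from the uniform distribution \emph{independently of $S_t$, and hence of $\mathcal{H}_{t-1}$}. Combined with the inductive hypothesis that $\{Z^{(j)}_i\}_{j<t,\,i\in[k]}$ are i.i.d.~uniform, this yields full mutual independence of the grand collection $\{Z^{(j)}_i\}$, proving (b) and (c).

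For (d), the monotonicity conclusion, I would argue by a union bound. Let $\mathcal{E}_j = \{X_j \notin \{Z^{(j)}_1,\dots,Z^{(j)}_k\}\}$. By the round-$t$ single-round analysis, $\Pr[\mathcal{E}_j \mid \mathcal{H}_{j-1}] \le (1-\sigma)^{k} = (1-\sigma)^{\alpha/\sigma}$ pointwise, so taking expectations and union-bounding over $j\in[t]$ gives
\[
\Pr\Bigl[\{X_1,\dots,X_t\} \not\subseteq \{Z^{(j)}_i : i\in[k],\, j\in[t]\}\Bigr] \;\le\; \Pr\Bigl[\bigcup_{j=1}^t \mathcal{E}_j\Bigr] \;\le\; t(1-\sigma)^{\alpha/\sigma},
\]
as desired. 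Putting the inductive step together with the base case completes the proof. The only delicate calculation is the reformulation of the one-step coupling used in establishing (c); once that is in hand, (a), (b), and (d) are essentially bookkeeping on top of \cref{lem:single_round}.
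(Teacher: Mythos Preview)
Your proposal is correct, and for parts (a) and (d) it matches the paper's proof essentially verbatim: (a) follows from the single-round lemma applied conditionally on the history, and (d) is exactly the union bound over the per-round failure events $\mathcal{E}_j$.

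The one genuine difference is in how you handle (b)/(c). The paper does \emph{not} argue via the reformulated coupling you describe; instead, it carries out a direct conditional-probability calculation. Given any finite tuple $(Z^{(j_1)}_{i_1},\dots,Z^{(j_m)}_{i_m})$ with $j_1\le\cdots\le j_m$, the paper computes $\Pr[Z^{(j_m)}_{i_m}=z \mid Z^{(j_p)}_{i_p}=z_p,\ p<m]$ by splitting on whether $z\in S_{j_m}(X_1,\dots,X_{j_m-1})$, conditioning further on $X_1,\dots,X_{j_m-1}$, and using the independence of the fresh $Y$'s and $\tilde W$'s from everything prior; each branch contributes a term that sums to $1/n$ regardless of the conditioning, and recursion gives $\Pr[\bigcap_p \{Z^{(j_p)}_{i_p}=z_p\}]=n^{-m}$. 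Your approach instead formalizes the alternative description of the single-round coupling that the paper only sketches informally in the overview (\cref{sec:coupling_overview}): draw the $Z^{(t)}_i$'s i.i.d.\ uniform first, then define $X_t$ from them and $S_t$. This makes the independence of the $Z$-block from $\mathcal{H}_{t-1}$ structurally transparent and avoids the case analysis. Both arguments are valid; yours is cleaner conceptually but requires you to actually verify the equivalence of the two one-step constructions (be careful to pick a uniformly random \emph{index} among $\{i:Z^{(t)}_i\in S_t\}$, not a uniformly random \emph{value}, so that multiplicities are handled correctly), whereas the paper's direct computation is more mechanical but self-contained.
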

\begin{proof}
    To see that $X_1 \dots X_t$ is distributed according to $\adist$, note that from the construction and \cref{lem:single_round}, we have that conditioned on $X_1 \dots X_{i-1}$, $X_i$ is uniformly distributed according on $S\left( X_1, \dots , X_{i-1} \right)$ as required.   

    In order to show the independence of $ Z_{i}^{\left( j \right)} $, consider any subset $T$ of $ \left\{ Z_{i}^{\left( j \right)} \right\}   $. 
    Let $T =  \{ Z_{i_1}^{\left( j_1 \right)} , \dots , Z_{i_m}^{\left( j_m \right)}  \} $, with $j_1 \leq j_2 \dots \leq j_m $. 
    Then, 
    \begin{align*}
        \Pr \left[ Z_{i_m}^{\left( j_m \right)} = z_{i_m}^{\left( j_m \right)} | \left\{ Z_{i_p}^{\left( j_p \right)} = z_{i_p}^{\left( j_p \right)} \right\}_{p < m}  \right]  = \Pr \left[  z_{i_m}^{\left( j_m \right)} \notin S\left( X_1 \dots X_{j_m - 1} \right)  \land Y_{i_m}^{j_m} = z_{i_m}^{\left( j_m \right)} \bigg\vert \left\{ Z_{i_p}^{\left( j_p \right)} = z_{i_p}^{\left( j_p \right)} \right\}_{p < m}  \right]  \\
         + \Pr \left[  z_{i_m}^{\left( j_m \right)} \in S\left( X_1 \dots X_{j_m - 1} \right)  \land Y_{i_m}^{j_m} \in S\left( X_1 \dots X_{j_m - 1} \right) \land \tilde{W}_{i_m}^{\left( j_m \right)} = z_{i_m}^{\left( j_m \right)} \bigg\vert \left\{ Z_{i_p}^{\left( j_p \right)} = z_{i_p}^{\left( j_p \right)} \right\}_{p < m}  \right] . 
    \end{align*}
    We will deal with each of these terms separately. 
    \begin{align*}
        \Pr \left[  z_{i_m}^{\left( j_m \right)} \notin S\left( X_1 \dots X_{j_m - 1} \right)  \land Y_{i_m}^{j_m} = z_{i_m}^{\left( j_m \right)} \bigg\vert \left\{ Z_{i_p}^{\left( j_p \right)} = z_{i_p}^{\left( j_p \right)} \right\}_{p < m}  \right] \\ = \frac{1}{n} \cdot \Pr \left[  z_{i_m}^{\left( j_m \right)} \notin S\left( X_1 \dots X_{j_m - 1} \right) \bigg\vert \left\{ Z_{i_p}^{\left( j_p \right)} = z_{i_p}^{\left( j_p \right)} \right\}_{p < m}  \right] . 
    \end{align*}
    This follows since $Y_{i_m}^{j_m}  $ is independent of $X_1 \dots X_{j_m - 1}$ and $\left\{ Z_{i_p}^{\left( j_p \right)}  \right\}_{p < m} $. 
    Moving to the second term, 
    \begin{align*}
        &\Pr \left[  z_{i_m}^{\left( j_m \right)} \in S\left( X_1 \dots X_{j_m - 1} \right)  \land Y_{i_m}^{j_m} \in S\left( X_1 \dots X_{j_m - 1} \right) \land \tilde{W}_{i_m}^{\left( j_m \right)} = z_{i_m}^{\left( j_m \right)} \bigg\vert \left\{ Z_{i_p}^{\left( j_p \right)} = z_{i_p}^{\left( j_p \right)} \right\}_{p < m}  \right] = \\
        &\hspace*{ - 55 pt}\Ex \left[ \Pr \left[  z_{i_m}^{\left( j_m \right)} \in S\left( x_1 \dots x_{j_m - 1} \right)  \land Y_{i_m}^{j_m} \in S\left( x_1 \dots x_{j_m - 1} \right) \land \tilde{W}_{i_m}^{\left( j_m \right)} = z_{i_m}^{\left( j_m \right)} \bigg\vert \left\{ Z_{i_p}^{\left( j_p \right)} = z_{i_p}^{\left( j_p \right)} \right\}_{p < m} , X_1 = x_i \dots X_{j_m - 1} = x_{j_m - 1}  \right]\right] = \\ 
        & \frac{1}{n} \cdot \Ex \left[ \Pr \left[  z_{i_m}^{\left( j_m \right)} \in S\left( x_1 \dots x_{j_m - 1} \right) | \left\{ Z_{i_p}^{\left( j_p \right)} = z_{i_p}^{\left( j_p \right)} \right\}_{p < m} , X_1 = x_i \dots X_{j_m - 1} = x_{j_m - 1}  \right]  \right] = \\ 
        &  \frac{1}{n} \cdot \Pr \left[  z_{i_m}^{\left( j_m \right)} \in S\left( X_1 \dots X_{j_m - 1} \right) \bigg\vert \left\{ Z_{i_p}^{\left( j_p \right)} = z_{i_p}^{\left( j_p \right)} \right\}_{p < m}  \right] . 
    \end{align*}
    Summing the two terms, we get 
    \begin{align*}
        \Pr \left[ Z_{i_m}^{\left( j_m \right)} = z_{i_m}^{\left( j_m \right)} | \left\{ Z_{i_p}^{\left( j_p \right)} = z_{i_p}^{\left( j_p \right)} \right\}_{p < m}  \right] = \frac{1}{n} 
    \end{align*}
    Recursively applying this to $T \setminus \{ Z_{i_m}^{\left( j_m \right)} \} $, we get 
    \begin{equation*}
        \Pr \left[ \left\{ Z_{i_p}^{\left( j_p \right)} = z_{i_p}^{\left( j_p \right)} \right\}_T \right] = \frac{1}{n^{\abs{T}}}
    \end{equation*}
    proving the required independence. 

    As in \cref{lem:single_round}, we have that the probability that $X_j \notin \{ Z_i^{\left( j \right)} \} $ is bounded by $ \left( 1 - \sigma \right)^{\frac{\alpha}{\sigma}} $. 
    By the union bound, we have 
    \begin{equation*}
        \Pr \left[ \exists j : X_j \notin \{ Z_i^{\left( j \right)} \} \right] \leq t \cdot \left( 1 - \sigma \right)^{\frac{\alpha}{\sigma}} 
    \end{equation*} 
    as required. 
\end{proof}

\subsection{General Smooth Distributions}
In order to move from the special case of uniform distributions on subsets of size $n \sigma$, we need to note that smooth distributions are convex combinations of uniform distributions on subsets of size $\sigma n $. 

\begin{lemma}
    Let $\mathcal{P}$ be the set of $\sigma$ smooth distributions on $\left[ n \right]$ and let $\mathcal{P}_0$ be the set of distributions that are uniform on subsets of size $\sigma n $. 
    Then, 
    \begin{equation*}
        \mathcal{P} = \mathrm{conv} \left( \mathcal{P}_0 \right). 
    \end{equation*}
\end{lemma}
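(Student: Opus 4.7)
The plan is to prove the two set inclusions $\mathrm{conv}(\mathcal{P}_0) \subseteq \mathcal{P}$ and $\mathcal{P} \subseteq \mathrm{conv}(\mathcal{P}_0)$ separately. The first inclusion is immediate: if $D = \sum_i \lambda_i \mathcal{U}_{S_i}$ is a convex combination of uniform distributions on subsets $S_i$ of size $\sigma n$, then for every $x \in [n]$ we have $D(x) = \sum_i \lambda_i \mathbb{I}[x \in S_i]/|S_i| \leq 1/(\sigma n)$, which matches the $\sigma$-smoothness bound relative to the density $1/n$ of the uniform distribution on $[n]$.

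The harder direction is to show that every $\sigma$-smooth $D$ lies in $\mathrm{conv}(\mathcal{P}_0)$. The key reformulation is to set $h(x) := \sigma n\,D(x)$, which bijects $\mathcal{P}$ with the polytope
\[
K := \Bigl\{h \in [0,1]^{n} : \sum_{x \in [n]} h(x) = \sigma n\Bigr\},
\]
and sends $\mathcal{P}_0$ onto the $\{0,1\}$-valued vectors in $K$, i.e.\ indicators of size-$\sigma n$ subsets. So it suffices to show every $h \in K$ is a convex combination of such indicators.

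I would prove this by induction on the number of fractional coordinates of $h$. The base case where $h \in \{0,1\}^{n}$ is trivial. For the inductive step, note that whenever $h$ has any fractional coordinate, it must have at least two, because $\sigma n$ is an integer. Pick two such indices $x \neq y$ and consider the one-parameter family $h_t := h + t(e_x - e_y)$; this remains in $K$ for $t$ in some interval $[a,b]$ with $a < 0 < b$, and at either endpoint at least one additional coordinate becomes $0$ or $1$. Writing $h$ as the appropriate convex combination of $h_a$ and $h_b$ reduces to two strictly simpler cases, and the induction closes at size-$\sigma n$ indicators.

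The main obstacle is not the finite discrete case above but the extension to infinite (e.g.\ continuous) domains that the rest of the paper ultimately relies on. There the inductive peeling step is unavailable, and one instead applies an integral representation theorem (Choquet, as flagged in the overview) to the compact convex set $\mathcal{P}$, after separately identifying its extreme points with the uniform distributions on measurable sets of measure $\sigma$ times the ambient measure. A minor secondary issue is that $\sigma n$ need not be an integer; this is handled either by allowing subsets of size $\lceil \sigma n \rceil$ (so the statement becomes ``size at least $\sigma n$'' as phrased in the overview) or by a simple rounding argument.
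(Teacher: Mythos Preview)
Your argument is correct. The paper itself does not supply a proof of this lemma: it is stated without proof both in the overview (Section~2.1) and in Appendix~C.3, with only a one-line remark that the infinite-domain case follows from the Choquet integral representation theorem. Your rescaling $h(x)=\sigma n\,D(x)$ identifies $\mathcal{P}$ with the hypersimplex $\{h\in[0,1]^n:\sum_x h(x)=\sigma n\}$ and $\mathcal{P}_0$ with its $0/1$ vectors, and the two-fractional-coordinate edge-walk is the standard way to show those are precisely its vertices; this is a clean and complete proof of the finite statement (assuming $\sigma n\in\mathbb{Z}$, which you correctly flag). Your remarks on the integrality caveat and on Choquet for general domains match what the paper asserts but does not prove, so in effect you are supplying strictly more detail than the paper does.
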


In particular, this implies that for each $ \sigma  $-smooth distribution $\mathcal{D}$, there is a distribution $\mathcal{S}_{\mathcal{D}} $ on subsets of size $\sigma n$ such that sampling from $\mathcal{D} $ can be achieved by first sampling $S \sim \mathcal{S}_{\mathcal{D}} $ and then sampling uniformly from $S$. 

Moving onto adaptive sequences of smooth distributions, recall that corresponding to each $X_1 , \dots , X_i$ is a smooth distribution $ \adist_t \left( X_1, \dots , X_i  \right) $. 
We will use $\mathcal{S}_t \left( X_1 , \dots , X_i \right) $ to denote $\mathcal{S}_{ \adist\left( X_1 , \dots , X_i \right) }$. 
The idea is to use this in conjunction with the coupling from discussed earlier to get a coupling for all $\sigma$-smooth distributions.
That is, in each stage, a set $S_t $ is first sampled from $ \mathcal{S}_t \left( X_1 , \dots , X_i \right)  $ and then $S_t$ is used in the previous mentioned coupling. 
For infinite domains, similar argument can be made using the Choquet integral representation theorem which gives a way to represent smooth distributions as convex combinations of uniform distributions on sets of large measure. 
Putting this together leads to \cref{thm:main_coupling_overview}. 

\begin{framed}
    \begin{itemize}
        \item For $j = 1 \dots t$, 
        \begin{itemize}
            \item Sample $ k = \alpha \sigma^{-1}  $ many samples from the uniform distribution.
            \item  Let $S_j \left( X_1 , \dots , X_{j-1} \right)$ be sampled from $\mathcal{S }_{j} \left( X_1 , \dots , X_{j-1} \right) $. 
            \item If  $Y^{\left( j \right)}_i \notin S_j \left( X_1 , \dots , X_{j-1} \right) $, then $Z^{\left( j \right)}_i = Y^{\left( j \right)}_i$. 
            \item Else, for $i$ such that $Y^{\left( j \right)}_i \in S_j \left( X_1 , \dots , X_{j-1} \right) $, sample $\tilde{W}^{\left( j \right)}_i \sim S_j \left(  X_1 , \dots , X_{j-1} \right)  $ and set $Z^{\left( j \right)}_i =\tilde{W}_i^{\left( j \right)}$. 
            \item Pick $X_j  $ randomly from $ \left\{ \tilde{W}_i^{\left( j \right)}   \right\} $. If there is no $Y^{\left( j \right)}_i \in S_j \left( X_1 , \dots ,X_{j-1} \right) $, then $X_j \sim S_j \left( X_1 , \dots ,X_{j-1} \right)   $. 
        \end{itemize} 
        \item Output $ \left( X_1 ,  Z_{1}^{ \left( 1 \right)} , \dots , Z_{k}^{\left( 1 \right)}   , \dots, X_t , Z_{1}^{ \left( t \right)} , \dots , Z_{k}^{\left( t \right)}  \right)  $. 
    \end{itemize}
\end{framed}

\begin{theorem}[\cref{thm:main_coupling_overview} restated] \label{thm:main_coupling}
    Let $ \adist $ be an adaptive sequence of $\sigma$-smooth distribution on $\X$. 
    Then, there is a coupling $\Pi$ such that $ \left( X_1 ,  Z_{1}^{ \left( 1 \right)} , \dots , Z_{k}^{\left( 1 \right)}   , \dots, X_t , Z_{1}^{ \left( t \right)} , \dots , Z_{k}^{\left( t \right)}  \right) \sim \Pi $ satisfy
    \begin{itemize}
        \item[a.] $X_1 , \dots , X_t$ is distributed according $\adist$. 
        \item[b.] $ Z_i^{\left( j \right)} $ are uniformly and independently distributed on $\X$. 
        \item[c.] $ \{ Z_i \left( j \right) \}_{j \geq t} $ are independent and uniform conditioned on $X_1 , \dots, X_{t-1}$.  
        \item[d.] With probability at least $1 - t \left( 1 - \sigma \right)^{\frac{\alpha}{\sigma}}  $, $ \left\{ X_1 , \dots , X_t\right\} \subseteq \left\{ Z_i^{\left( j \right)} \mid i \in \left[ k \right] , j\in \left[ t \right] \right\} $ . 
    \end{itemize}
\end{theorem}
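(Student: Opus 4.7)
The plan is to follow the two-level construction already outlined: first handle the case where each $\adist_j$ is uniform on an adaptively chosen subset $S_j(X_1,\dots,X_{j-1})$ of size $\sigma n$, and then extend to arbitrary $\sigma$-smooth distributions via the convex-combination representation. I would structure the proof along the four listed properties (a)--(d), proving them in order so that each one is available when needed for the next.

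\textbf{Step 1 (Single-round base case).} I would first verify the four properties for the one-step coupling described in Section~\ref{sec:single_coupling}, as already done in \cref{lem:single_round}. The critical fact to isolate is that the marginal of each $Z_i$ is uniform on $[n]$ \emph{regardless of $S$}: for $\ell \notin S$, $\Pr[Z_i = \ell] = 1/n$ via the $Y_i$-branch, and for $\ell \in S$, $\Pr[Z_i = \ell] = \sigma \cdot \tfrac{1}{\sigma n} = 1/n$ via the $\tilde W_i$-branch. This ``obliviousness to $S$'' is what will make the induction in Step~2 go through.

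\textbf{Step 2 (Adaptive induction for uniform-on-subset case).} I would prove (a)--(d) inductively over $t$. For (a), conditional on $X_1,\dots,X_{j-1}$, the set $S_j(X_1,\dots,X_{j-1})$ is fixed, and the single-round analysis gives $X_j$ uniform on $S_j(\cdot)$, which matches $\adist_j$. For (b) and (c) the key computation is to show $\Pr[Z_{i_m}^{(j_m)} = z \mid \{Z_{i_p}^{(j_p)} = z_p\}_{p<m}] = 1/n$ for any indices ordered by $j_1 \le \dots \le j_m$. The plan is to split on whether $z \in S_{j_m}(X_1,\dots,X_{j_m-1})$ or not, take expectation over $X_1,\dots,X_{j_m-1}$ (which determines $S_{j_m}$), and use the independence of $Y^{(j_m)}_{i_m}$ and $\tilde W^{(j_m)}_{i_m}$ from the past; each branch contributes $\tfrac{1}{n}\Pr[z \in S_{j_m} \mid \text{past}]$ or $\tfrac{1}{n}\Pr[z \notin S_{j_m} \mid \text{past}]$, summing to $1/n$. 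The same calculation, run conditional on $X_1,\dots,X_{t-1}$, yields (c). Finally (d) follows from applying the single-round monotonicity bound $(1-\sigma)^{\alpha/\sigma}$ at each step and a union bound over $t$ steps.

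\textbf{Step 3 (Lifting to arbitrary smooth distributions).} For a general $\sigma$-smooth distribution $\mathcal{D}$, I would invoke the stated decomposition $\mathcal{P} = \mathrm{conv}(\mathcal{P}_0)$, giving a distribution $\mathcal{S}_{\mathcal{D}}$ on subsets of size $\sigma n$ such that sampling $X \sim \mathcal{D}$ is equivalent to sampling $S \sim \mathcal{S}_{\mathcal{D}}$ and then $X$ uniformly from $S$. In the adaptive setting I would extend this to a family $\mathcal{S}_j(X_1,\dots,X_{j-1})$ and augment the coupling by first drawing the set $S_j \sim \mathcal{S}_j(\cdot)$ and then running the single-round coupling of Step~1 on $S_j$. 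Properties (a)--(d) for the general case follow by taking expectations over the set-sampling: (a) and (b) are preserved because the inner marginals do not depend on $S_j$; (c) and (d) are preserved similarly.

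\textbf{Step 4 (Infinite domains).} For $\X$ with finite Lebesgue measure but no discrete structure, I would replace the decomposition lemma by the Choquet representation theorem, which writes each $\sigma$-smooth measure as a mixture of uniform measures on sets of measure $\geq \sigma|\X|$. The coupling construction is otherwise identical: draw $k$ uniform samples $Y_i^{(j)}$ from $\U$, resample into $S_j$ when they fall in $S_j$, and take $X_j$ from the resampled pool. The marginal-independence calculation is the same, with sums replaced by measure-theoretic integrals.

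\textbf{Main obstacle.} The routine but most error-prone step is the conditional-independence calculation in Step~2: one must carefully handle the fact that $S_j$ itself is a random function of $X_1,\dots,X_{j-1}$, while showing that the $Z_i^{(j)}$'s remain uniform and mutually independent even after conditioning on both past $X$'s and past $Z$'s. The resolution is the ``obliviousness to $S$'' observation from Step~1, together with recursively conditioning from the latest index $j_m$ backwards. Everything else---the monotonicity bound and the lift to general smooth distributions---is a direct consequence of the single-round analysis plus standard convex-combination arguments.
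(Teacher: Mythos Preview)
Your proposal is correct and follows essentially the same approach as the paper: the single-round coupling (\cref{lem:single_round}), the inductive adaptive argument with the conditional-probability split on $z \in S_{j_m}$ versus $z \notin S_{j_m}$, the convex-combination lift to general $\sigma$-smooth distributions, and the Choquet extension to infinite domains all match the paper's construction and proof in \cref{sec:main_coupling}. Your identification of the ``obliviousness to $S$'' as the key to the conditional-independence calculation is exactly the crux the paper relies on.
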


       \section{Proofs from \cref{sec:RegretBounds}}
   
   \begin{lemma} \label{lem:Littlestone}
       Let $ \H $ be the class defined on $ \left[ \nicefrac{1}{\sigma} \right] $ as the disjoint union of $ d $ thresholds as in \cref{sec:regretlowerproof}. 
       Then, the Littlestone dimension of $ \H$ is lower bounded by $ \Omega\left( \sqrt{ d \log \left( \nicefrac{1}{d\sigma} \right)  } \right)  $. 
   \end{lemma}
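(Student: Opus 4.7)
The plan is to prove the stronger statement $\ldim(\H)\geq d \log_2(1/(d\sigma))$ by explicitly exhibiting a shattered mistake tree of that depth; the claimed $\Omega(\sqrt{d\log(1/(d\sigma))})$ bound then follows immediately. Set $m=1/(d\sigma)$, so that each block $A_i$ has size $m$, and recall that the class of thresholds on any set of size $m$ admits a shattered mistake tree of depth $\lceil \log_2 m\rceil$ (standard binary-search tree on the sorted block, with left/right children corresponding to ``below threshold / above threshold''). Let $T_i$ denote such a tree with internal nodes labeled by elements of $A_i$, shattered by $\{h_\gamma : \gamma\in A_i\}$.

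The key step is to glue these trees in sequence: start with $T_1$ at the root; at every leaf of $T_1$ paste a fresh copy of $T_2$; at every leaf of the pasted $T_2$'s paste a copy of $T_3$; continue through $T_d$. The resulting tree $T$ is a full binary tree of depth $d\log_2 m = d\log_2(1/(d\sigma))$. To verify that $T$ is shattered by $\H$, fix any root-to-leaf path in $T$, which decomposes uniquely into sub-paths through the embedded copies of $T_1,\dots,T_d$. Each sub-path through $T_i$ constitutes a labeled sequence of points in $A_i$; by assumption on $T_i$, there exists some threshold $\gamma_i\in A_i$ for which $h_{\gamma_i}$ agrees with all these labels on $A_i$. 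Because hypotheses in $\H$ are of the form $h_{\gamma_1,\dots,\gamma_d}(x)=\sum_{i=1}^d \mathbb{I}[x\in A_i]\, h_{\gamma_i}(x)$, which on each $A_i$ behaves independently according to the chosen $\gamma_i$, the composite hypothesis $h_{\gamma_1,\dots,\gamma_d}$ is consistent with the entire path of $T$. Hence $T$ is shattered and $\ldim(\H)\geq d\log_2 m$.

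I do not expect a serious obstacle here; the only subtlety is confirming that the gluing argument is valid, which hinges on the disjointness of the blocks $A_i$ and the fact that $h_{\gamma_1,\dots,\gamma_d}$ on $A_i$ depends only on $\gamma_i$ (so choices made in different sub-trees do not conflict). Once $\ldim(\H)\geq d\log_2(1/(d\sigma))$ is established, the lemma's square-root lower bound follows trivially whenever $d\log_2(1/(d\sigma))\geq 1$, which is guaranteed by the hypothesis $d\sigma\leq 1$ of \cref{thm:regret_lowerbound}.
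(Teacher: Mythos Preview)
Your proposal is correct and follows essentially the same approach as the paper: both construct a shattered mistake tree of depth $d\log_2(1/(d\sigma))$ by taking the depth-$\log_2 m$ binary-search mistake tree for thresholds on each block $A_i$ and gluing them in sequence (copies of $T_{i+1}$ at every leaf of $T_i$), then use the disjoint-union structure of $\H$ to realize any root-to-leaf path by the corresponding tuple $h_{\gamma_1,\dots,\gamma_d}$. You also correctly note that this yields the stronger bound $\ldim(\H)\ge d\log_2(1/(d\sigma))$, which is exactly what the paper proves and uses in \cref{sec:regretlowerproof}; the $\sqrt{\cdot}$ in the lemma statement is superfluous.
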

   \begin{proof}
    In order to prove this associate to each string $ \left\{ 0, 1 \right\}^{d \log\left( 1 / \sigma d \right)} $ a function in $ \mathcal{H}$ as follows. 
    Partition the string into blocks of size $ \frac{1}{\sigma d}$. 
    We think of each of these blocks as forming a binary search tree for the subset $A_i$ by associating $ 1 $ to the right child of a node and $0$ to the left child. 
    Thus, every path on this tree corresponds to a threshold by associating it with the threshold consistent with the labels along the path.  
    Doing this association separately for each block, we can associate the set of strings $ \left\{ 0, 1 \right\}^{d \log\left( 1 / \sigma d \right)} $ with a binary search tree with the leaves labeled by elements in $ \H$. 
    Also, note that this forms a fully shattered tree as required by the definition of the Littlestone dimension. 
    Thus, the Littlestone dimension of $ \H$ is $ d \log \left( \nicefrac{1}{\sigma d} \right) $. 
   \end{proof}

        \section{Proofs from \cref{sec:Discrepancy} }  \label{sec:miss_proof_disc}

    \begin{lemma} [\cite{Bansal_Discrepancy}]
        \begin{equation*}
            \Ex_{X_t} \left[ Q \left( X_t \right)  \right] \leq c \lambda^2  \Ex_{ W \sim p }  \abs{\sinh \left( \lambda d_{t - 1}^{\top }  W   \right)} 
        \end{equation*}
        and 
        \begin{equation*}
            \Ex_{X_t} \left[ Q_{*} \left( X_t \right)  \right] \leq \frac{c \lambda^2}{n }
        \end{equation*}
    \end{lemma}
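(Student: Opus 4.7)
The plan is a two-line computation per bound: exchange the order of expectations by Tonelli (legal since both integrands are nonnegative) and then invoke isotropy. Both $Q$ and $Q_*$ are linear in $X_tX_t^\top$, with the remaining $W$-dependent factor either trivial (for $Q_*$) or independent of $X_t$ (for $Q$), so the exchange is routine.

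For $\Ex_{X_t}[Q(X_t)]$, I would pull $\Ex_{X_t}$ past the $W$-only factor $\abs{\sinh(\lambda d_{t-1}^\top W)}$ to get
\begin{equation*}
\Ex_{X_t}[Q(X_t)] \;=\; \lambda^2\, \Ex_{W \sim p}\!\left[\,\abs{\sinh(\lambda d_{t-1}^\top W)}\cdot W^\top \Ex_{X_t}[X_t X_t^\top]\, W\,\right].
\end{equation*}
Isotropy of $X_t$ collapses the inner quadratic form to $c\|W\|^2$, and since $p$ is supported on the unit ball (both the uniform-on-$\X$ component and the $\pm e_i$ component have $\|W\|\leq 1$), $\|W\|^2 \leq 1$. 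This yields the first bound directly.

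For $\Ex_{X_t}[Q_*(X_t)]$, the same exchange gives $\lambda^2\, \Ex_{W \sim p}[c\|W\|^2]$. To extract the extra factor of $1/n$, I would now invoke a \emph{second} isotropy — that of the mixing distribution $p$ itself. Both the sphere-uniform distribution and the uniform distribution on signed basis vectors satisfy $\Ex[WW^\top] = \tfrac{1}{n}I$ (by rotational and coordinate symmetry, respectively), so the mixture does too. Writing the expectation as a trace,
\begin{equation*}
\Ex_{W \sim p}[W^\top X_tX_t^\top W] \;=\; \operatorname{tr}\!\left(X_tX_t^\top \cdot \tfrac{1}{n}I\right) \;=\; \tfrac{\|X_t\|^2}{n} \;\leq\; \tfrac{1}{n},
\end{equation*}
and combining with the isotropy scaling $c$ of $X_t$ produces the claimed $c\lambda^2/n$ scaling.

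There is no real obstacle here; both bounds are direct applications of isotropy and Tonelli (and are in fact taken from the argument of \cite{Bansal_Discrepancy}). The only care is to invoke the \emph{right} isotropy — that of $X_t$ for the factor of $c$, that of $p$ for the factor of $1/n$ — in each step, and to note that the mixture $p$ inherits isotropy from its two rotationally- and sign-symmetric components so that the trace identity applies.
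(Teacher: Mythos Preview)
Your treatment of $Q$ is correct and identical to the paper's: exchange expectations, replace $\Ex_{X_t}[X_tX_t^\top]$ by $cI$, and bound $\|W\|^2\le 1$.

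For $Q_*$ there is a genuine gap: you are double-counting isotropy. After the exchange you have $\Ex_{X_t}[Q_*(X_t)]=c\lambda^2\,\Ex_{W\sim p}[\|W\|^2]$, and invoking isotropy of $p$ does \emph{not} buy a further $1/n$ here --- isotropy says $\Ex_W[WW^\top]\preceq\tfrac{1}{n}I$, but the quantity you now need is its \emph{trace}, $\Ex_W[\|W\|^2]=\operatorname{tr}\bigl(\Ex_W[WW^\top]\bigr)\approx 1$, not $1/n$. Your displayed trace identity is really the \emph{other} route (apply isotropy of $p$ first to get $\tfrac{1}{n}\|X_t\|^2$), but if you then take $\Ex_{X_t}$ you obtain $\tfrac{1}{n}\operatorname{tr}(cI)=c$, again with no extra $1/n$; and if you instead bound $\|X_t\|^2\le 1$ pointwise, there is no $X_t$-dependence left on which to ``combine with the isotropy scaling $c$ of $X_t$.'' Either ordering yields $c\lambda^2$; the two factors $c$ and $1/n$ do not multiply. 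The paper's own one-line proof is equally silent on where the extra $1/n$ would come from --- what the computation actually supports is $\Ex_{X_t}[Q_*(X_t)]\le c\lambda^2$, which (since $c\le 1/n$) is at most $\lambda^2/n$ and is all that the downstream potential argument in \cref{lem:potential} needs.
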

    \begin{proof}
        \begin{align*}
            \Ex_{X_t} \left[ Q \left( X_t \right)   \right] &  =  \Ex_{X_t} \left[   \lambda^2  \Ex_{ W \sim p }  \abs{\sinh \left( \lambda d_{t - 1}^{\top }  W   \right)} W^{\top} X_tX_t^{\top}  W \right] \\
            & =    \lambda^2  \Ex_{ W \sim p }  \abs{\sinh \left( \lambda d_{t - 1}^{\top }  W   \right)} W^{\top} \Ex_{X_t} \left[  X_tX_t^{\top}  \right]  W \\  
            & =  c \lambda^2     \Ex_{ W \sim p }  \abs{\sinh \left( \lambda d_{t - 1}^{\top }  W   \right)} 
        \end{align*}
        Similarly, 
        \begin{align*}
            \Ex_{X_t }  \left[ Q_{*} \left( X_t \right)   \right] &=  \Ex_{X_t } \left[ \lambda^2  \Ex_{ W \sim p }   W_j^{\top}  X_t X_t^{\top} W \right] \\ 
            & \leq \frac{c}{n}\lambda^2
        \end{align*}
        as required. 
    \end{proof}

    \bibliographystyle{alpha}
    \bibliography{Ref.bib}

\end{document}